\patchcmd{\algorithmic}{\addtolength{\ALC@tlm}{\leftmargin} }{\addtolength{\ALC@tlm}{\leftmargin}}{}{}
\newtheorem{theorem}{Theorem}
\newtheorem{lemma}{Lemma}
\theoremstyle{definition}
\newtheorem{claim}{Claim}
\newcommand{\mA}{\mathcal{A}}
\newcommand{\mB}{\mathcal{B}}
\newcommand{\mC}{\mathcal{C}}
\newcommand{\mS}{\mathcal{S}}
\newcommand{\mM}{\mathcal{M}}
\newcommand{\mX}{\mathcal{X}}
\newcommand{\mR}{\mathcal{R}}
\newcommand{\mL}{\mathcal{L}}
\newcommand{\mN}{\mathcal{N}}
\newcommand{\mU}{\mathcal{U}}
\newcommand{\mE}{\mathcal{E}}
\renewcommand{\tilde}{\widetilde}
\newcommand{\argmin}{\operatornamewithlimits{argmin}}
\newcommand{\argmaxI}{\mathop{\mathrm{argmax}}\nolimits} 
\DeclareMathOperator{\tr}{Tr}
\DeclareMathOperator{\diag}{diag}
\renewcommand*\env@matrix[1][*\c@MaxMatrixCols c]{%
	\hskip -\arraycolsep
	\let\@ifnextchar\new@ifnextchar
	\array{#1}}
\newcommand{\vertiii}[1]{{\left\vert\kern-0.25ex\left\vert\kern-0.25ex\left\vert #1 
		\right\vert\kern-0.25ex\right\vert\kern-0.25ex\right\vert}}
\newcommand*\conj[1]{\bar{#1}}
\newcolumntype{L}{>{\arraybackslash}m{5cm}}
\icmltitlerunning{Solving Partial Assignment Problems using Random Clique Complexes}
\begin{document}

\twocolumn[
\icmltitle{Solving Partial Assignment Problems using Random Clique Complexes}



\icmlsetsymbol{equal}{*}

\begin{icmlauthorlist}
\icmlauthor{Charu Sharma}{iith}
\icmlauthor{Deepak Nathani}{iith}
\icmlauthor{Manohar Kaul}{iith}

\end{icmlauthorlist}
\icmlaffiliation{iith}{Department of Computer Science \& Engineering, Indian Institute of Technology Hyderabad, Hyderabad, India}

\icmlcorrespondingauthor{Charu Sharma}{charusharma1991@gmail.com}
\icmlcorrespondingauthor{Deepak Nathani}{deepakn1019@gmail.com}
\icmlcorrespondingauthor{Manohar Kaul}{mkaul@iith.ac.in}

\icmlkeywords{Machine Learning, ICML}

\vskip 0.3in
]



\printAffiliationsAndNotice{}  

\begin{abstract}
We present an alternate formulation of the \emph{partial assignment problem} as matching \emph{random clique complexes}, that are higher-order analogues of random graphs, designed to provide a set of invariants that better detect higher-order structure.
The proposed method creates random clique adjacency matrices for each $k$-skeleton of the random clique complexes and matches them, taking into account each point as the affine combination of its geometric neighborhood. We justify our solution theoretically, by analyzing the runtime and storage complexity of our algorithm along with the asymptotic behavior of the quadratic assignment problem (QAP) that is associated with the underlying \emph{random clique adjacency matrices}. 
Experiments on both synthetic and real-world datasets, containing severe occlusions and distortions, provide insight into the accuracy, efficiency, and robustness of our approach. We outperform diverse matching algorithms by a significant margin.
\end{abstract}

\section{Introduction}
The \emph{assignment problem} finds an assignment, or matching, between two finite sets
$U$ and $V$, each of cardinality $n$, such that the total cost of all matched pairs is minimized. The assignment problem can also be generalized to finding matchings between more than two sets. 
This is a fundamental problem in computer science and has been motivated by a wide gamut of research areas spanning diverse areas such as structural biology~\cite{singer2011three}, protein structure comparisons in bioinformatics~\cite{zaslavskiy2009global}, and computer vision~\cite{conte2004thirty}. Computer vision especially boasts a broad range of applications that  include object matching, image registration~\cite{shen2002hammer}, stereo matching~\cite{goesele2007multi}, shape matching~\cite{petterson2009exponential,berg2005shape}, structure from motion (SfM)~\cite{szeliski2010computer}, and object detection~\cite{jiang2011linear}, to name a few.

Various assignment approaches can 
broadly be classified as those that find a bijective assignment in the form of a permutation matrix by posing the problem as a \emph{linear assignment problem (LP)} versus ones that solve a \emph{quadratic assignment problem (QAP)} via \emph{graph matching}, where each graph's nodes represent the objects and the edges encode their corresponding distances; the goal of QAP then is to find node-wise correspondences between the graphs so that the overall discrepancy between their corresponding edge-wise counterparts is minimized and the overall \emph{relational structure} is best preserved.

\emph{Partial assignment} implies that only subsets of $U$ and $V$ can actually be assigned to each other successfully. 
This phenomenon is of particular interest to applications where either objects are absent due to incomplete observations, undergo deformations, and/or the objects in question cannot clearly be disambiguated because the objects in question along with their related objects are embedded in clutter. This variant of the assignment problem is widely accepted as a formidable challenge.

\begin{figure}[t!]
	\centering
	\includegraphics[width=\columnwidth]{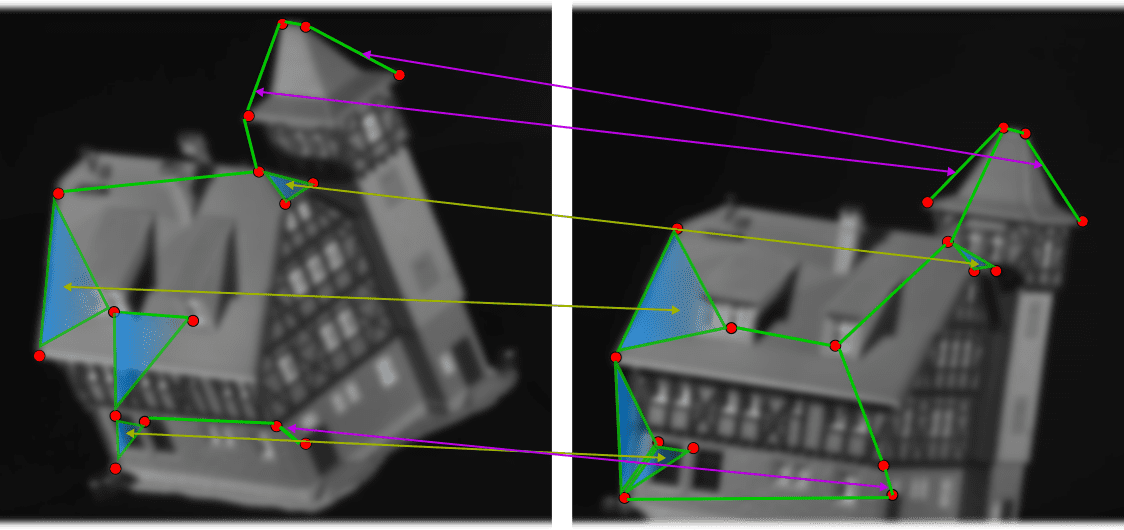}
	\caption{Matching cliques of two houses with $1$-cliques (red), $2$-cliques (green), $3$-cliques (blue). Violet and yellow lines show the matchings of $d$-cliques where $d=2,3$ respectively.}
	\label{fig:drawing}
\end{figure}

Although graph matching methods were found to be instrumental, they too perform poorly when faced with non-similar geometric transformations or transformations that produce degenerate triangulations. This is attributed to assigning weights to only node and edge assignments, while ignoring the interplay of higher-order connections/relations. For example, using triplet weights can alleviate the above mentioned problem to a very large extent by defining a measure invariant to scale and other transformations~\cite{Chertok2010}.

Motivated by the aforementioned observations and inspired by Kahle~\cite{kahle2006}'s work on combinatorial topological models like the \emph{random clique complex}, we focus our attention 
to matching higher-order components between two sets of points in the setting of some points \emph{missing completely at random}. We pose our assignment problem as finding a matching between two sets of points, each represented as a random clique complex, which is a higher-order analogue of random graphs. Figure~\ref{fig:drawing} illustrates such a matching of cliques of corresponding dimensionality, between two different scenes (taken from different camera angles) of the same house. 
Given an Erdős-Rényi (ER) graph, its \emph{clique complex} is the simplicial complex with all complete subgraphs (i.e., cliques) as its \emph{faces}. The Erdős-Rényi graph forms the $1$-skeleton of the random clique complex, where the cliques have at most a dimension of $2$, i.e., edges in the graph.
The clique topology of a random adjacency matrix is analogous to its eigenvalue spectrum, as it provides a set of invariants that help detect structure~\cite{giusti2015}. This probabilistic and combinatorial framework of random clique complexes allows us to further study the assignment problem under various assumptions of the underlying distribution of the matrix entry distributions, 
its robustness to missing values, and its asymptotic behavior for large-scale cases.

\textbf{Contributions:} We present the following contributions.
\begin{enumerate}
	\item To the best of our knowledge, our proposed approach is a first attempt to formulate 
	\emph{higher-order matching} between two sets of points, given partial or incomplete information, as a matching between two \emph{random clique complexes}. We also propose an efficient matching algorithm and study both its time and storage complexity.
	
	\item (i) We provide new bounds on the concentration inequality of eigenvalues of the QAP trace formulation for random symmetric matrices, (ii) we give tighter concentration inequality bounds on the largest eigenvalue for the Lawler QAP formulation on random matrices, in the context of \emph{affinity matrices} that are used by some earlier works.  Furthermore, we theoretically analyze and discuss the robustness of affinity-matrix based schemes to missing points, and
	(iii) we perform asymptotic analysis on the worst to best case ratio of a QAP solution for our higher-dimensional clique adjacency matrices in the \emph{clique percolation regime}~\cite{Bollobas2009}, 
	where the entries follow a \emph{Poisson} distribution.
	  
	\item Finally, we present a comprehensive empirical study that compares our method's matching accuracy to that of a diverse set of matching approaches~\cite{ZhouD16,zhou2013deformable,cho2010reweighted,feizi2016spectral,Leordeanu2005,cour2007balanced,pachauri2013solving,gold1996graduated,Kuhn1955,leordeanu2009integer,zass2008probabilistic,li2013object,duchenne2011tensor}. 
	We conducted our experiments on both synthetic and well-known hard real-world datasets that span across affine/non-affine transformations, severe occlusions, and clutter. Our study reveals much better accuracy for the popular datasets against several of the state-of-the-art matching methods.
\end{enumerate}

%
%
\section{Matching Random Clique Complexes}
\label{sec:match_rcc}
We consider the problem of capturing higher-order feature groups among landmark points in an image by representing them as a \emph{random clique complex} (RCC) and then using these RCCs to match two sets of groupings from two different images. We begin this section by describing the construction of a random clique complex, followed by our proposed method of matching two RCCs, and we finally analyze the runtime and storage complexity of our algorithm. 

\subsection{Structure of a Random Clique Complex}
\label{subsec:rcc}
We begin with general definitions pertaining to the structure of simplicial complexes and then accordingly adapt these definitions to our domain of random graphs to build random clique complexes.

Let $G(n,p)$ be an Erdős-Rényi graph with a set of $n$ vertices denoted by $V$, whose edges $\{v,v'\} \in {V \choose 2}$, are i.i.d Bernoulli($p$) distributed. Recall, that a $k$-clique in $G(n,p)$
is a \emph{complete subgraph} that comprises of $k$ vertices and ${k \choose 2}$ edges.
Here onwards, for ease of notation, we will denote $G(n,p)$ as $G$. Given any affinely independent set $V=\{v_i\}_{i=0}^k$ of $(k+1)$ points in $\mathds{R}^n$, the $k$-simplex $\sigma^{(k)}$ is the \emph{convex hull} of $V$,
i.e., it is the set of all points of the form $w_0 v_0 + \dots + w_k v_k$, where 
$\sum_{i=0}^{k} w_i = 1$ and $w_i \geq 0$ for all $i$. 
If we imagine the vertices of $G$ embedded generically in $\mathds{R}^n$, then each $(k+1)$-clique consisting of $k+1$ vertices is represented by a $k$-dimensional simplex $\sigma^{(k)}$ in our random clique complex. For example, a $2$-clique (edge) and a $3$-clique (triangle) in $G$ is represented as $\sigma^{(1)}$ and $\sigma^{(2)}$, respectively.

Given $0\leq i \leq k$, the $i$-th face $f_i$ of $\sigma^{(k)}$ is the subspace of points that satisfy $w_i=0$; it is the $(k-1)$-simplex $\sigma^{(k-1)}$ whose vertices are all those of $\sigma^{(k)}$, except the $i$-th vertex. In other words, when 
$\sigma^{(k)}$ is a clique of $G$, then all its subsets are also cliques and hence 
considered faces of $\sigma^{(k)}$. For example, a $3$-clique (triangle) has three $2$-cliques (edges) in it.

With the aforementioned definitions in mind, we define our random clique complex 
$\mX(G)$ as the set of all cliques in $G$ such that
$\mX(G) = \{ \sigma \in [n] \mid \sigma \text{ is a clique of }G  \}$. We denote a set of $(k+1)$-cliques as $\mX_k(G)$. Additionally, $\mX(G)$ also satisfies the following conditions of a simplicial complex: (i) Any face in $\mX(G)$ is also a simplex in $\mX(G)$ and (ii) the intersection of any two simplexes $\sigma_i, \sigma_j$ is a 
face (lower dimensional clique) of both $\sigma_i$ and $\sigma_j$.

The faces of $\sigma^{(k)}$ are \emph{copies} of $\sigma^{(j)}$ for $j<k$, which are glued together inductively. The $k$-skeleton of $\mX(G)$, for $k \in \mathds{N}$, is defined as the following quotient space
\begin{align*}
\mX^{(k)}(G) :=  \left( \mX^{(k-1)}(G) \cup \coprod_{\sigma: \textit{dim } \sigma=k}  \sigma^{(k)}  \right) \Bigg/ \sim
\end{align*}

where $\sim$ is the equivalence relation that identifies faces of $\sigma^{(k)}$
to the corresponding faces of $\sigma \in \mX^{(j)}(G)$ where $j<k$. 
Finally, $\mX(G)=\cup_{k=0}^\infty \mX^{(k)}(G)$.

\textbf{$k$-skeleton as adjacency matrix}: Given a random graph $G$ and its $k$-skeleton 
$\mX^{(k)}(G)$ that contains all its $(k+1)$-cliques, we follow the idea from Bollob{\'{a}}s et. al.~\cite{Bollobas2009}, to 
represent $\mX^{(k)}(G)$ as an adjacency matrix $G^{(k,l)}$ whose vertex set is the set of of all 
$(k+1)$- cliques in $G$ and in which two vertices (i.e., $(k+1)$-cliques) are \emph{adjacent} when they share a common face that has a minimum of $l$ vertices, where $k \geq 1$ and $1 \leq l \leq k$. Such an adjacency matrix is built for each $k$-skeleton and therefore $\mX(G)$ is expressed as a set of matrices $\{G^{(k,l)}  \}_{k=0}^h$, where $(k+1)$ is the dimension of the cliques.

\begin{algorithm}[tbp]
	\caption{Matching Random Clique Complexes}
	\label{alg:matching}
	\begin{flushleft}
		\textbf{Input:} $\mX(G) = \{  G^{(k,l)}   \}_{k=0}^h$ and 
		$\mX(G') = \{ G'^{(k,l)}   \}_{k=0}^h$
	\end{flushleft}
	\begin{algorithmic}[1]
		\FOR{ $k = h \dots 0$}
		\STATE Let $M, M'$ be the total number of $(k+1)$-cliques in $G^{(k,l)}$ 
		and $G'^{(k,l)}$, respectively
		\STATE $\mL := \{   c_i^{(k)}  \}_{i=0}^{M-1} $  \COMMENT {list of barycenters}
		\FOR{ $i = 0 \dots M-1$} 
		\STATE $\mN_i := \mN_i \cup \left\{  g^{(k,l)}_{(x,:)} \mid x=i,   g^{(k,l)}_{(x,y)} \neq 0   \right\}  $
		\STATE $\mN := \mN \cup \{  \mN_i \} $ \COMMENT {clique neighborhoods}
		\ENDFOR
		\FOR{ $i = 0 \dots M-1$} 
		\STATE $\alpha_i := [ \alpha_1, \dots, \alpha_{M-1}]^T$
		\STATE $\alpha := \alpha \cup \{ \alpha_i \}$  \COMMENT{affine weight vectors}
		\ENDFOR
		\STATE Repeat steps $3$--$11$ on $G'^{(k,l)}$ for $\mL',\mN'$ and $\alpha'$.
		\STATE Build cost matrix $C^{(k)}$ from weights vectors $\alpha, \alpha'$
		\STATE $X^{*}_k := $ \textbf{Kuhn-Munkres }($G^{(k,l)},G'^{(k,l)},C^{(k)} $)	
		\ENDFOR
	\end{algorithmic}
	\begin{flushleft}
		\textbf{Return:}  $ \{ X^{*}_0, \dots, X^{*}_h \}$ \text{ \# set of permutation matrices}
	\end{flushleft}
\end{algorithm}
	
\subsection{Problem Setup}
\label{subsec:setup}
The problem of matching random clique complexes each of dimension $h$, is the estimation of a set of 
optimal bijective maps of the form $\mM_i: \mX^{(i)}(G) \rightarrow \mX^{(i)}(G')$, for all $i \leq h$, subject to assignment constraints.
This can be formulated as a \emph{constrained quadratic assignment problem}, which can later be relaxed to a linear programming optimization problem.

Given two $h$-dimensional random clique complexes $\mX(G) = \{  G^{(k,l)}   \}_{k=0}^h$ and 
$\mX(G') = \{ G'^{(k,l)}   \}_{k=0}^h$, let $X=\{  X_0, \dots, X_h   \} \in \Pi$ be a set of permutation matrices
such that $X_k$ encodes assignments/matchings from $G^{(k,l)}$ to $G'^{(k,l)}$. 
The combinatorial matching requires the optimal set of permutation matrices that best \emph{align} 
$\mX(G)$ and $\mX(G')$. More formally, this can be expressed as the following constrained optimization problem
\begin{equation}
\label{eq:opt}
\begin{aligned}
& \argmin_{X_0,\dots,X_h}
& &   \sum_{k=0}^{h} \lVert  G^{(k,l)} X_k - X_k G'^{(k,l)}    \rVert_F^2 \\
& \text{subject to}
& & \forall k \leq h, \mathds{1}^T X_k = \mathds{1},  X_k^T \mathds{1} = \mathds{1}
\end{aligned}
\end{equation}

\subsection{Our Algorithm}
\label{subsec:algo}

At a high level, our goal is to minimize $\lVert \mX(G) - \mX(G') \rVert_{\mC}$, where $\mC$ is a combinatorial distance between two random clique complexes. Traditional metrics like Hausdorff distance are not suitable here because random clique complexes are combinatorial topological spaces.
Recall that $\mX(G)$ is comprised of a family of $k$-skeletons $\{  \mX^{(k)}(G)  \}_{k=0}^h$, where each $k$-skeleton contains cliques whose dimension is at most $k+1$ and $\mX^{(k)}(G)$ has a maximum dimension $h$.
The solution of the optimization problem outlined in Equation~(\ref{eq:opt}) aims to find a set of permutation matrices $\{X_1, \dots, X_h \}$ that minimizes the overall number of \emph{misalignments} between equi-dimensional faces of $\mX(G)$ and $\mX(G')$, i.e., cliques belonging to the corresponding $k$-skeletons, and thus producing the optimal least cost assignment between $\mX(G)$ and $\mX(G')$. 

Algorithm~\ref{alg:matching} presents our method to solve the combinatorial optimization problem (Equation~(\ref{eq:opt})). In decreasing order of clique dimensionality, for a fixed dimension $k$ and given the adjacency matrices $G^{(k,l)}$ and $G'^{(k,l)}$ for $k$-skeletons $\mX^{(k)}(G)$ and $\mX^{(k)}(G')$, respectively. In every iteration, our objective is to solve $\argmin_{X_k} \lVert G^{(k,l)}X_k - X_k G'^{(k,l)}  \rVert_F^2$ to find the optimal permutation $X^{*}_k$.
We assume the barycenters of every clique is pre-computed (Step~$3$). 
Next, the \emph{neighborhood} $\mN_i$ of the $i$-th clique is computed as the set of entries with $1$s in the $i$-th row of $G^{(k,l)}$  (Step~$5$). We denote the collection of every clique's neighborhood as $\mN$ (Step~$6$). 
An important objective of our method is to capture the \emph{geometric properties} of the neighborhood of every clique. We achieve this by characterizing the $i$-th clique's barycenter $c_i^{(k)}$ as an \emph{affine combination} of the barycenters (in all dimensions) associated with the cliques in its corresponding neighborhood $\mN_i$.
Given an arbitrary clique's barycenter $c_i^{(k)}$, let $\{ x_1^{(k)}  , \dots,  x_n^{(k)} \}$ denote the barycenters of its $n$ adjacent cliques. Then, $c_i^{(k)}$ expressed as $\sum_{i=1}^{n} \alpha_i x_i^{(k)}$ is an affine combination of the $x_i^{(k)}$s, if $\sum_{i=0}^{n} \alpha_i = 1$, i.e., the weights $\alpha_i$ sum to $1$.
Among all possible affine representations of $c_i^{(k)}$ we chose to use \emph{least squares} to guarantee minimal error under L$2$-norm, and furthermore it assigns non-zero weights to each of its adjacent clique barycenters, thereby capturing the local geometric properties in its neighborhood. 
The weight vector $\alpha_i$ is then calculated for each clique  (Step~$9$) and $\alpha$ denotes a collection of such weight vectors (Step~$10$). Next, a cost matrix is built by computing the L$2$-norm distance between weight vectors $\alpha$ and $\alpha'$ (Step~$13$). Finally, the \emph{Kuhn-Munkres}~\cite{Kuhn1955} algorithm is invoked with both the adjacency matrices and the cost matrix, which arrives at the optimal assignment (Step~$14$ ). At the end of all iterations, our method returns a set of optimal assignments for matches between each $k$-skeleton for every dimension below $h$ and the algorithm terminates. We refer the reader to our supplementary section for a working example.

\subsection{Complexity Analysis}
\label{subsec:complexity}

To begin our analysis, we must first ascertain the dimensionality of $G^{(k,l)}$, which is governed by the total number of $(k+1)$-cliques that exist in the underlying random graph $G$. It is important to note that there doesn't exist any closed form solution to counting the number of cliques of a given dimension in $G$.

We consider the distribution of a random variable $X_n(k)$ counting the number of $(k+1)$-cliques in a realization of $G$. We show in Appendix A of our supplementary material that this count is upper bounded by 
$(en/k)^k$, where $e$ is \emph{Euler's number}. 
This can be expressed in asymptotic notation as $O(n^k)$.
As dimensionality increases, there occurs an explosion in the number of cliques.
Fortunately, $G^{(k,l)}$ is a sparse matrix and its \emph{effective} dimensionality measured by the number of non-zero rows, i.e., the number of cliques with non-empty neighborhoods, is of order $O(nnz(G^{(k,l)}))$. Therefore, we set out to count the number of non-zero entries in $G^{(k,l)}$.

We use a seminal result by Bollob{\'{a}}s~\cite{Bollobas2009}, where they identify a threshold probability for \emph{percolation} of cliques in $G$ for all fixed $k$ and $l$, which is given by 
$p = \Theta \left(n ^{\frac{-2}{k+l-1}} \right)$. Moreover, they proved that for $p$ around this threshold, the number of cliques asymptotically converge to a \emph{Poisson} distribution. Exceeding this threshold results in formation of \emph{giant connected clique clusters}, which  causes an explosion in the number of possible cliques. 

Recall from our definition of $G^{(k,l)}$, that two cliques are adjacent if they share at least $l$ vertices. In order to analyze this further, we imagine an entry in $G^{(k,l)}$ occurs when we can migrate a $(k+1)$-clique from its original position to an adjacent clique by relocating exactly $(k+1-l)$ vertices and leaving the remaining $l$ vertices intact. The expected number of such relocations is given by $\left( {k+1 \choose l} -1 \right)  {n \choose k+1-l}  p^{  \left(  {k+1 \choose 2} - {l \choose 2}  \right)}$,
where the first term denotes the number of possible vertices in a $(k+1)$-clique 
that can be chosen for relocation, the second term counts the number of new adjacent positions a clique can relocate to, and the final term decides the probability of relocations that are correct and acceptable.
In our case, we define cliques to be adjacent to one another when they share at least $l=k$ vertices. This is done in order to keep the number of adjacent cliques to a manageable size during experiments.
Setting $l=k$, gives $kn p^k$ expected relocations, which in turn estimates $nnz(G^{(k,l)})$.

Note that for every iteration in Algorithm~\ref{alg:matching}, the dominating cost is that of running the Kuhn-Munkres matching algorithm in Step~$14$, which has a \emph{cubic} cost in $nnz(G^{(k,l)})$.
Let $\mC_{max}$ denote an upper bound on all the number of non-zero entries in $ \{  G^{(k,l)}  \}_{i=0}^h $.
Then, every iteration has a runtime $O(\mC_{max}^3)$ and therefore after $h$ iterations the final cost is $O(h\mC_{max}^3)$. Observe that as the dimensionality of the cliques increases in every iteration, 
$p^k$ decays very sharply and hence drastically reduces $nnz(G^{(k,l)})$, which in turn reduces the overall matching cost. Finally, the storage complexity can simply be given as $O(h\mC_{max})$.

\section{Theoretical Analysis of QAP}
\label{sec:theory}
In this section, we present three related results in the context of matching random matrices, namely: (i) concentration inequality of eigenvalue bounds on the \emph{QAP trace formulation} for random symmetric matrices, (ii) tighter concentration inequality of eigenvalue bounds on the \emph{Lawler QAP formulation} on random symmetric matrices in the context of works that use \emph{affinity matrices}, and (iii) provide an asymptotic analysis on the worst to best case ratio of a QAP for higher-dimensional clique adjacency matrices. For ease of notation, we will refer to the random clique adjacency matrices simply as $A$ and $B$.

\subsection{Eigenvalue Bounds of Trace QAP Formulation on Random Matrices}

Let $A=(a_{vv'})$, $B=(b_{vv'}) \in \mathds{R}^{n \times n}$ be random real-symmetric matrices. Let $X=(x_{ij}) \in \mathds{R}^{n \times n}$ be a \emph{permutation matrix}. 
Then, the trace formulation of a QAP is given by
\[
\text{minimize } \tr(AXBX^T) \text{ s.t. } X \in \Pi_X
\]
where $\Pi_X$ is the set of permutation matrices.

Let 
$\lambda_1(A) \leq \lambda_2(A) \leq \dots \leq \lambda_n(A)$ and 
$\lambda_1(B) \geq \lambda_2(B) \geq \dots \geq \lambda_n(B)$\footnote{The two sets of eigenvalues differ in ordering.}
be the eigenvalues of $A$ and $B$, respectively. 
Let the corresponding eigen-decompositions of matrices $A$ and $B$, be given by 
$A=Q_A \Lambda_A Q_A^T$ and $B=Q_B \Lambda_B Q_B^T$, where 
$\Lambda_A = \diag(\lambda_1(A),\dots,\lambda_n(A))$ and 
$\Lambda_B = \diag(\lambda_1(B),\dots,\lambda_n(B))$ with their corresponding 
orthogonal eigenvector matrices $Q_A$ and $Q_B$.
Finke et. al.~\cite{martello1987} gave the following eigenvalue bounds.
	\begin{theorem}
	\label{thm:1}
	Let $A$ and $B$ be symmetric matrices. Then for all $X \in X_\Pi$, 
	\begin{enumerate}
		\item 	$\tr(AXBX^T) = \lambda(A)^T Q^{(X)} \lambda(B)$, \\
		where $Q^{(X)} = \langle Q_A^{(i)} , X Q_B^{(j)} \rangle^2 $ with 
		vectors of eigenvalues given by $\lambda(A) = (\lambda_i(A))$ and $\lambda(B) = (\lambda_i(B))$. $Q_A^{(i)}$ and $Q_B^{(i)}$ denote the $i$-th eigenvectors of
		$A$ and $B$, respectively;
		\item 
		$ \mL \leq \tr(AXBX^T) \leq \mU$, where \\
		$\mL=\sum_{i=1}^{n} \lambda_i(A) \lambda_i(B)$, and\\
		$\mU=\sum_{i=1}^{n} \lambda_{n-i+1}(A) \lambda_i(B)$
	\end{enumerate}
\end{theorem}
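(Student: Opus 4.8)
The plan is to reduce the trace objective to a linear functional on the Birkhoff polytope and then invoke the rearrangement inequality.

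For Part~1, I would substitute the eigendecompositions $A=Q_A\Lambda_A Q_A^T$ and $B=Q_B\Lambda_B Q_B^T$ into $\tr(AXBX^T)$ and use cyclic invariance of the trace together with the orthogonality of $Q_A$, $Q_B$, and of the permutation matrix $X$. Setting $S:=Q_A^T X Q_B$, which is orthogonal as a product of orthogonal matrices, one obtains $\tr(AXBX^T)=\tr(\Lambda_A S\Lambda_B S^T)=\sum_{i,j}\lambda_i(A)\lambda_j(B)S_{ij}^2$. Since $S_{ij}=\langle Q_A^{(i)},X Q_B^{(j)}\rangle$, the matrix with entries $S_{ij}^2$ is precisely $Q^{(X)}$, which gives $\tr(AXBX^T)=\lambda(A)^T Q^{(X)}\lambda(B)$.

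For Part~2, the key observation is that $Q^{(X)}$ is doubly stochastic: because $S$ is orthogonal, each of its rows and columns has unit Euclidean norm, so the rows and columns of $(S_{ij}^2)$ each sum to $1$ and all entries are nonnegative. Hence $\lambda(A)^T Q^{(X)}\lambda(B)$ is the value of the linear functional $D\mapsto\lambda(A)^T D\lambda(B)$ at a point of the Birkhoff polytope. By Birkhoff--von Neumann the extreme points of this polytope are exactly the permutation matrices, and a linear functional on a polytope attains its extrema at vertices, so the minimum and maximum over all doubly stochastic $D$ equal $\min_\pi\sum_i\lambda_{\pi(i)}(A)\lambda_i(B)$ and $\max_\pi\sum_i\lambda_{\pi(i)}(A)\lambda_i(B)$ over permutations $\pi$ of $\{1,\dots,n\}$. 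Applying the Hardy--Littlewood--P\'olya rearrangement inequality with $\lambda_1(A)\le\dots\le\lambda_n(A)$ and $\lambda_1(B)\ge\dots\ge\lambda_n(B)$, the sum is minimized by the identity permutation (the sequences then being in opposite order), giving $\mL=\sum_i\lambda_i(A)\lambda_i(B)$, and maximized by the order-reversing permutation $\pi(i)=n-i+1$ (both sequences then sorted the same way), giving $\mU=\sum_i\lambda_{n-i+1}(A)\lambda_i(B)$. Since $Q^{(X)}$ is doubly stochastic for every permutation $X$, combining this with Part~1 yields $\mL\le\tr(AXBX^T)\le\mU$ for all $X$.

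The only care needed is bookkeeping the two opposite eigenvalue orderings so that $\mL$ and $\mU$ land on the correct sides; the remainder is a direct chain of standard facts (cyclicity of the trace, orthostochasticity of $Q^{(X)}$, Birkhoff--von Neumann, rearrangement). There is no genuine obstacle, though one should note that $Q^{(X)}$ actually ranges only over the orthostochastic matrices, a proper subset of the Birkhoff polytope; this is harmless here since we only need the outer bounds, which are valid over the entire polytope.
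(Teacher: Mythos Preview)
The paper does not actually prove this theorem; it is quoted verbatim as a result of Finke et~al.\ (the sentence immediately preceding the statement reads ``Finke et.\ al.\ gave the following eigenvalue bounds''), and no argument for it appears anywhere in the body or the appendix. Your proof is correct and is the standard one for this classical fact: substitute the spectral decompositions, set $S=Q_A^T X Q_B$, use cyclic invariance to reduce to $\tr(\Lambda_A S\Lambda_B S^T)=\sum_{i,j}\lambda_i(A)\lambda_j(B)S_{ij}^2$, observe that $(S_{ij}^2)$ is doubly stochastic because $S$ is orthogonal, and then bound the resulting linear functional over the Birkhoff polytope via Birkhoff--von~Neumann together with the rearrangement inequality. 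Your remark that $Q^{(X)}$ only ranges over the orthostochastic matrices (a strict subset of the polytope) but that this is harmless for the outer bounds is also apt. There is simply nothing in the paper to compare your argument against.
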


It was also noticed by Finke~\cite{martello1987} that these bounds can further be 
tightened by reducing the \emph{spreads} of matrices $A$ and $B$, where the spread of a matrix $A$, denoted by $\mathfrak{S}(A)$, is given by $\mathfrak{S}(A) = \max_i {\lambda_i(A)} - \min_i {\lambda_i(A)}$.
There is no formula to compute the spread of a matrix directly, 
so Finke et. al.~\cite{martello1987} suggested a 
\emph{reduction} method to further sharpen the bound by replacing matrices $A$ and $B$ by \emph{smaller spread} symmetric matrices $\tilde{A}$ and $\tilde{B}$.
The reductions are achieved as
$\tilde{A} = A - M_A - M_A^T - \mathfrak{D}_A$ and 
$\tilde{B} = B - M_B - M_B^T - \mathfrak{D}_B$  	
, where $M_A$, $M_B$ are matrices with constant columns and 
$\mathfrak{D}_A$, $\mathfrak{D}_B$ are diagonal matrices, whose values 
are chosen appropriately in order to tighten the bounds on spreads $\mathfrak{S}(\tilde{A})$
and $\mathfrak{S}(\tilde{B})$.

\textbf{Our bounds on Random Matrices}: We propose new measure concentration 
inequalities on the spread of a random matrix, by redefining the spread in an alternate fashion that is more amenable to our analysis. 
Consider our reduced random symmetric matrix $\tilde{A} \in \mathds{R}^{n \times n}$, with eigenvalues 
$\lambda_1(\tilde{A}) \leq \dots \leq \lambda_n(\tilde{A})$, we define the \emph{gap} (spacing) between its consecutive eigenvalues as $\delta_i(\tilde{A}) := |\lambda_{i+1}(\tilde{A}) - \lambda_i(\tilde{A})|$ for $1 \leq i \leq n-1$. Then, the spread $\mathfrak{S}(\tilde{A})$ for the reduced matrix $\tilde{A}$ can be redefined as:
$\mathfrak{S}(\tilde{A}) = \sum_{i=1}^{n} \delta_i(\tilde{A})$

We begin by upper bounding $\delta_i(\tilde{A})$ using the following lemma~\ref{lemma:2} (proof in supplementary notes).
\begin{lemma}
	\label{lemma:2}
	Let $\vertiii{.}$ denote an algebraic matrix norm on a space of real $n \times n$ matrices $\mM_n$, then for any $A \in \mM_n$, 
	\begin{align*}
	\delta_i(A) \leq 2 \vertiii{A} 
	\end{align*}
\end{lemma}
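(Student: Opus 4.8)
The plan is to reduce the claim to the standard fact that every algebraic (i.e.\ submultiplicative) matrix norm dominates the spectral radius, and then apply the triangle inequality to a pair of consecutive eigenvalues.

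First I would establish that for any $A \in \mM_n$ and any eigenvalue $\lambda$ of $A$ we have $|\lambda| \leq \vertiii{A}$. Pick a nonzero eigenvector $v$ with $Av = \lambda v$, and form the $n\times n$ matrix $V$ whose columns are all equal to $v$; then $V \neq 0$, so $\vertiii{V} > 0$, and $AV = \lambda V$. Submultiplicativity of $\vertiii{\cdot}$ gives $|\lambda|\,\vertiii{V} = \vertiii{AV} \leq \vertiii{A}\,\vertiii{V}$, and dividing by $\vertiii{V}$ yields $|\lambda| \leq \vertiii{A}$. In particular the spectral radius satisfies $\rho(A) = \max_i |\lambda_i(A)| \leq \vertiii{A}$.

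Next I would conclude: for each $1 \leq i \leq n-1$, the triangle inequality in $\mathds{R}$ (the matrices of interest are symmetric, so the eigenvalues are real, but this is not even needed) gives
\[
\delta_i(A) = |\lambda_{i+1}(A) - \lambda_i(A)| \leq |\lambda_{i+1}(A)| + |\lambda_i(A)| \leq 2\rho(A) \leq 2\,\vertiii{A},
\]
which is exactly the claimed bound.

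The proof is essentially routine; the only point requiring a little care is the first step, namely making sure the hypothesis ``algebraic matrix norm'' is used correctly — it is submultiplicativity that powers the spectral-radius bound, and one should note the argument does not require the norm to be induced by a vector norm, only $\vertiii{AB} \leq \vertiii{A}\vertiii{B}$ together with $\vertiii{\cdot}$ being a genuine norm (so that $\vertiii{V} \neq 0$ for $V \neq 0$). I do not anticipate any real obstacle beyond stating this cleanly.
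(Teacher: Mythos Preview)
Your proposal is correct and follows essentially the same route as the paper: build the rank-one matrix with repeated eigenvector columns, use submultiplicativity to get $|\lambda_i|\le\vertiii{A}$, then apply the triangle inequality to $|\lambda_{i+1}-\lambda_i|$. Your write-up is in fact slightly cleaner, since you explicitly note $\vertiii{V}>0$ before dividing and you route the final step through $\rho(A)$.
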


To the best of the author's knowledge there does not exist a known distribution of eigenvalue gaps for a symmetric random matrix. We now attempt to give concentration inequalities for the tail probabilities of the sum of eigenvalue gaps, i.e., the spread.
For our i.i.d. random matrix $A \in \mathds{R}^{n \times n}$, 
consider the sequence of independent eigenvalue gaps 
$\delta_1(A), \dots ,\delta_n(A)$, where each $\delta_i(A)$ is upper bounded by $2 \vertiii{A}$,
as shown in Lemma~\ref{lemma:2}. Let us denote their sum as 
$\mS_n(A) := \delta_1(A) + \dots + \delta_n(A)$.
As $\delta_1(A), \dots ,\delta_n(A)$ are independent scalar random variables with 
$\delta_i(A) \leq \vertiii{A}$ a.s, with mean $\mu_i(A)$ and variance
$\sigma_i^2(A)$. Then, using Chernoff bounds, for any $ \epsilon>0$, we have
\begin{align*}
\mathbb{P}(| \mS_n(A) - \mu | \geq \epsilon \sigma) 
\leq K  \max \left( e^{(-p \epsilon^2  ) } , e^{  ( -p \epsilon \sigma / 2 \vertiii{A} ) }  \right)
\end{align*} 
for some absolute constants $K,p >0$.
The Chernoff inequality above, shows that $\mS_n(A)$ is sharply concentrated in the range
$n \mu + O(\sigma\sqrt{n})$, when $\epsilon$ is not too large.

\subsection{Eigenvalue bounds on Lawler's QAP on Random Affinity Matrices}
In literature, many graph matching algorithms use Lawler's QAP formulation. 
Recall, $A=(a_{ij}) ,B=(b_{uv}) \in \mathds{R}^{n \times n}$. 
Let $\Omega(a_{i,j}, b_{u,v})$ denote the \emph{pairwise affinity score} of 
assigning the $(i,j)$-th entry in $A$ to the $(u,v)$-th entry in $B$, implying that 
node $a_i$ is matched to node $b_u$ and node $a_j$ to node $b_v$, simultaneously.
Then, the \emph{affinity matrix} $\mathscr{A} \in \mathds{R}^{n^2 \times n^2}$ is given by 
$\mathscr{A}[(i-1)n + u, ( j-1  )n + v]=  \Omega(a_{i,j}, b_{u,v})$ and the optimal assignment to Lawler's QAP is the one that maximizes the sum total pairwise affinity scores. 
Leordeanu et. al.~\cite{Leordeanu2005} show via a spectral relaxation 
that Lawler's WAP reduces to solving 
$w^* = \argmaxI_w \frac{w^T Aw}{w^Tw} $, $w \in \mathds{R}^{n^2}$. This is solved by finding the leading eigenvalue $\lambda_1(\mathscr{A} )$.

As illustrated in~\cite{Alon2002}, we also use \emph{Talagrand's concentration inequality}~\cite{Talagrand1995}. We provide a tighter bound in the case of our affinity matrix using \emph{Rayleigh's quotient}. 
\begin{theorem}
	\label{thm:bound}
	For a random affinity matrix $\mathscr{A} \in R^{m \times m}$ and for a positive constant $t$,
	$\mathbb{P}[ |\lambda_1( \mathscr{A}) - \mM| \geq t] \leq 4e^{-t^2/8}$,
	where $\mM$ is the median of $\lambda_1( \mathscr{A})$.\qed
\end{theorem}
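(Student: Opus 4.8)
The plan is to express $\lambda_1(\mathscr{A})$ as a convex and Lipschitz function of the independent entries of the random affinity matrix, and then apply Talagrand's concentration inequality~\cite{Talagrand1995} for convex Lipschitz functions on product probability spaces, following the strategy of Alon et.~al.~\cite{Alon2002}; the sharper constant $8$ in the exponent is obtained by reading off the Lipschitz constant of $\lambda_1$ directly from the Rayleigh quotient.

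First I would record two elementary properties of the largest eigenvalue of a real symmetric matrix. (i) By the Rayleigh quotient characterization, $\lambda_1(\mathscr{A}) = \max_{\lVert w \rVert = 1} w^{T} \mathscr{A} w$, a supremum of functions that are linear in the entries of $\mathscr{A}$; hence $\lambda_1$ is a \emph{convex} function of those entries. (ii) If $\mathscr{A}, \mathscr{A}'$ are symmetric and $v$ is a unit top eigenvector of $\mathscr{A}$, then $\lambda_1(\mathscr{A}) - \lambda_1(\mathscr{A}') \le v^{T}\mathscr{A}v - v^{T}\mathscr{A}'v = v^{T}(\mathscr{A} - \mathscr{A}')v \le \lVert \mathscr{A} - \mathscr{A}' \rVert_F$; interchanging $\mathscr{A}$ and $\mathscr{A}'$ gives $|\lambda_1(\mathscr{A}) - \lambda_1(\mathscr{A}')| \le \lVert \mathscr{A} - \mathscr{A}' \rVert_F$, so $\lambda_1$ is $1$-Lipschitz in the Frobenius norm.

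Next I would pass to genuinely independent coordinates. The affinity matrix $\mathscr{A}$ is symmetric and, under the normalization of the affinity scores, its entries are independent and lie in an interval of length one; it is therefore determined by the vector $z$ of its diagonal and above-diagonal entries, which ranges over a product probability space $\prod [0,1]$. Perturbing $z$ to $z'$ alters each off-diagonal entry of $\mathscr{A}$ in two symmetric positions, whence $\lVert \mathscr{A}(z) - \mathscr{A}(z') \rVert_F \le \sqrt{2}\,\lVert z - z' \rVert$. Combined with (i) and (ii), the map $z \mapsto \lambda_1(\mathscr{A}(z))$ is convex (a convex function precomposed with an affine map) and $\sqrt{2}$-Lipschitz on the cube. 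Talagrand's inequality then states that for a convex $L$-Lipschitz function $f$ on such a product space and its median $\mM$, $\mathbb{P}[\,|f - \mM| \ge t\,] \le 4\exp(-t^2/(4L^2))$; taking $f = \lambda_1(\mathscr{A}(\cdot))$ and $L = \sqrt{2}$ yields $\mathbb{P}[\,|\lambda_1(\mathscr{A}) - \mM| \ge t\,] \le 4\exp(-t^2/8)$, which is the claim.

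I expect the main obstacle to be the middle step: one must (a) justify treating $\mathscr{A}$ through its independent, bounded upper-triangular entries so that the product-space hypothesis of Talagrand's inequality applies, (b) check that this change of variables preserves convexity while inflating the Lipschitz constant by exactly the factor $\sqrt{2}$ coming from the symmetric double-counting of off-diagonal entries, and (c) confirm that the affinity scores are normalized to a range of length one, since that is what makes the denominator in the exponent come out to $8$ rather than something larger. The eigenvalue facts (i) and (ii) and the final invocation of Talagrand's inequality are then routine.
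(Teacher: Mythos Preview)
Your proposal is correct and follows essentially the same approach as the paper: both invoke Talagrand's inequality on the product space of the independent upper-triangular entries (bounded in $[0,1]$ since the affinity scores are normalized), and both extract the factor $\sqrt{2}$ (hence the $8$ in the exponent) from the Rayleigh-quotient representation of $\lambda_1$ combined with the symmetric doubling of off-diagonal entries. The only cosmetic difference is packaging: you apply Talagrand in its convex-Lipschitz-function corollary, whereas the paper applies the convex-distance form directly, constructing the witness vectors $\alpha_{ij}=v_i v_j + v_j v_i$ and $\alpha_{ii}=v_i^2$ from the leading eigenvector and verifying $\sum_{i\le j}\alpha_{ij}^2\le 2$ and $\sum_{a_{ij}\neq b_{ij}}|\alpha_{ij}|\ge t$ by hand --- exactly the computation that certifies your Lipschitz constant $\sqrt{2}$.
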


\textbf{Discussion:} We further investigate the robustness of affinity-matrix based graph matching solutions when dealing with \emph{missing} or \emph{incomplete} data. We show the sharpness of our result in Theorem~\ref{thm:bound} on the affinity matrix, similar to~\cite{Alon2002}, who analyze their results using \emph{fat} matrices as an example. 
Consider an affinity matrix $\mathscr{A}=(a_{ij}) \in \mR^{m \times m}$, whose entries are i.i.d. 
Bernoulli distributed. Simulating missing affinity scores due to missing edge assignments, we set $a_{ij}=1$ with probability $1/4$ and $a_{ij}=0$ with probability $3/4$. Notice that our random affinity matrix represents the Erdős-Rényi graph $G(m,1/4)$.  
As shown in~\cite{Alon2002}, the median and expectation of $\lambda_1(A)$ differ by a constant factor. Let $G=(V,E)$ denote a general undirected graph, where the degree of each vertex $v \in V$ is given by $d_G: V \rightarrow \mathds{Z}$. Then, the \emph{average degree} of $G$ is given by $\bar{d} = \sum_{v \in V} d_G(v) / |V|$ and its \emph{maximum degree} is 
$\Delta = \max_{v \in V} d_G(v)$. It is then well known that $\bar{d} \leq \lambda_1(A) \leq \Delta$, i.e., the largest eigenvalue of a graph is squeezed between its average and maximum degree. 

Let $|E|$ denote the total number of edges in $G(m,1/4)$, then the \emph{average degree} of $G(m,1/4)$ is given by $2 |E|/n$, where $ |E| = (Bin( {m \choose 2}, 1/4 )$ and the standard deviation of $|E|$ is $ \sqrt{ {m \choose 2} (1/4) (3/4) } = \Theta(m)$. 
For large $m$, our binomial distribution converges to a normal distribution. Therefore, we calculate the probability for the total number of edges $|E|$ to deviate from its expectation by $t$ standard deviations as $e^{-\Theta(t^2)}$. Furthermore, we know that if $|E|$ exceeds its expectation by $\Theta(tn)$, then the average degree $\bar{d}$ must also correspondingly exceed its expectation by $\Theta(t)$. Therefore, the probability of the average degree $\bar{d}$ exceeding its expectation by $t$ standard deviations is \emph{at least} $e^{- \Theta(t^2)}$. Given that $\bar{d} \leq \lambda_1(A)$, it follows that $\lambda_1$ exceeding its expectation is also lower bounded by the same $e^{- \Theta(t^2)}$. The bounds achieved are \emph{tight up to a constant factor} in the exponent. Our experimental results on \emph{Factorized Graph Matching (FGM)} by Zhou et. al.~\cite{ZhouD16} and \emph{Re-weighted Random Walk Matching (RRWM)}~\cite{cho2010reweighted}  also support the finding that affinity matrix based matching solutions are more robust to missing edges due to occlusions in data. 

\subsection{Asymptotic Analysis of Higher-Order Clique Assignment}	
Following along the same lines as Finke et. al.~\cite{martello1987}, we study the asymptotic behavior of the worst to most optimal ratio and present it as the following theorem. 
\begin{theorem}
	Given random clique adjacency matrices $A^{k,l}_n, B^{k,l}_n \sim Pois(\lambda) $ and their associated cost matrix $\mathcal{C}_{vv'} \sim Pois(\lambda)$. We denote by $\lambda_e := \mathbb{E}(\mathcal{C}_{vv'}) $ and 
	$\lambda_v := \mathbb{V}(\mathcal{C}_{vv'})$ the expectation and variance of our Poisson distributed cost function. 
	For $\epsilon>0$ and $p = \Theta \left( n^{\frac{-2}{(k+l-1)}} \right)$, we have the following bound on the ratio of the worse to the best solution as
	\begin{align*}
	&\mathbb{P}\left\lbrace 
	\frac{\max\limits_{\pi \in \Pi}  \sum\limits_{vv'} \mathcal{C}_{vv'}}  
	{\min\limits_{\pi \in \Pi}  \sum\limits_{vv'} \mathcal{C}_{vv'}}
	\leq 1 + \epsilon	 
	\right\rbrace \\
	&\geq 1 - 
	2|\Pi|\exp \left( -2|S_\pi| \left( \frac{\epsilon'\sqrt{\lambda_v}}
	{\epsilon' + 2\lambda_v}   \right)^2 \right) =: \psi(n,\epsilon)
	\end{align*}
	where, $|\Pi| = n!$, $|S_\pi|= {{n+1} \choose 2}$, $\lim_{n \to \infty} \psi(n,\epsilon) = 1$\qed
\end{theorem}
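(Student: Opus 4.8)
The plan is to follow the Burkard--Fincke probabilistic paradigm for random combinatorial optimization used by Finke et al.~\cite{martello1987}: show that, with probability tending to one, \emph{every} feasible permutation attains an objective value in a narrow band around a common mean, so that the worst and best values are forced to lie within a factor $1+\epsilon$ of one another. The hypothesis that $A^{k,l}_n,B^{k,l}_n$ and the induced cost matrix $\mathcal{C}$ have i.i.d.\ $\mathrm{Pois}(\lambda)$ entries is exactly what is available at the clique-percolation threshold $p=\Theta(n^{-2/(k+l-1)})$, where Bollob\'as~\cite{Bollobas2009} shows the relevant clique counts converge to a Poisson law; I would open the proof by invoking this to justify the Poisson model, and thereafter treat the $\mathcal{C}_{vv'}$ as i.i.d.\ $\mathrm{Pois}(\lambda)$.

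\emph{Step 1 (reduction to one permutation).} For a fixed $\pi\in\Pi$ let $Z(\pi):=\sum_{vv'}\mathcal{C}_{vv'}$, a sum of $|S_\pi|=\binom{n+1}{2}$ i.i.d.\ $\mathrm{Pois}(\lambda)$ summands (one per index pair selected by $\pi$); hence $Z(\pi)\sim\mathrm{Pois}(|S_\pi|\lambda)$ with mean $\mu:=|S_\pi|\lambda_e$ and variance $|S_\pi|\lambda_v$, and this distribution does not depend on $\pi$. If $|Z(\pi)-\mu|\le\epsilon'|S_\pi|$ holds simultaneously for all $\pi$, then
\[
\frac{\max_{\pi\in\Pi}Z(\pi)}{\min_{\pi\in\Pi}Z(\pi)}\ \le\ \frac{\lambda_e+\epsilon'}{\lambda_e-\epsilon'}\ \le\ 1+\epsilon ,
\]
provided $\epsilon'$ is chosen small enough in terms of $\epsilon$ (and $\lambda_e$). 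Therefore, by the union bound over the $|\Pi|=n!$ permutations together with the fact that the $Z(\pi)$ are identically distributed,
\[
\mathbb{P}\left(\frac{\max_\pi Z(\pi)}{\min_\pi Z(\pi)}>1+\epsilon\right)\ \le\ |\Pi|\cdot\mathbb{P}\big(|Z(\pi_0)-\mu|>\epsilon'|S_\pi|\big)
\]
for an arbitrary fixed $\pi_0$.

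\emph{Step 2 (Poisson concentration) and Step 3 (the limit).} To bound the single-permutation probability I would run the Chernoff method with the Poisson moment generating function $\mathbb{E}\,e^{s(\mathcal{C}_{vv'}-\lambda)}=e^{\lambda(e^s-1-s)}$: exponentiating the sum, optimizing over $s$, and using that $\lambda_e=\lambda_v=\lambda$ for the Poisson, one rewrites the two-sided tail in the claimed shape $\mathbb{P}\big(|Z(\pi_0)-\mu|>\epsilon'|S_\pi|\big)\le 2\exp\!\big(-2|S_\pi|\big(\tfrac{\epsilon'\sqrt{\lambda_v}}{\epsilon'+2\lambda_v}\big)^2\big)$; combined with Step 1 this yields the stated bound $\psi(n,\epsilon)=1-2|\Pi|\exp(-2|S_\pi|(\cdots)^2)$. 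For the limit, note $C_\epsilon:=\big(\tfrac{\epsilon'\sqrt{\lambda_v}}{\epsilon'+2\lambda_v}\big)^2$ is a positive constant independent of $n$, while $\ln(2|\Pi|)=\ln(2\cdot n!)=O(n\ln n)$ by Stirling and $|S_\pi|=\binom{n+1}{2}=\Theta(n^2)$; hence $\ln\big(2|\Pi|\exp(-2C_\epsilon|S_\pi|)\big)=O(n\ln n)-\Theta(n^2)\to-\infty$, so $2|\Pi|\exp(-2|S_\pi|C_\epsilon)\to 0$ and $\psi(n,\epsilon)\to 1$.

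\emph{Main obstacle.} The crux is Step 2: Poisson variables are unbounded, so Hoeffding's inequality is not directly available, and one must derive the exponential tail from the Poisson m.g.f.\ (or cite a Bernstein/Bennett inequality for sums of Poissons) and then manipulate it into precisely the algebraic form $\big(\tfrac{\epsilon'\sqrt{\lambda_v}}{\epsilon'+2\lambda_v}\big)^2$ appearing in the statement. A secondary, more routine, point still requiring care is the bookkeeping in Step 1 that converts ``all $Z(\pi)$ are within $\epsilon'|S_\pi|$ of $\mu$'' into the ratio bound $1+\epsilon$, thereby pinning down the admissible relation between $\epsilon'$ and $\epsilon$.
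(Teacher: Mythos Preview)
Your overall strategy matches the paper's exactly: fix a permutation, apply a concentration inequality to the sum $Z(\pi)$, take a union bound over $|\Pi|=n!$ permutations, and finally convert the two-sided deviation into the ratio bound $(\lambda_e+\epsilon')/(\lambda_e-\epsilon')\le 1+\epsilon$. Your handling of the limit via Stirling is also what the paper has in mind (it simply asserts the limit is ``easily verified'').

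The one genuine difference is in Step~2. The paper does \emph{not} use the Poisson m.g.f.\ or a Bernstein/Bennett inequality; instead it invokes a R\'enyi-type lemma for sums of independent random variables satisfying $|X_k-\mathbb{E}(X_k)|\le 1$, namely
\[
\mathbb{P}\Big\{\Big|\sum_k (X_k-\mathbb{E} X_k)\Big|\ge \mu D\Big\}\le 2\exp\!\Big(-\tfrac{\mu^2}{2(1+\mu/2D)^2}\Big),\qquad D=\Big(\sum_k\Var X_k\Big)^{1/2},
\]
and then substitutes $D=\sqrt{|S_\pi|\lambda_v}$ and $\mu=\epsilon'\sqrt{|S_\pi|}/\sqrt{\lambda_v}$. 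A short algebraic simplification of $\mu^2/\big(2(1+\mu/2D)^2\big)$ yields precisely $2|S_\pi|\big(\tfrac{\epsilon'\sqrt{\lambda_v}}{\epsilon'+2\lambda_v}\big)^2$, which is where the specific constant in the statement comes from. So the paper's route has the advantage that the quoted exponent falls out mechanically from the lemma; your Chernoff-for-Poisson computation would have to be massaged to land on that exact expression. Conversely, your observation that Poisson summands are unbounded is well taken: the paper applies the R\'enyi lemma formally without commenting on the boundedness hypothesis, so your approach is arguably cleaner on that point even if it requires more work to match the stated constant.
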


\begin{figure*}[t!]
	\centering
	\subfigure[]{%
		\label{fig:zero11}%
		\includegraphics[width=27mm]{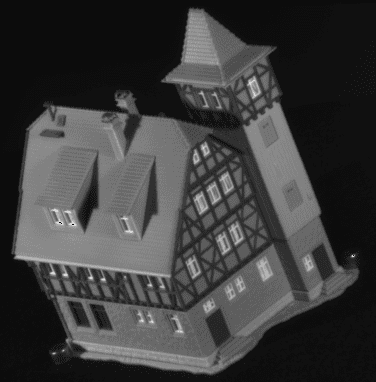}}%
	\qquad
	\subfigure[]{%
		\label{fig:first11}%
		\includegraphics[width=27mm]{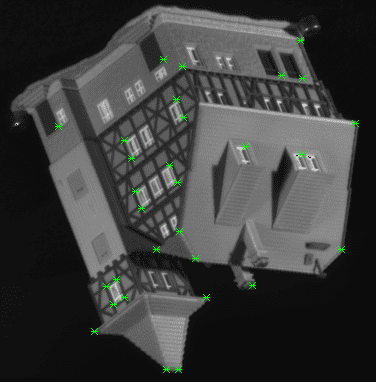}}%
	\qquad
	\subfigure[]{%
		\label{fig:second11}%
		\includegraphics[width=27mm]{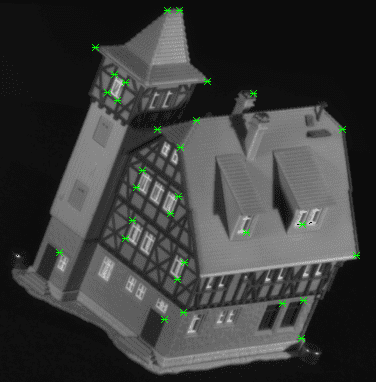}}%
	\qquad
	\subfigure[]{%
		\label{fig:third11}%
		\includegraphics[width=27mm,height=27mm]{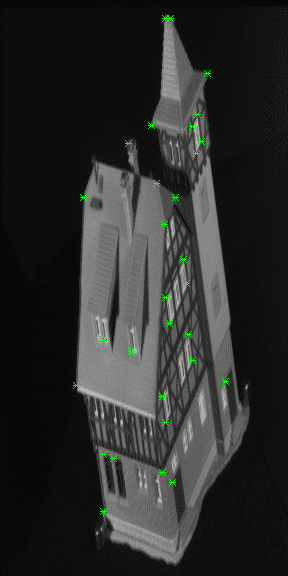}}%
	\qquad
	\subfigure[]{%
		\label{fig:fourth11}%
		\includegraphics[width=27mm,height=27mm]{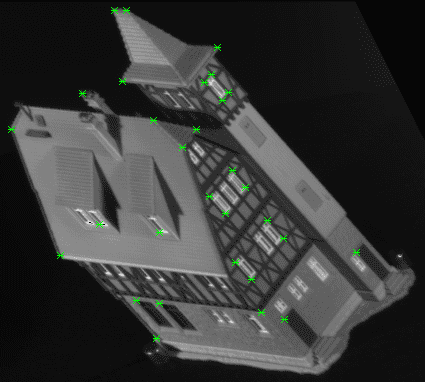}}%
	\vspace{-5mm}
	\caption{(a) Original House Frame, (b)--(e) Four transformations on house frame: Rotation, Reflection, Scaling and Shear (green markers show true matching case with the original frame (a)).}
	\label{fig:HouseTransform}
\end{figure*}
\begin{table*}[h]
	{\tiny
		\hfill{}
		\begin{tabular}{l||l|l|l|l|l|l|l|l|l|l}
			\hline
			\multicolumn{1}{l||}{\textbf{Algorithms}} & \multicolumn{2}{|l|}{\textbf{$20^{\circ}$} Rotation} & \multicolumn{2}{|l|}{\textbf{$60^{\circ}$} Rotation}&  \multicolumn{2}{|l|}{\textbf{Reflection}} & \multicolumn{2}{|l|}{\textbf{Scaling}} & \multicolumn{2}{|l}{\textbf{Shear}}\\
			
			\cline{1-11}
			& 20\% & 40\% & 20\% & 40\% & 20\% & 40\% & 20\% & 40\% & 20\% & 40\% \\
			\cline{2-11}
			\hline
			\hline
			OurMethod & \textbf{0.01 $\pm$ 0.0} & \textbf{0.01 $\pm$ 0.0} & \textbf{0.05 $\pm$ 0.0} & \textbf{0.03 $\pm$ 0.0} & \textbf{0.0 $\pm$ 0.0} & \textbf{0.0 $\pm$ 0.0} & \textbf{0.1 $\pm$ 0.0} & \textbf{0.2 $\pm$ 0.1} & \textbf{0.0 $\pm$ 0.0} & \textbf{0.0 $\pm$ 0.0}\\
			EigenAlign & 63.97 $\pm$ 1.0 & 70.19 $\pm$ 1.0 & 65.39 $\pm$ 1.5 & 70.88 $\pm$ 1.9 & 62.5 $\pm$ 0.4 & 64.97 $\pm$ 0.2 & 62.37 $\pm$ 0.6 & 65.42 $\pm$ 1.1 & 61.04 $\pm$ 0.2 & 62.36 $\pm$ 0.7\\
			FGM & 3.6 $\pm$ 0.5 & 7.4 $\pm$ 0.5 & 18.0 $\pm$ 0.0 & 36.4 $\pm$ 0.5 & \textbf{0.0 $\pm$ 0.0} & \textbf{0.0 $\pm$ 0.0} & 2.20 $\pm$ 1.0 & 3.4 $\pm$ 0.5 & \textbf{0.0 $\pm$ 0.0} & \textbf{0.0 $\pm$ 0.0}\\
			LAI-LP & 40.90 $\pm$ 0.9 & 43.07 $\pm$ 1.2 & 49.9 $\pm$ 1.2 & 61.16 $\pm$ 0.7 & 49.07 $\pm$ 0.9 & 59.41 $\pm$ 0.8 & 43.12 $\pm$ 5.5 & 45.91 $\pm$ 3.2 & 39.04 $\pm$ 0.7 & 38.27 $\pm$ 0.6\\
			PermSync & 13.36 $\pm$ 1.1 & 16.48 $\pm$ 0.4 & 25.48 $\pm$ 1.4 & 41.59 $\pm$ 0.7 & 26.24 $\pm$ 3.4 & 42.31 $\pm$ 5.1 & 12.15 $\pm$ 0.7 & 13.7 $\pm$ 0.6 & 10.19 $\pm$ 0.8 & 6.23 $\pm$ 3.2\\
			RRWM & 2.0 $\pm$ 0.0 & 4.0 $\pm$ 0.0 & 14.0 $\pm$ 0.0 & 27.0 $\pm$ 0.0 & \textbf{0.0 $\pm$ 0.0} & \textbf{0.0 $\pm$ 0.0} & 5.0 $\pm$ 2.0 & 10.0 $\pm$ 1.5 & \textbf{0.0 $\pm$ 0.0} & \textbf{0.0 $\pm$ 0.0}\\
			Tensor & 6.88 $\pm$ 0.0 & 14.39 $\pm$ 0.8 & 19.24 $\pm$ 0.6 & 37.95 $\pm$ 0.8 & 17.1 $\pm$ 0.5 & 34.69 $\pm$ 0.4 & 2.91 $\pm$ 0.6 & 6.19 $\pm$ 0.7 & \textbf{0.0 $\pm$ 0.0} & \textbf{0.0 $\pm$ 0.0}\\
			IPFP & 6.8 $\pm$ 1.5 & 11.6 $\pm$ 0.5 & 18.2 $\pm$ 0.5 & 35.0 $\pm$ 0.0 & 1.0 $\pm$ 0.0 & 1.0 $\pm$ 0.0 & 7.4 $\pm$ 2.0 & 15.6 $\pm$ 1.5 & 0.8 $\pm$ 0.5 & 0.6 $\pm$ 0.5\\
			PM & 43.2 $\pm$ 1.0 & 49.4 $\pm$ 1.5 & 46.4 $\pm$ 1.0 & 56.8 $\pm$ 0.5 & 37.0 $\pm$ 0.0 & 37.0 $\pm$ 0.0 & 45.6 $\pm$ 0.5 & 53.4 $\pm$ 0.5 & 37.8 $\pm$ 0.5 & 38.4 $\pm$ 0.5\\
			SMAC & 12.6 $\pm$ 0.5 & 20.0 $\pm$ 0.0 & 19.0 $\pm$ 0.0 & 33.0 $\pm$ 0.0 & 4.0 $\pm$ 0.0 & 4.0 $\pm$ 0.0 & 13.8 $\pm$ 2.5 & 23.4 $\pm$ 0.5 & 4.0 $\pm$ 0.0 & 4.2 $\pm$ 0.5\\
			SM & 32.6 $\pm$ 0.5 & 39.4 $\pm$ 1.5 & 37.8 $\pm$ 1.0 & 49.6 $\pm$ 1.0 & 25.0 $\pm$ 0.0 & 25.0 $\pm$ 0.0 & 33.0 $\pm$ 1.5 & 40.4 $\pm$ 0.5 & 25.2 $\pm$ 0.5 & 24.8 $\pm$ 0.5\\
			GA & 34.0 $\pm$ 1.5 & 37.0 $\pm$ 1.0 & 38.4 $\pm$ 1.0 & 47.0 $\pm$ 0.0 & 30.0 $\pm$ 0.0 & 30.0 $\pm$ 0.0 & 37.6 $\pm$ 1.0 & 45.4 $\pm$ 0.5 & 29.0 $\pm$ 0.0 & 28.4 $\pm$ 0.5\\
			Munkres & 35.25 $\pm$ 1.5 & 36.47 $\pm$ 0.9 & 44.75 $\pm$ 0.8 & 55.13 $\pm$ 1.3 & 46.05 $\pm$ 1.2 & 57.25 $\pm$ 2.2 & 37.32 $\pm$ 1.5 & 37.81 $\pm$ 1.8 & 32.84 $\pm$ 0.3 & 31.27 $\pm$ 0.8\\			
			\hline
	\end{tabular}}
	\hfill{}
	\caption{Error (\%) of transformation on CMU House: inserted $20\%$ and $40\%$ impurity in CMU House frame sequence randomly for rotation ($20^{\circ}$, $60^{\circ}$), reflection, scaling and shear. Minimum error (\%) is shown in bold. Matching is computed for $111$ frames from the $1^{st}$ frame to the other $110$ frames. Our method shows best performance among all the methods.}
	\label{tb:Transformation}
\end{table*}

\section{Experiments}
\label{sec:exp}
Here, we study the robustness of various matching algorithms when affected by missing or incomplete information and transformations (both affine and non-affine) on synthetic and real-world datasets. For the sake of brevity, we report detailed dataset descriptions in our supplementary notes.
The graph matching algorithms can broadly be classified based on their use of
(i) \emph{affinity-matrix:} FGM~\cite{ZhouD16,zhou2013deformable}\footnote{\href{http://www.f-zhou.com/gm\_code.html}{FGM}}, RRWM~\cite{cho2010reweighted}, 
(ii) \emph{Eigenvalues:} EigenAlign~\cite{feizi2016spectral}\footnote{\href{https://github.com/SoheilFeizi/spectral-graph-alignment}{EigenAlign}}, SM~\cite{Leordeanu2005}, SMAC~\cite{cour2007balanced}, PermSync~\cite{pachauri2013solving}\footnote{\href{http://pages.cs.wisc.edu/~pachauri/perm-sync/}{PermSync}},
(iii) \emph{LP relaxation:} GA~\cite{gold1996graduated}, Kuhn-Munkres~\cite{Kuhn1955}, 
(iv) \emph{Integer QAP:} IPFP~\cite{leordeanu2009integer}, 
(v) \emph{Probabilistic matching:} PM~\cite{zass2008probabilistic}, 
(vi) \emph{Higher-order matching given complete data:} Tensor~\cite{duchenne2011tensor}\footnote{\href{http://www.cs.cmu.edu/~olivierd/}{Tensor}},
and (vii) \emph{Geometric and Feature matching:} LAI-LP~\cite{li2013object}\footnote{This algorithm serves as our \emph{naive baseline} as it directly uses neighborhood properties of the underlying graph (\href{http://www.ee.cuhk.edu.hk/~hsli/}{LAI-LP}).}. Our code\footnote{\href{https://github.com/charusharma1991/RandomCliqueComplexes_ICML2018}{Our Method}} is publicly available.

\subsection{Effect of Affine Transformations}
\textbf{Simulated Dataset:} We perform affine transformations on \href{http://vasc.ri.cmu.edu/idb/html/motion/house/index.html}{CMU House}, which is a sequence of $N$ frames extracted from a video. More specifically, we uniformly sample frames (at $20\%$ and $40\%$) and perform affine transformations on the selected frames to distort them. Figure~\ref{fig:HouseTransform} shows examples of affine transformations on house frame sequences. Table \ref{tb:Transformation} shows the comparative error in matching for all the algorithms. We now describe each affine transformation as performance metrics in our experiments.
\begin{center}
	\begin{table*}
		{\tiny
			\hfill{}
			\begin{tabular}{l||l|l|l|l|l|l}
				\hline
				\textbf{Methods} & \textbf{Car} & \textbf{Bike} & \textbf{Butterfly} & \textbf{Magazine} & \textbf{Building} & \textbf{Book}\\
				\hline
				\hline
				OurMethod & \textbf{4.14 $\pm$ 2.45}/ 7.13 & \textbf{3.15 $\pm$ 0.32}/ 6.96 & \textbf{3.89 $\pm$ 0.23}/ 14.76 & \textbf{0.48 $\pm$ 0.02}/ 43.99 & \textbf{4.17 $\pm$ 0.32}/ 12.65 & \textbf{22.20 $\pm$ 1.16}/ 14.86\\
				EigenAlign & 60.68 $\pm$ 0.29/ 19.37 & 57.44 $\pm$ 0.37/ 19.32 & 66.57 $\pm$ 0.0/ 26.13 & 43.23 $\pm$ 0.0/ 93.60 & 90.51 $\pm$ 0.0/ 2.64 & 98.41 $\pm$ 0.0/ 8.29\\
				FGM & 55.51 $\pm$ 0.0/ 1793.9 & 48.17 $\pm$ 0.0/ 2013.7 & 16.12 $\pm$ 0.0/ 674.94 & \textbf{0.0 $\pm$ 0.0}/ 777.55 & 74.87 $\pm$ 0.05/ 2530.5 & 97.54 $\pm$ 0.01/ 4293.9\\
				LAI-LP & 73.06 $\pm$ 0.23/ 152.47 & 42.00 $\pm$ 0.24/ 154.15 & 49.54 $\pm$ 0.0/ 161.06 & 88.73 $\pm$ 0.1/ 184.153 & 87.98 $\pm$ 0.0/ 33.06 & 96.38 $\pm$ 0.0/ 14.71\\
				PermSync & 10.63 $\pm$ 0.0/ 0.45 & 8.90 $\pm$ 0.0/ 0.46 & 46.93 $\pm$ 0.0/ 0.43 & 79.88 $\pm$ 0.0/ 1.08 & 64.00 $\pm$ 0.0/ 0.22 & 70.00 $\pm$ 0.0/ 0.48\\
				RRWM & 60.91 $\pm$ 0.0/ 4.96 & 54.53 $\pm$ 0.0/ 4.83 & 30.99 $\pm$ 0.0/ 8.53 & 1.98 $\pm$ 0.0/ 18.09 & 72.87 $\pm$ 0.01/ 7.98 & 87.04 $\pm$ 0.0/ 21.84\\
				Tensor & 24.37 $\pm$ 0.9/ 93.36 & 15.07 $\pm$ 1.0/ 93.97 & \textbf{1.07 $\pm$ 0.17}/ 107.93 & \textbf{0.0 $\pm$ 0.0}/ 182.07 & 43.24 $\pm$ 2.98/ 40.21 & 32.35 $\pm$ 0.15/ 40.41\\
				IPFP & 65.13 $\pm$ 0.0/ 6.35 & 60.81 $\pm$ 0.0/ 6.28 & 40.90 $\pm$ 0.0/ 8.43 & 3.94 $\pm$ 0.0/ 12.31 & 76.19 $\pm$ 0.0/ 4.65 & 87.74 $\pm$ 0.0/ 8.90\\
				PM & 74.63 $\pm$ 0.0/ 7.07 & 71.93 $\pm$ 0.0/ 4.90 & 70.27 $\pm$ 0.0/ 0.94 & 48.82 $\pm$ 0.0/ 1.69 & 83.79 $\pm$ 0.02/ 2.98 & 91.43 $\pm$ 0.24/ 0.44\\
				SMAC & 70.00 $\pm$ 0.0/ 5.75 & 67.36 $\pm$ 0.0/ 5.52 & 50.53 $\pm$ 0.0/ 4.15 & 5.52 $\pm$ 0.0/ 6.90 & 78.56 $\pm$ 0.22/ 1.94 & 87.88 $\pm$ 0.11/ 3.42 \\
				SM & 68.54 $\pm$ 0.0/ 3.34 & 67.18 $\pm$ 0.0/ 3.47 & 65.96 $\pm$ 0.0/ 3.32 & 34.16 $\pm$ 0.0/ 4.93 & 80.27 $\pm$ 0.07/ 1.88 & 88.66 $\pm$ 0.09/ 2.10\\
				GA & 65.06 $\pm$ 0.0/ 4.53 & 64.60 $\pm$ 0.0/ 4.61 & 61.58 $\pm$ 0.0/ 4.08 & 31.62 $\pm$ 0.0/ 5.83 & 77.20 $\pm$ 0.27/ 3.51 & 87.02 $\pm$ 0.16/ 32.28\\
				Munkres & 33.71 $\pm$ 0.0/ 1.52 & 29.99 $\pm$ 0.0/ 1.49 & 51.87 $\pm$ 0.0/ 1.39 & 79.69 $\pm$ 0.0/ 2.45 & 74.00 $\pm$ 0.0/ 0.73 & 92.00 $\pm$ 0.0/ 1.16\\
				
				\hline
		\end{tabular}}
		\hfill{}
		\caption{Car, Motorbike~\cite{cho2013learning}, Butterfly, Magazine~\cite{jiang2011linear}, Building and Books error (\%) for pairwise matchings. Computation time (in seconds) is mentioned after the "/" in the above Table.}
		\label{tb:n2}
	\end{table*}
\end{center}
\vspace{-0.8cm}
\textbf{Rotation:} 
Figure~\ref{fig:first11} shows a $180^{\circ}$ rotated version of the original house frame (Figure~\ref{fig:zero11}). Table \ref{tb:Transformation} shows errors in matching when 
$20$\% of the frames are rotated by both $20^{\circ}$ and $60^{\circ}$, respectively 
and when the same transformations are applied to $40$\% of the frames. As the percentage of transformed frames with greater degree increases, we note a substantial increase in error for other methods in comparison to our method's error increase. 

\textbf{Reflection:} 
The reflected version of a house frame is shown in Figure~\ref{fig:second11}. 
Table \ref{tb:Transformation} shows that affinity-based approaches also performed equally well for reflection of house frame sequences.

\textbf{Scaling:} 
Resizing an image both horizontally and vertically scales the image as is shown in Figure~\ref{fig:third11} . We fixed the scales to $0.5$, $0.75$, $1.25$, and $1.5$ randomly in both the directions in order to transform the images. Our method in Table \ref{tb:Transformation} produces much better matchings than the other methods.

\textbf{Shearing:} 
We randomly apply shearing on house in one of the directions with shear factor $0.5$ (shown in Figure~\ref{fig:fourth11}) and measured the performance shown in Table \ref{tb:Transformation}. In addition to our method, we find that affinity-based algorithms also produce robust matchings.
 
\subsection{Effect of Incomplete and Occluded Landmarks}
To understand the effect of occlusions, we took two real-world datasets, i.e., \emph{Books} and \emph{Building}~\cite{pachauri2013solving} with severe occlusions which are scenes of the same 3D object taken from arbitrary camera angles. These datasets have widely been used in \emph{Structure from Motion (SfM)} problems and are known to be difficult for matching. 
Focusing our attention to the last two columns of Table \ref{tb:n2}, it is evident that our method gives the best results.
\begin{figure}[t!]
	\centering
	\subfigure{%
		\includegraphics[trim={0 0 0 0.45cm},clip,width=70mm,height=20mm]{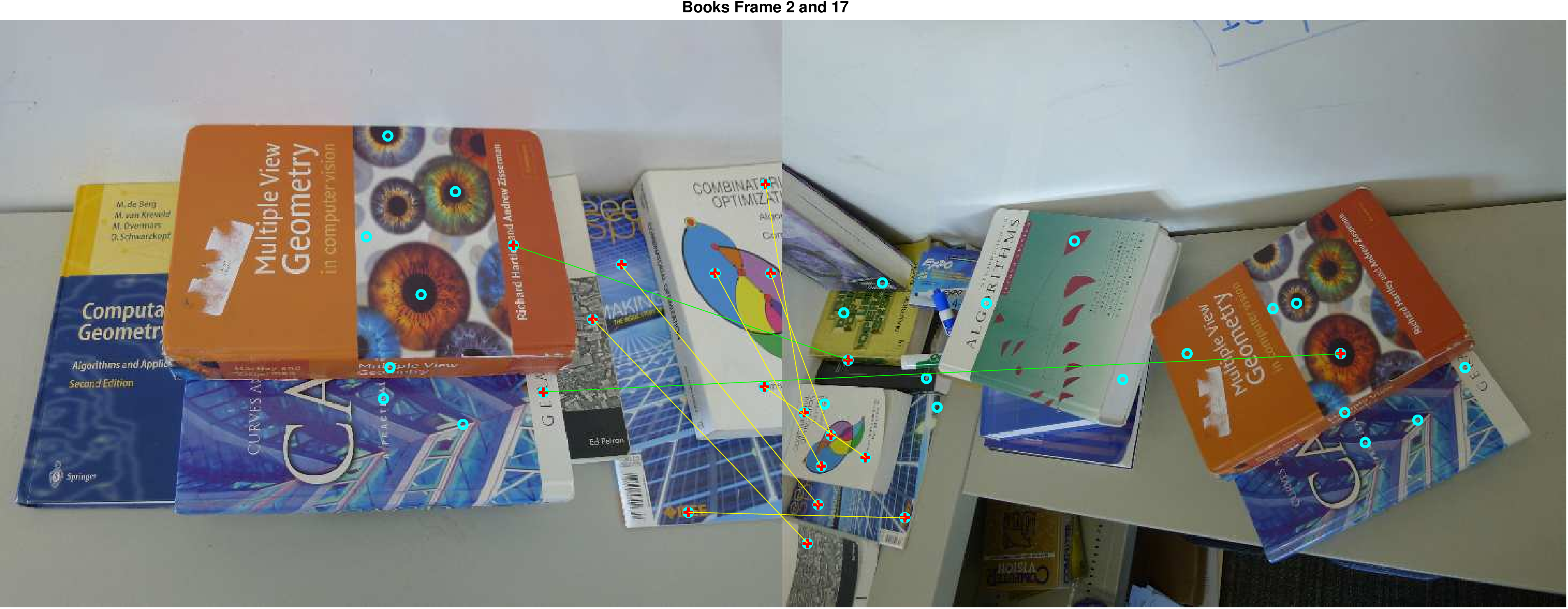}}%
	\qquad
	\subfigure{%
		\includegraphics[trim={0 0 0 0.45cm},clip,width=70mm,height=20mm]{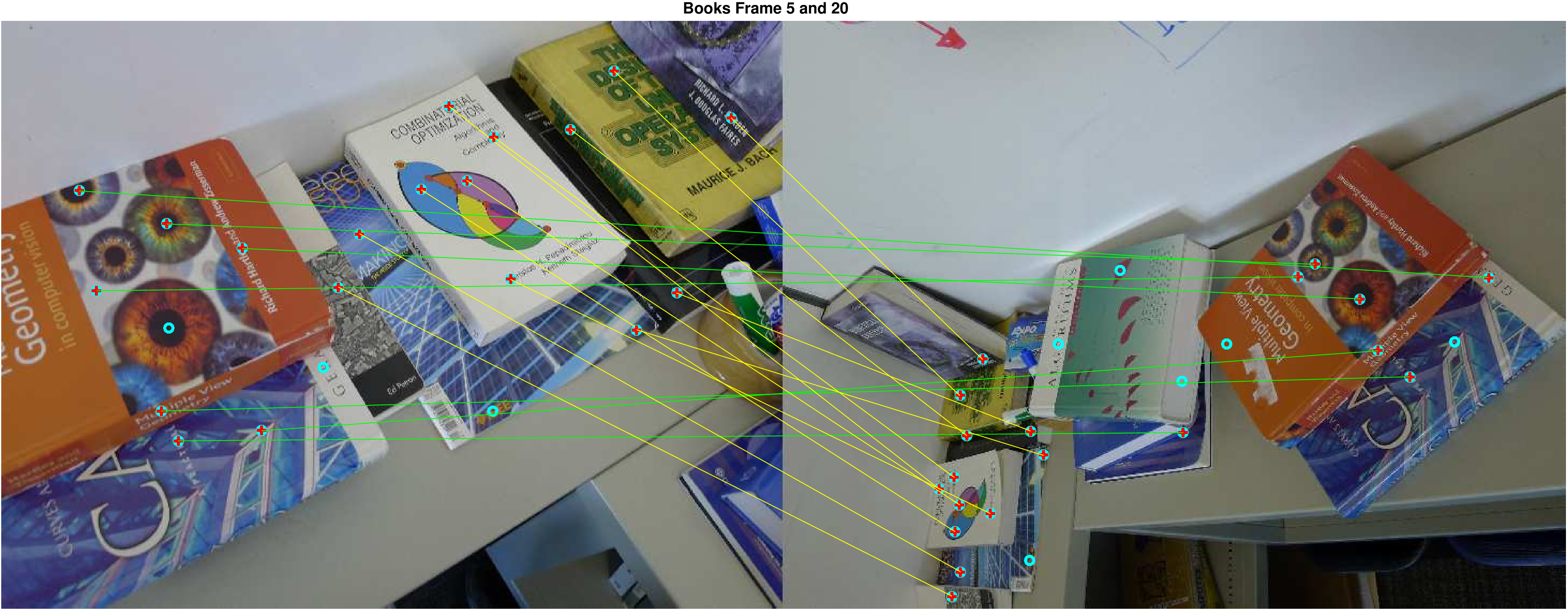}}%
	\caption{Two instances of matchings in Books dataset which is severely occluded. Yellow/green lines show correct/incorrect matches and isolated points show no matches.}
	\label{fig:books}
\end{figure}
Figure~\ref{fig:books} shows the Books dataset where books are placed on a table in various orientations with varying levels of occlusion, along with two sample matchings between different pairs of images. Note that in Figure~\ref{fig:books}, when a corresponding matching clique is not found in the other image, a match isn't \emph{forced} but rather there is no match reported, which doesn't degrade the matching accuracy. Matching as many random cliques, in order of decreasing dimensionality, as possible, manifests itself as an advantage over existing methods, especially when dealing with clutter and/or occlusions.
\begin{figure}[t]
	\centering     
	\includegraphics[width=0.54\linewidth]{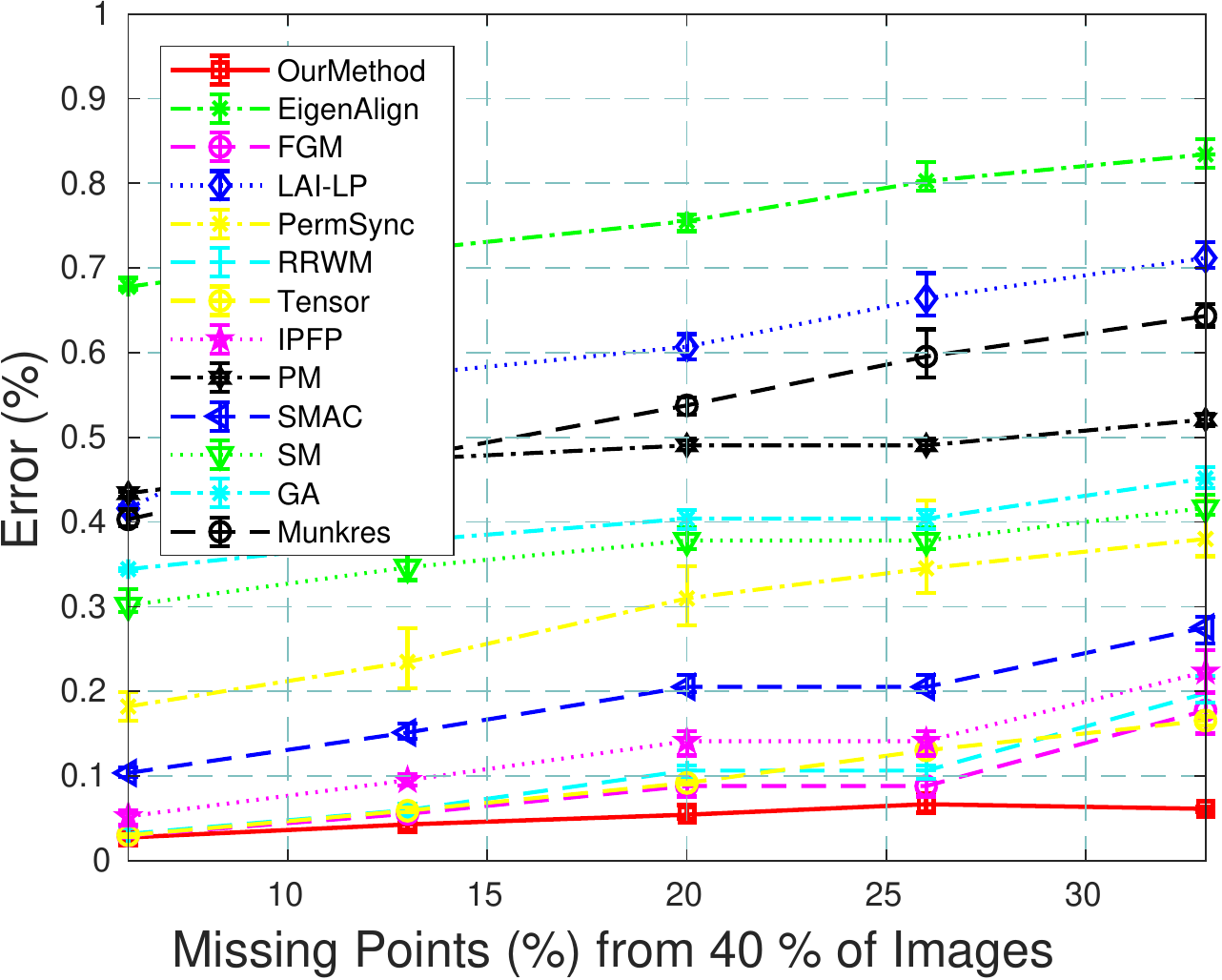}
	\caption{Error (\%) in matching when varying the number of missing landmarks in	$40\%$ of the images in the frame sequence.}
	\label{fig:occ}
\end{figure}
\textbf{Simulating missing points:}
In order to gain a deeper insight into the behavior of all the matching algorithms, we 
omit $2, 4, 6, 8$, and $10$ ($6.66\%$, $13.33\%$, $20\%$, $26.66\%$, and $33.33\%$) 
points out of total House landmark points (i.e., $30$ points) from $40\%$ (Figure~\ref{fig:occ})
 of frame sequences randomly. In general, all algorithms show an increase in error as more points are removed, but our method has a less gradual increase, while eigenvalue related methods show a rather steep increase in error. Our method is comparable to FGM and RRWM, but the gap in error increases with more missing points. We also observed that FGM incurs the longest runtime for matching in this scenario.

\subsection{Effect of Frame Separation}
Here, we pick two frames from a video for matching and vary the separation in their frame sequence number. The farther apart two frames are the more \emph{pronounced} is the effect we seek between the frame images. For example, as the frame separation increases, \href{http://vasc.ri.cmu.edu/idb/html/motion/hotel/index.html}{\emph{CMU Hotel}} undergoes a more severe 3D rotation, while \emph{Horse-Shear}~\cite{caetano2009learning} undergoes a larger degree of shear. 
\begin{figure}[t]
	\centering
	\parbox{3.6cm}{
		\includegraphics[width=3.6cm]{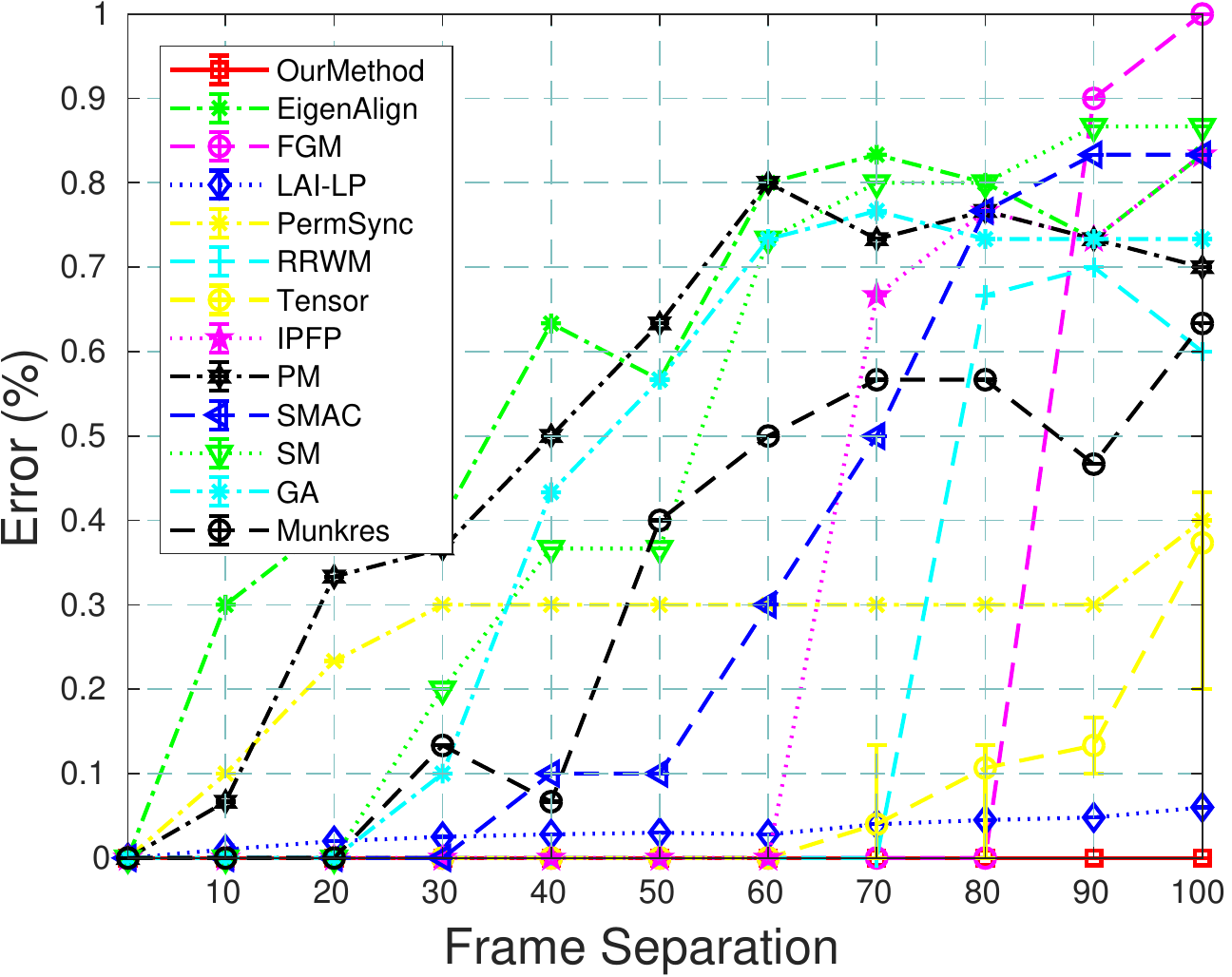}}
		\qquad
		\begin{minipage}{3.6cm}
			\includegraphics[width=3.6cm]{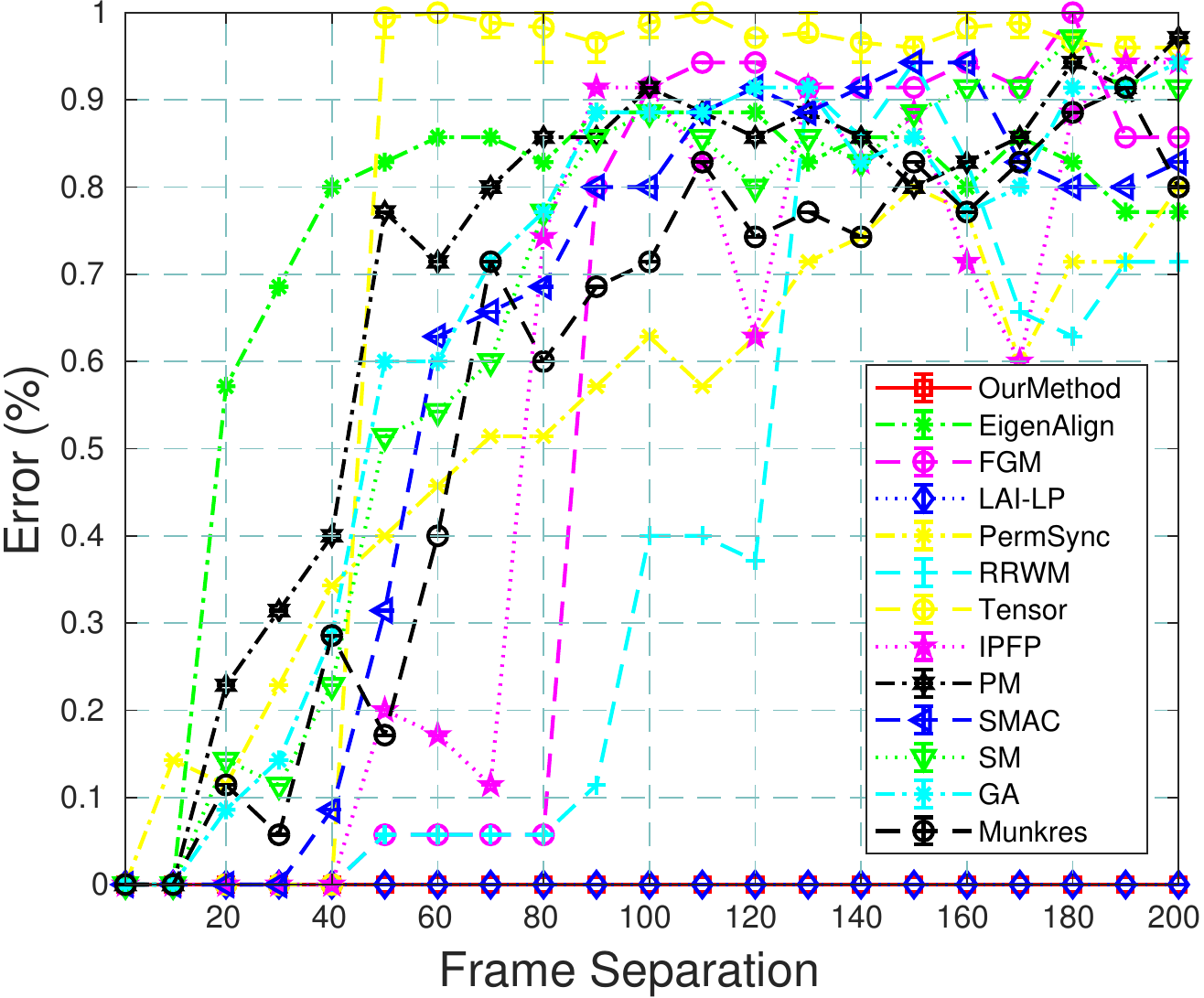}
		\end{minipage}
		\caption{Error (\%) in matching by various methods with different frame separation level for 
			\emph{CMU Hotel} (on left) and \emph{Horse Shear} (on right).}
			\label{fig:frame-seps}
\end{figure}
We set $p=0.7$ and $k=7$ as nearest neighbors to get the correct matchings.
In Figure~\ref{fig:frame-seps}, in both the left and right plots we notice that most methods show a very sharp rise in error, while our method is quite stable and reports a $0\%$ error. We observe that the naive baseline, LAI-LP also does well and doesn't exhibit steep changes in error with larger frame separation.

\textbf{Experimental Summary:}
In general, we find that the \emph{affinity matrix} based methods like FGM and RRWM are more robust to affine transformations than other competing algorithms. Our method performs the best as the weight vectors in our algorithm effectively capture even the \emph{higher-order geometric properties} of the neighborhood and nearly preserves them under affine transformations. The naive baseline, i.e., LAI-LP, does not perform as well because it also has a feature-based component like SIFT which is known to fail on some affine transformations.

\section{Conclusion}
To the best of our knowledge, we have presented the first approach towards partial higher-order matching by initially capturing higher-order structure as \emph{random clique complexes} and then proposing a corresponding matching algorithm. 
From a theoretical point of view, we studied matching as a QAP on random clique adjacency matrices that represented the $k$-skeleton of our random clique complexes and gave bounds on the concentration inequality of the spread of its eigenvalues. We also improved bounds on the largest eigenvalue of the Lawler QAP formulation, used by affinity-matrix based approaches. We discussed the robustness of such approaches to missing points and also showed the sharpness of our result. Furthermore, inspired by Finke et. al.~\cite{martello1987} we studied the asymptotic behavior of our higher dimensional clique adjacency matrices. A more detailed investigation of the distribution of eigenvalue gaps for such random matrices with Poisson distributed entries is left for future work. 

From an empirical perspective, we compared the matching accuracies of diverse algorithms on both synthetic and real-world datasets that were known to have severe occlusions and distortions, thus posing a daunting challenge to matching algorithms. We argue that our experiments show strong evidence that our approach outperforms all the state-of-the-art matching methods on a diverse range of datasets.  

\clearpage
\section*{Acknowledgements}
We thank our colleagues from the Mathematics Dept. at IIT-H (Sukumar Daniel, Narasimha Kumar, and Bhakti B. Manna) for their insight and expertise.
We would also like to thank all the reviewers for their feedback and suggestions. We are grateful to the authors of ~\cite{ZhouD16,zhou2013deformable,cho2010reweighted,feizi2016spectral,pachauri2013solving,li2013object,duchenne2011tensor} for providing their source codes and datasets.

\bibliography{mka}
\bibliographystyle{icml2018}

\clearpage
\appendix
\section{Proofs}
	
	\subsection{Upper Bound to Clique Size in a Random Graph}
	\label{ssec:bound}
	Let $G(n,p)$ denote the Erd\H{o}s-R\'{e}nyi random graph on $n$ vertices, i.e., 
	$G(n,p) = \{G_{ij}|1 \leq i <j\leq n\}$, where $G_{ij} \sim Ber(p)$ are i.i.d
	Bernoulli random variables. We denote the number of $k$-cliques in the realization
	of $G(n,p)$ as $X_n(k)$. By definition, a $k$-clique in a graph $G$ is a 
	subset $A$ of $k$ vertices, which induce a complete subgraph of $G$. Additionally,
	no other vertex in $G$ can be joined by edges to all vertices of $A$.
	Therefore, we can represent $X_n(k)$ as a sum of indicator random variables 
	$\mathds{1}_A$, where
	\begin{equation}
	\mathds{1}_A = 
	\begin{cases}
	1 & \text{if $A$ is a $k$-clique in $G(n,p)$}\\
	0 & \text{otherwise}
	\end{cases}
	\end{equation}

	It is clear that $X_n(k) = \sum_{|A|=k} \mathds{1}_A $. Hence, we get
	\begin{align*}
	\mathbb{E}(X_n(k)) = 
	\mathbb{E}( \sum_{|A|=k} \mathds{1}_A ) 
	= \sum_{|A|=k} \mathbb{E}(\mathds{1}_A)
	= {n \choose k}p^{k \choose 2}
	\end{align*}
	
	Using Stirling's formula, we upper bound $X_n(k)$ as $\left( \frac{en}{k} \right)^k$, 
	where $e$ is the \emph{Euler's number}.

	\subsection{Quadratic Assignment Problem}
	We begin by defining the general quadratic assignment problem (QAP) using the \emph{Koopman-Beckmann} version. Let $A=(a_{vv'})$, $B=(b_{vv'}) \in 
	\mathds{R}^{n \times n}$. Let $\Pi$ denote the set of all possible bijections (permutations) $\pi:N \rightarrow N$, where $N=\{1,2,\dots n\}$.    
	We define the QAP as:
	\begin{equation*}
	\begin{aligned}
	& \text{minimize}
	& & \sum_{v,v'} b_{vv'}a_{\pi(v)\pi(v')} \\
	& \text{subject to}
	& & \pi \in \Pi
	\end{aligned}
	\end{equation*}
	
	For now on, for ease of notation, we denote the cost function $b_{vv'}a_{\pi(v)\pi(v')}$ as $\mathcal{C}_{vv'}$.

	\subsection{Asymptotic Analysis of Higher-order Clique Assignment (Proof of Theorem $3$)}

		Given that the QAP is a combinatorial optimization problem, in the case of random symmetric matrices, 
		the subset of \emph{feasible solutions} $S_\pi$ is of the form:
		\begin{align*}
		S_\pi = \{ (\pi(v),\pi(v') \mid v<v', u,v = 1,\dots ,n \}
		\end{align*}
		where, $|S_\pi| = {n+1 \choose 2}$ and $|\Pi| = n!$.
		
		Recall that our cost function $\mathcal{C}_{vv'}$ has expectation $\lambda_e$
		and variance $\lambda_v$. For notational convenience, we set 
		$\epsilon' = \lambda_v - \epsilon$ .
		Then, there exists a bijection $\pi \in \Pi$, for which the following holds by the definition of variance
		\begin{align*}
		\mathbb{P} 
		\left\lbrace     
		\frac{1}{|S_\pi|} 
		\left| 
		\sum_{v,v'\in \pi} ( \mathcal{C}_{vv'} - \lambda_e  )   
		\right| 
		\geq \epsilon'
		\right\rbrace
		\end{align*}
		To proceed further with our proof, we make use of the following lemma by Renyi et. al.~\cite{Renyi1970}. 
		\begin{lemma}
			\label{lemma:1}
			Let $X_1,\dots,X_n$ be independent random variables with 
			$|X_k - \mathbb{E}(X_k)| \leq 1$, $k=1,\dots,n$. Denote
			\[
			D := \sqrt{ \sum_{k=1}^{n} \mathbb{V}(X_k) }
			\]
			and let $\mu$ be a positive real number with $\mu \leq D$. Then
			\begin{align*}
			\mathbb{P} \left\lbrace \left| \sum_{k=1}^{n} (X_k - \mathbb{E}(X_k)) \right| \geq \mu D \right\rbrace \leq 2 
			\exp\left(- \frac{\mu^2}{2(1 + \mu/2D)^2 }    \right)
			\end{align*}
		\end{lemma}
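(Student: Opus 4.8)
The statement is a Bernstein-type tail bound, and the plan is to prove it by the exponential-moment (Chernoff) method. First I would center the variables: set $Y_k := X_k - \mathbb{E}(X_k)$, so that each $Y_k$ has mean zero, satisfies $|Y_k| \leq 1$ almost surely, and has $\mathbb{V}(Y_k) = \mathbb{V}(X_k)$; writing $S := \sum_{k=1}^{n} Y_k$ we then have $\sum_{k=1}^{n}\mathbb{V}(Y_k) = D^2$. It suffices to bound the one-sided tail $\mathbb{P}(S \geq \mu D)$ and double it, since running the same argument on $-Y_k$ controls the lower tail $\mathbb{P}(S \leq -\mu D)$ and a union bound supplies the factor $2$.

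For the one-sided tail I would apply the exponential Markov inequality: for every $\lambda > 0$, by independence, $\mathbb{P}(S \geq \mu D) \leq e^{-\lambda \mu D}\,\mathbb{E}(e^{\lambda S}) = e^{-\lambda \mu D}\prod_{k=1}^{n}\mathbb{E}(e^{\lambda Y_k})$. The crucial ingredient is the per-variable moment-generating-function estimate. Expanding $e^{\lambda Y_k} = 1 + \lambda Y_k + \sum_{j \geq 2}\lambda^j Y_k^j/j!$ and using $\mathbb{E}(Y_k)=0$ together with $|Y_k|^j \leq Y_k^2$ for $j \geq 2$ (valid because $|Y_k|\leq 1$), one gets $\mathbb{E}(Y_k^j) \leq \mathbb{E}(|Y_k|^j) \leq \mathbb{E}(Y_k^2) = \sigma_k^2$ for all $j \geq 2$, and hence, writing $\sigma_k^2 := \mathbb{V}(Y_k)$,
\[
\mathbb{E}(e^{\lambda Y_k}) \leq 1 + \sigma_k^2\bigl(e^{\lambda} - 1 - \lambda\bigr) \leq \exp\!\bigl(\sigma_k^2(e^{\lambda}-1-\lambda)\bigr),
\]
the last step using $1+u\leq e^{u}$. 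Multiplying over $k$ and collecting $\sum_k \sigma_k^2 = D^2$ gives $\mathbb{P}(S \geq \mu D) \leq \exp\!\bigl(D^2[-\lambda(\mu/D)+(e^{\lambda}-1-\lambda)]\bigr)$.

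It remains to choose $\lambda$ and simplify into the stated form. Writing $x := \mu/D$, I would take $\lambda := x/(1+x/2)\in(0,2)$ and bound the Bennett term by the elementary inequality $e^{\lambda}-1-\lambda \leq \lambda^2/\bigl(2(1-\lambda/3)\bigr)$, valid for $0\leq\lambda<3$ (from $j!\geq 2\cdot 3^{j-2}$ and summing a geometric series). Substituting this $\lambda$ reduces the exponent, after routine algebra, to $-2D^2 x^2(3+x)/\bigl((2+x)(6+x)\bigr)$, while the target exponent is $-2D^2x^2/(2+x)^2 = -\mu^2/\bigl(2(1+\mu/2D)^2\bigr)$. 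Comparing the two, the required inequality collapses to $(3+x)(2+x)\geq 6+x$, i.e. $4x+x^2\geq 0$, which holds for all $x\geq 0$. This yields $\mathbb{P}(S\geq\mu D)\leq\exp\!\bigl(-\mu^2/(2(1+\mu/2D)^2)\bigr)$, and doubling for the two-sided bound finishes the argument.

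I expect the main obstacle to be the per-variable MGF estimate, specifically the correct handling of the odd moments through $\mathbb{E}(Y_k^j)\leq\mathbb{E}(|Y_k|^j)\leq\sigma_k^2$ (a naive ``$\sigma_k^2$ bound'' is immediate only for even moments); everything downstream is a deterministic one-variable optimization. An alternative to the explicit sub-optimal $\lambda$ is to optimize exactly by taking $\lambda=\log(1+x)$, which produces the sharp Bennett exponent $D^2\bigl[(1+x)\log(1+x)-x\bigr]$, and then to invoke $(1+x)\log(1+x)-x \geq x^2/\bigl(2(1+x/2)^2\bigr)$ on the range $0<x\leq 1$; here the hypothesis $\mu\leq D$ is exactly what guarantees $x\leq 1$. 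I would favor the sub-optimal choice presented above, since it avoids the transcendental comparison and needs only the one-line identity $4x+x^2\geq 0$.
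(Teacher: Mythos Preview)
Your argument is correct. The paper does not actually prove this lemma: it is quoted verbatim from R\'enyi's \emph{Probability Theory} (1970) and marked \qed{} without any justification, so there is no ``paper's proof'' to compare against beyond the citation. What you have supplied is exactly the standard Chernoff/Bernstein derivation that underlies R\'enyi's statement, and every step checks out: the per-term MGF bound $\mathbb{E}(e^{\lambda Y_k})\le\exp\bigl(\sigma_k^2(e^\lambda-1-\lambda)\bigr)$ via $|Y_k|^j\le Y_k^2$, the geometric-series estimate $e^\lambda-1-\lambda\le\lambda^2/\bigl(2(1-\lambda/3)\bigr)$, and the algebraic reduction with $\lambda=2x/(2+x)$ to the inequality $(3+x)(2+x)\ge 6+x$ are all sound. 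Your main line does not even use the hypothesis $\mu\le D$ (it goes through for all $x\ge0$ since $\lambda<2<3$), so you in fact obtain a slightly stronger statement than the one quoted; the hypothesis would only be needed in the alternative Bennett route you sketch at the end.
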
 \qed
		\vspace{1em}

		In order to apply Lemma~\ref{lemma:1}, we change the form of the inequality 
		as follows:
		\begin{align}
		&\mathbb{P} 
		\left\lbrace     
		\frac{1}{|S_\pi|} 
		\left| 
		\sum_{v,v'\in \pi} ( \mathcal{C}_{vv'} - \lambda_e  )   
		\right| 
		\geq \epsilon'
		\right\rbrace \label{eq2}\\
		\leq&
		\sum_{\pi \in \Pi} 
		\mathbb{P} 
		\left\lbrace     
		\frac{1}{|S_\pi|} 
		\left| 
		\sum_{v,v'\in \pi} ( \mathcal{C}_{vv'} - \lambda_e  )   
		\right| 
		\geq \epsilon'
		\right\rbrace \\	
		\leq&
		|\Pi|
		\mathbb{P} 
		\left\lbrace     
		\frac{1}{|S_\pi|} 
		\left| 
		\sum_{v,v'\in \pi} ( \mathcal{C}_{vv'} - \lambda_e  )   
		\right| 
		\geq \epsilon'
		\right\rbrace \label{eq4}								
		\end{align}
		Before applying Lemma~\ref{lemma:1}, we compute $D$ as, 
		\begin{align*}
		D = \sqrt{ \sum_{v,v'\in \pi} \lambda_v} = 
		\sqrt{|S_\pi | \lambda_v} 
		\end{align*}
		We can rewrite~(\ref{eq4}) as 
		\begin{align*}
		&|\Pi|
		\mathbb{P} 
		\left\lbrace     
		\left( \frac{\sqrt{\lambda_v}}{\sqrt{|S_\pi|}}\right)
		\left( \frac{1}{ \sqrt{|S_\pi|\lambda_v}  }\right)		
		\left| 
		\sum_{v,v'\in \pi} ( \mathcal{C}_{vv'} - \lambda_e  )   
		\right| 
		\geq \epsilon'
		\right\rbrace\\
		=&|\Pi|
		\mathbb{P} 
		\left\lbrace 
		\left| 
		\sum_{v,v'\in \pi} ( \mathcal{C}_{vv'} - \lambda_e  )   
		\right| 
		\geq 
		\underbrace{\left( \frac{\epsilon' \sqrt{|S_\pi|}} { \sqrt{\lambda_v} } \right)}_{\mu}
		\underbrace{\left( \sqrt{|S_\pi| \lambda_v} \right)}_{D}		
		\right\rbrace
		\end{align*}
		
		Now, we make use of Lemma~\ref{lemma:1} and get 
		\begin{align*}
		&|\Pi|
		\mathbb{P} 
		\left\lbrace 
		\left| 
		\sum_{v,v'\in \pi} ( \mathcal{C}_{vv'} - \lambda_e  )   
		\right| 
		\geq 
		\left( \frac{\epsilon' \sqrt{|S_\pi|}} { \sqrt{\lambda_v} } \right) 
		\left( \sqrt{|S_\pi| \lambda_v} \right)		
		\right\rbrace \\
		\leq&2|\Pi|	
		\exp 
		\left( 
		- \frac
		{ \left( \frac{\epsilon'\sqrt{|S_\pi|}}{\sqrt{\lambda_v}} \right)^2 }
		{ 2 \left( 1 + \frac{\epsilon' \sqrt{|S_\pi|}}{\sqrt{\lambda_v}} 
			\frac{1}{2\sqrt{|S_\pi|\lambda_v}}
			\right)^2 } 
		\right)	\\
		=&
		2|\Pi|\exp \left( -2|S_\pi| \left( \frac{\epsilon'\sqrt{\lambda_v}}
		{\epsilon' + 2\lambda_v}   \right)^2 \right)
		\end{align*}
		Equation~\ref{eq2} can now be written as
		\begin{align*}
		\mathbb{P} 
		\left\lbrace     
		\frac{1}{|S_\pi|} 
		\left| 
		\sum_{v,v'\in \pi} ( \mathcal{C}_{vv'} - \lambda_e  )   
		\right| 
		\leq \epsilon'
		\right\rbrace
		\geq 1 - \\
		2|\Pi|\exp \left( -2|S_\pi| \left( \frac{\epsilon'\sqrt{\lambda_v}}
		{\epsilon' + 2\lambda_v}   \right)^2 \right) 
		\end{align*}
		It can easily be verified that the expression in the R.H.S. of the above inequality tends to $1$ as 
		$n \to \infty$.
		
		We know that for the expression $		\left| 
		\sum_{v,v'\in \pi} ( \mathcal{C}_{vv'} - \lambda_e  )   
		\right| \leq \epsilon'|S_\pi|$, the following bounds hold.
		
		\begin{align*}
		|S_\pi|(\lambda_e - \epsilon')	
		\leq \sum_{v,v'\in \pi}  \mathcal{C}_{vv'} 
		\leq |S_\pi|(\lambda_e + \epsilon')	
		\end{align*}
		It follows that 
		\begin{align*}
		\frac{\max\limits_{\pi \in \Pi}  \sum\limits_{vv'} \mathcal{C}_{vv'}}  
		{\min\limits_{\pi \in \Pi}  \sum\limits_{vv'} \mathcal{C}_{vv'}}
		\leq 
		\frac{|S_\pi|(\lambda_e + \epsilon')}{|S_\pi|(\lambda_e - \epsilon')}
		\leq 1 + \epsilon
		\end{align*}		
		This completes the proof.\qed

	\subsection{Eigenvalue Bounds on Lawler's QAP Formulation on Random Matrices (Proof of Theorem 2)}
	As illustrated in~\cite{Alon2002}, we will make use of Talagrand's concentration inequality.
	We provide a tighter bound in the case of our affinity matrix using the Rayleigh's quotient. 
	
	\begin{theorem}\cite{Talagrand1995}
		\label{thm:tal}
		Let $\Omega = \prod_{i=1}^m \Omega_i $ be a product space of probability spaces.
		Let $\mA$ and $\mA_t$ be subsets of $\Omega$ and if 
		for each $y=(y_1,\dots,y_m) \in \mA_t$,
		there exists a real vector $\alpha = (\alpha_1, \dots ,\alpha_m)$, such that for every 
		$x=(x_1,\dots,x_k) \in \mA$, the following inequality holds
		\begin{align*}
		\sum_{i:x_i \neq y_i} |\alpha_i| \leq t \left(  \sum_{i=1}^{m} \alpha_i^2  \right)^{1/2}
		\end{align*} 
		Then, 
		\begin{align*}
		\mathbb{P}[\mA] \mathbb{P}[\conj \mA_t] \leq e^{-t^2/4}.
		\end{align*}
	\end{theorem}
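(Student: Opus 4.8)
The plan is to recognize the displayed hypothesis as the defining condition for Talagrand's \emph{convex distance} and then to prove the underlying convex distance inequality by the standard induction on the number of coordinates, deducing the stated probability bound at the end by Markov's inequality. For $y \in \Omega$ and a subset $\mA \subseteq \Omega$, write the convex distance as
\begin{align*}
\rho_{\mA}(y) := \sup_{\lVert \alpha \rVert_2 \le 1}\; \inf_{x \in \mA}\; \sum_{i : x_i \ne y_i} |\alpha_i|.
\end{align*}
The hypothesis exhibits, for each $y$ in the relevant set, a single (normalized) direction $\alpha$ along which $y$ is $t$-separated from all of $\mA$, i.e. $\inf_{x \in \mA}\sum_{i:x_i\neq y_i}|\alpha_i| \ge t$; this is exactly the statement $\rho_{\mA}(y) \ge t$, so the relevant set is contained in $\{y : \rho_{\mA}(y) \ge t\} = \conj{\mA_t}$. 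Hence it suffices to establish the master estimate
\begin{align*}
\mathbb{E}\!\left[ e^{\rho_{\mA}(Y)^2/4} \right] \le \frac{1}{\mathbb{P}[\mA]},
\end{align*}
where $Y$ follows the product measure on $\Omega$. Granting this, Markov's inequality applied to $e^{\rho_{\mA}^2/4}$ yields $\mathbb{P}[\rho_{\mA}(Y) \ge t] \le e^{-t^2/4}\,\mathbb{E}[e^{\rho_{\mA}^2/4}] \le e^{-t^2/4}/\mathbb{P}[\mA]$, which rearranges to the claimed $\mathbb{P}[\mA]\,\mathbb{P}[\conj{\mA_t}] \le e^{-t^2/4}$.

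The crucial preparatory step I would take is to replace the awkward $\sup$--$\inf$ by a pure minimization via minimax duality. Letting $U_{\mA}(y) = \{(\mathds{1}[x_i \ne y_i])_{i=1}^m : x \in \mA\} \subseteq \{0,1\}^m$ be the set of disagreement patterns, one has $\rho_{\mA}(y) = \min_{\nu \in \mathrm{conv}(U_{\mA}(y))} \lVert \nu \rVert_2$. This makes $\rho_{\mA}(y)^2$ convex and, more importantly, \emph{upper boundable by exhibiting a single point} of the convex hull, which is what drives the induction. The base case $m=1$ reduces to the one-variable inequality $r + (1-r)e^{1/4} \le 1/r$ for $r = \mathbb{P}[\mA] \in (0,1]$, using that $\rho_{\mA}(y) = 0$ for $y \in \mA$ and $\rho_{\mA}(y) \le 1$ otherwise.

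For the inductive step I would split off the last coordinate, $\Omega = \Omega' \times \Omega_m$, and set $\mB = \{x' : (x',\omega) \in \mA \text{ for some } \omega\}$ and $\mA_\omega = \{x' : (x',\omega) \in \mA\}$. Feeding an optimal hull point for $\mA_\omega$ (last coordinate agreeing, contributing $0$) and one for $\mB$ (last coordinate paying $1$) into $\mathrm{conv}(U_{\mA}((x',\omega)))$ and using convexity of the squared norm gives, for every $\lambda \in [0,1]$, the geometric inequality
\begin{align*}
\rho_{\mA}(x',\omega)^2 \le (1-\lambda)^2 + \lambda\,\rho_{\mA_\omega}(x')^2 + (1-\lambda)\,\rho_{\mB}(x')^2 .
\end{align*}
Integrating $e^{\rho_{\mA}^2/4}$, applying the inductive hypothesis to both $\mA_\omega$ and $\mB$ (which live on the $(m-1)$-coordinate space), and optimizing over $\lambda$ reduces everything to the one-dimensional calculus estimate $\inf_{0 \le \lambda \le 1} e^{(1-\lambda)^2/4}\, r^{-\lambda} \le 2 - r$ for $r = \mathbb{P}[\mA_\omega]/\mathbb{P}[\mB] \in [0,1]$; a final application of Jensen's inequality, using $\mathbb{E}_\omega \mathbb{P}[\mA_\omega] = \mathbb{P}[\mA]$, then closes the induction. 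I expect this last calculus estimate to be the main obstacle: it is precisely what forces the constant $1/4$ in the exponent, and verifying it carefully (rather than the conceptually clean duality and convex-hull steps) is the delicate part of the argument.
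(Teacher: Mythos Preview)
The paper does not prove this statement at all: it is quoted verbatim from \cite{Talagrand1995} (in the form used by \cite{Alon2002}) and simply marked \qed, then invoked as a black box in the proof of Theorem~\ref{thm:bound}. So there is no ``paper's own proof'' to compare against; you have supplied a proof where the paper supplies only a citation.

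That said, your outline is the standard and correct route to Talagrand's convex distance inequality: the dual description $\rho_{\mA}(y)=\min_{\nu\in\mathrm{conv}(U_{\mA}(y))}\lVert\nu\rVert_2$, the coordinate-peeling induction with the sections $\mA_\omega$ and the projection $\mB$, the convexity bound on $\rho_{\mA}^2$, H\"older to separate the two factors before applying the inductive hypothesis, and the scalar estimate $\inf_{0\le\lambda\le1}e^{(1-\lambda)^2/4}r^{-\lambda}\le 2-r$ followed by $\mathbb{E}_\omega[2-r(\omega)]\le 1/\mathbb{E}_\omega r(\omega)$. One small point worth flagging: you silently read the hypothesis with the inequality reversed (you take it as $\sum_{i:x_i\neq y_i}|\alpha_i|\ge t\lVert\alpha\rVert_2$, i.e.\ $\rho_{\mA}(y)\ge t$). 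That reading is the correct one---and it is exactly how the paper itself applies the theorem a few lines later, where the two claims yield $\sum|\alpha_{ij}|\ge t$---so the ``$\le$'' in the displayed statement is a typo you have rightly corrected.
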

	Here, $\mA_t$ denotes the set with \emph{Talagrand distance} at most $t$ from $\mA$ and 
	$\conj \mA_t$ denotes the \emph{complement} of set $\mA_t$.\qed
	
	\begin{theorem}
		\label{thm:bound}
		For a real symmetric matrix $A=(a_{ij}) \in R^{m \times m}$ and for positive constant $t$,
		\begin{align*}
		\mathbb{P}[ |\lambda_1(A) - \mM| \geq t] \leq 4e^{-t^2/8},
		\end{align*}
		where $\mM$ is the median of $\lambda_1(A)$.
	\end{theorem}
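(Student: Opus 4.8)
The plan is to mimic the Alon--Krivelevich--Vu argument for concentration of the largest eigenvalue~\cite{Alon2002}: view $A \mapsto \lambda_1(A)$ as a function on the product probability space of the independent entries of $A$ (the entries on and above the diagonal; in the $G(m,1/4)$ instance these are i.i.d.\ Bernoulli, hence $\{0,1\}$-valued) and feed it into Talagrand's inequality (Theorem~\ref{thm:tal}). What makes this possible is the Rayleigh-quotient description $\lambda_1(A) = \max_{\lVert v \rVert_2 = 1} v^{T} A v$, so that a single unit vector $v$ can certify a lower bound on $\lambda_1(A)$ simultaneously for a whole family of matrices $A$.

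First I would establish an upper-tail estimate at an \emph{arbitrary} threshold. Fix $s \in \mathds{R}$, set $\mathcal{A}_s := \{A : \lambda_1(A) \le s\}$, and let $A'$ be any matrix with $\lambda_1(A') \ge s + t$, with unit top eigenvector $v'$. I would take as Talagrand witness the vector $\alpha$ indexed by the independent coordinates, with $\alpha_{ij} = 2 v'_i v'_j$ for $i < j$ (and $\alpha_{ii} = (v'_i)^2$ if the diagonal is random as well); a short computation from $\sum_i (v'_i)^2 = 1$ gives $\lVert \alpha \rVert_2^2 \le 2$. For any $A \in \mathcal{A}_s$, since $\lambda_1(A) \ge (v')^{T} A v'$,
\begin{align*}
t \;\le\; \lambda_1(A') - \lambda_1(A) \;\le\; (v')^{T} (A' - A) v' \;=\; \sum_{i,j} (a'_{ij} - a_{ij}) v'_i v'_j \;\le\; \sum_{\{i,j\}:\, a_{ij} \neq a'_{ij}} |\alpha_{ij}|,
\end{align*}
where the last step uses $|a'_{ij} - a_{ij}| \le 1$ on the coordinates where $A$ and $A'$ differ and folds the symmetry $a_{ij}=a_{ji}$ into the factor $2$. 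Hence the Talagrand (convex) distance from $A'$ to $\mathcal{A}_s$ is at least $t/\lVert\alpha\rVert_2 \ge t/\sqrt{2}$, so Theorem~\ref{thm:tal} applied with distance parameter $t/\sqrt{2}$ gives $\mathbb{P}[\lambda_1 \le s]\;\mathbb{P}[\lambda_1 \ge s+t] \le e^{-(t/\sqrt{2})^2/4} = e^{-t^2/8}$, valid for every $s$.

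Both tails around the median then fall out of this one family of inequalities, which neatly avoids a separate (and more delicate) certification for the lower tail. Taking $s = \mM$ and using $\mathbb{P}[\lambda_1 \le \mM] \ge 1/2$ gives $\mathbb{P}[\lambda_1 \ge \mM + t] \le 2 e^{-t^2/8}$; taking $s = \mM - t$ and using $\mathbb{P}[\lambda_1 \ge \mM] \ge 1/2$ gives $\mathbb{P}[\lambda_1 \le \mM - t] \le 2 e^{-t^2/8}$. A union bound over $\{\lambda_1 \ge \mM + t\}$ and $\{\lambda_1 \le \mM - t\}$ yields $\mathbb{P}[\,|\lambda_1(A) - \mM| \ge t\,] \le 4 e^{-t^2/8}$, as claimed.

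I expect the main obstacle to be the certification step itself: recognizing that the right Talagrand witness is the (rescaled) self-outer-product $v' (v')^{T}$ of the leading eigenvector and controlling its $\ell_2$-norm. The one delicate point is the symmetric structure of $A$ --- flipping one independent coordinate $\{i,j\}$ moves \emph{both} $a_{ij}$ and $a_{ji}$, which forces the factor $2$ in $\alpha_{ij}$ and is exactly why the exponent comes out as $t^2/8$ rather than a faster rate; the diagonal coordinates are harmless (deterministic in the adjacency-matrix reading, and even when random they only improve the bound $\lVert\alpha\rVert_2^2 \le 2$), and the inequality $|a_{ij}-a'_{ij}| \le 1$ on differing coordinates is precisely the $\{0,1\}$-valued (Bernoulli) hypothesis.
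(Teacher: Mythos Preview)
Your proposal is correct and follows essentially the same Alon--Krivelevich--Vu/Talagrand argument as the paper: the same Rayleigh-quotient certification, the same witness vector $\alpha_{ij} = 2v'_iv'_j$ (the paper writes it as $v_i^Tv_j + v_j^Tv_i$), the same $\lVert\alpha\rVert_2^2 \le 2$ bound, and the same median splitting for the two tails. Your derivation of the lower tail via the general threshold $s = \mM - t$ is slightly more explicit than the paper's ``Accordingly, we also have that,'' but the content is identical.
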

	\begin{proof}\footnote{Our proof technique follows the technique outlined in~\cite{Alon2002}}
		Given a real symmetric matrix $A=(a_{ij}) \in R^{m \times m}$ and a non-zero vetor $x$, the \emph{Rayleigh Quotient} $\mR(A,x)$ is defined as
		\begin{align*}
		\mR(A,x) = \frac{x^T A x}{x^T x}
		\end{align*}
		Given the eigenvalues of $A$ in decreasing order as $\lambda_1(A) \geq \dots \geq \lambda_m(A)$, we know that $\mR(A,x) \in [\lambda_m(A), \lambda_1(a)]$. 
		It is well known that $\mR(A,x)$ attains its maximum value at $\lambda_1(A)$
		when $x=v$, where $v$ is the eigenvector corresponding to $\lambda_1(A)$.
		Therefore, we have
		\begin{align}
		\label{eqn:ray}
		\lambda_1(A) = \mR(A,v) = \frac{v^T A v}{v^T v}
		\end{align}
		In our proof, we omit the constant factor $v^T v$ and normalize the eigenvector $v$, hence $\left\lVert v  \right\rVert =1$.
		
		Consider the product space $\Omega$ of entries $a_{ij}$, $1 \leq i \leq j \leq m$.
		Let $t,\mM$ be real numbers, where $t>0$ and $\mM$ is the median of $\lambda_1(A)$.
		Let $\mA$ be the set of matrices $A=(a_{ij}) \in \Omega$, for which 
		$\lambda_1(A) \leq \mM$. By definition, $\mathbb{P}[\mA] \geq 1/2$. Additionally, let 
		$\mB$ be the set of matrices $B=(b_{ij}) \in \Omega$, for which $\lambda_1(B) \geq \mM+t$. Using Rayleigh's equation~(\ref{eqn:ray}) for $\lambda_1(A)$, we rewrite it as a summation of diagonal and off-diagonal terms
		\begin{align*}
		\lambda_1(A) = \mR(A,v) = v^T A v = 
		\underbrace{\sum_{1\leq i <j \leq m} ( v^T_i v_j + v^T_j v_i )a_{ij}}_{	\text{off-diagonal}} + \\
		\underbrace{\sum_{i=1}^{m} v^T_i v_i a_{ii} }_{\text{diagonal}} \leq \mM
		\end{align*}
		and
		\begin{align*}
		\lambda_1(B) = \mR(B,v) = v^T B v = 
		\underbrace{\sum_{1\leq i <j \leq m} ( v^T_i v_j + v^T_j v_i )b_{ij}}_{	\text{off-diagonal}} + \\
		\underbrace{\sum_{i=1}^{m} v^T_i v_i b_{ii}}_{\text{diagonal}} \geq \mM+t
		\end{align*}
		In order to apply Talagrand's inequality (Theorem~\ref{thm:tal}), we set a real vector $\alpha = (\alpha_{ij})_{1 \leq i \leq j \leq m}$ as follows: 
		For off-diagonal $(1 \leq i < j \leq m)$ terms, we set
		\begin{align*}
		\alpha_{ij} = (v^T_i v_j + v^T_j v_i)
		\end{align*}
		For diagonal $(1 \leq i \leq m)$ terms, we set
		\begin{align*}
		\alpha_{ii} = v^T_i v_i 
		\end{align*}
		We proceed by first proving two claims that will be used in this proof.
		\begin{claim}
			\begin{align*}
			\sum_{1 \leq i \leq j \leq m} \alpha_{ij}^2 \leq 2		
			\end{align*}
		\end{claim}
		\begin{proof}
			By definition,
			\begin{align*}
			\sum_{1 \leq i \leq j \leq m} \alpha_{ij}^2 &= 
			\sum_{i=1}^{m} (v^T_i v_i)^2 + 
			\sum_{1 \leq i <j \leq m} (v^T_i v_j + v^T_j v_i)^2 \\
			&< 2 \left( \sum_{i=1}^{m} {v^T_i}^2 \right)
			\left( \sum_{i=1}^{m} {v_i}^2 \right)\\ 
			&= 2 \quad\text{(since $\left\lVert v  \right\rVert =1$)}
			\end{align*}
			This completes the proof.
		\end{proof}
		
		\begin{claim}
			For every $A \in \mA$,
			\begin{align*}
			\sum_{ 1 \leq i \leq j \leq m; a_{ij} \neq b_{ij} } |\alpha_{ij}| \geq t
			\end{align*}
		\end{claim}
		\begin{proof}
			Recall that for matrix $A \in \mA$, $v$ is the eigenvector with unit-norm corresponding to $\lambda_1(A)$.
			We know that,
			\begin{align*}
			v^TAv \leq \lambda_1(A) \leq \mM \quad\text{(from set $\mA$)}
			\end{align*}
			while,
			\begin{align*}
			v^TBv \geq \lambda_1(B) \geq \mM +t \quad\text{(from set $\conj \mA_t$)}
			\end{align*}
			We observe that the entries in affinity matrices $A$ and $B$, are \emph{affinity scores} in interval $[0,1]$. Therefore, we have 
			$| b_{ij} - a_{ij}| \leq 1$, for all $1 \leq i,j \leq m$.
			For ease of notation, let us denote by $P$, the set of ordered pairs $ij$
			with $1 \i, j \leq m$ where $a_{ij} \neq b_{ij}$. Then,
			\begin{align*}
			t &\leq v^T(B-A)v 
			= \sum_{i,j\in P} (b_{ij} - a_{ij})v^T_i v_j\\
			&\leq \sum_{i,j\in P} |v^T_i| |v_j|
			\leq \sum_{i,j\in P} |\alpha_{ij}|
			\end{align*}
			This completes the proof.
		\end{proof}
		By the above two claims, we get the following form:
		\begin{align*}
		\sum_{x_i \neq y_i} |\alpha_i| \geq t 
		>  \left( \frac{t}{\sqrt{2}} \right)
		\left( \sum_{1\leq i\leq j\leq m} \alpha_{ij}^2 \right)^{1/2}
		\end{align*}
		Applying Talagrand's inequality, we get
		\begin{align*}
		\mathbb{P}[ \lambda_1(A) \leq \mM] \mathbb{P}[ \lambda_1(B) \geq \mM+t]
		\leq e^{ \frac{-1}{4}  \left( \frac{t}{\sqrt{2}}\right)^2    }
		\leq e^{-t^2/8}
		\end{align*}
		Since $\mM$ is the median of $\lambda_1(A)$, by definition 
		$\mathbb{P}[ \lambda_1(A) \leq \mM] \geq 1/2$, then
		\begin{align}
		\label{p1}
		\mathbb{P}[ \lambda_1(A) \geq \mM +t] \leq 2e^{-t^2/8}
		\end{align}
		Accordingly, we also have that,
		\begin{align}
		\label{p2}
		\mathbb{P}[ \lambda_1(A) \leq \mM -t] \leq 2e^{-t^2/8}
		\end{align}
		Combining results~(\ref{p1}) and (\ref{p2}), we have
		\begin{align}
		\label{p3}
		\mathbb{P}[ |\lambda_1(A) - \mM| \geq t] \leq 4e^{-t^2/8}
		\end{align}
		This completes the proof.
	\end{proof}

	\subsection{Proof of Lemma 1}
		For ease of understanding, we drop the $(A)$ as it is obvious from context. Let $\lambda_i$ be the $i$-th eigenvalue of $A$, and let $x_i \neq 0$ be its corresponding eigenvector. From $A x_i = \lambda_i x_i$, we have
		\begin{align*}
		A X_i = \lambda_i X_i, \text{\hspace{1em}where } 
		X_i:= \begin{bmatrix}[c|c|c]
		x_i & \dots & x_i
		\end{bmatrix}
		\in \mM_n \setminus \{0\}
		\end{align*} 
		It follows,
		\begin{align*}
		|\lambda_i|  \vertiii{X_i} \vertiii{X_i}  = \vertiii{\lambda_i X_i}    
		=  \vertiii{AX_i} \leq \vertiii{A} \vertiii{X_i} .
		\end{align*} 
		
		As $\vertiii{X}$ is non-negative, we get $|\lambda_i| \leq \vertiii{A}$.
		Thus, every eigenvalue of $A$ is upper bounded by the matrix norm $\vertiii{A}$.
		Applying the triangle inequality, we get that $\delta_i(A) = |\lambda_{i+1}(A) - \lambda_i(A)| \leq 2 \vertiii{A}$, which completes the proof.\qed
	\section{Example}

	\begin{figure*}[h]
	\makebox[\linewidth]{
	  \subfigure[]{%
	  \label{fig:match3}%
	    \includegraphics[width=75mm,height=33mm]{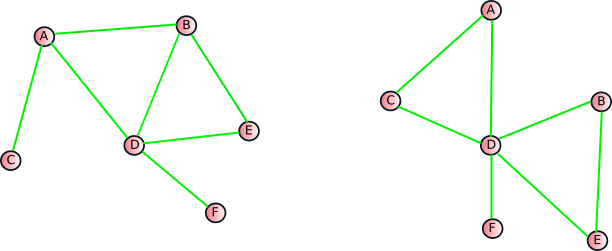}}%
	    
	  \qquad
	  \subfigure[]{%
	  \label{fig:match2}%
	    \includegraphics[width=75mm,height=33mm]{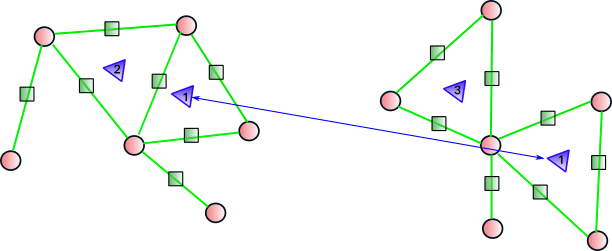}}%
	  \qquad
	  }
	\makebox[\linewidth]{
	\subfigure[]{%
	  \label{fig:match1}%
	    \includegraphics[width=75mm,height=33mm]{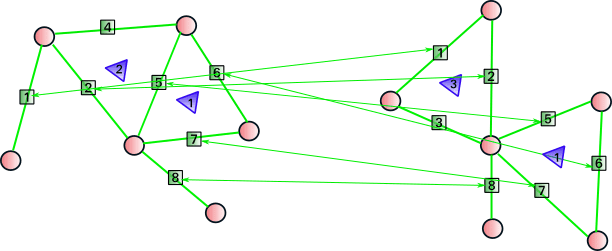}}%
	  \qquad
	  \subfigure[]{%
	  \label{fig:match0}%
	    \includegraphics[width=75mm,height=33mm]{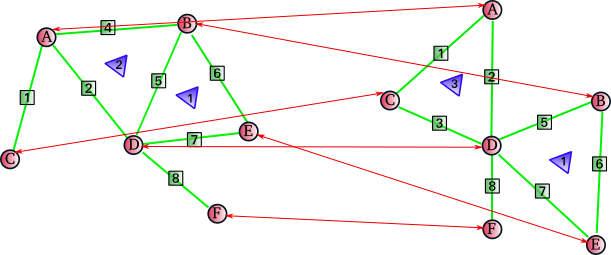}}%
	    }
	  \vspace{-5mm}
	  \caption{Matching corresponding 3-clique, 2-cliques and 1-cliques (b)-(d) respectively in a pair of (a) Erd\H{o}s-R\'{e}nyi graph.}
	  \label{fig:Example}
	\end{figure*}
\begin{table*}[h]
	{\scriptsize
		\hfill{}
		\begin{tabular}{|l||l|l|l|l|}
			\hline
			\multicolumn{1}{|l||}{\textbf{Cliques}} & \multicolumn{2}{|l|}{\textbf{Graph 1 ($G_1$)}} & \multicolumn{2}{|l|}{\textbf{Graph 2 ($G_2$)}}\\
			
			\cline{1-5}
			& Clique & Neighbours & Clique & Neighbours \\
			\cline{2-5}
			\hline
			\hline
			3-Cliques & \{A,B,D\} & \{A\}, \{B\}, \{D\}, \{A,B\}, \{A,D\}, \{B,D\}, \{B,D,E\} & \{A,C,D\} & \{A\}, \{C\}, \{D\}, \{A,C\}, \{A,D\}, \{C,D\}, \{B,D,E\} \\
			 & \{B,D,E\} & \{B\}, \{D\}, \{E\}, \{B,D\}, \{B,E\}, \{D,E\}, \{A,B,D\} & \{B,D,E\} & \{B\}, \{D\}, \{E\}, \{B,D\}, \{B,E\}, \{D,E\}, \{A,C,D\} \\
			\hline
			2-Cliques & \{A,B\} & \{A\}, \{B\}, \{A,B,D\}  & \{A,C\} & \{A\}, \{C\}, \{A,C,D\}\\
			& \{A,C\} & \{A\}, \{C\} & \{A,D\} & \{A\}, \{D\}, \{A,C,D\}\\
			& \{A,D\} & \{A\}, \{D\}, \{A,B,D\} & \{B,D\} & \{B\}, \{D\}, \{B,D,E\}\\
			& \{B,D\} & \{B\}, \{D\}, \{A,B,D\}, \{B,D,E\} & \{B,E\} & \{B\}, \{E\}, \{B,D,E\}\\
			& \{B,E\} & \{B\}, \{E\}, \{B,D,E\} & \{C,D\} & \{C\}, \{D\}, \{A,C,D\}\\
			& \{D,E\} & \{D\}, \{E\}, \{B,D,E\} & \{D,E\} & \{D\}, \{E\}, \{B,D,E\}\\
			& \{D,F\} & \{D\}, \{F\} & \{D,F\} & \{D\}, \{F\}\\
			\hline
			1-Cliques & \{A\} & \{A,B\}, \{A,C\}, \{A,D\}, \{A,B,D\} & \{A\} & \{A,C\}, \{A,D\}, \{A,C,D\}\\
			& \{B\} & \{A,B\}, \{B,D\}, \{B,E\}, \{A,B,D\}, \{B,D,E\} & \{B\} & \{B,D\}, \{B,E\}, \{B,D,E\}\\
			& \{C\} & \{A,C\} & \{C\} & \{A,C\}, \{C,D\}, \{A,C,D\} \\
			& \{D\} & \{A,D\}, \{B,D\}, \{D,E\}, \{D,F\}, \{A,B,D\}, & \{D\} & \{A,D\}, \{B,D\}, \{C,D\}, \{D,E\}, \{D,F\}, \{A,C,D\},\\
			& 	& \{B,D,E\} & 	&  \{B,D,E\}\\
			& \{E\} & \{B,E\}, \{D,E\}, \{B,D,E\} & \{E\} & \{B,E\}, \{D,E\}, \{B,D,E\}\\
			& \{F\} & \{D,F\} & \{F\} & \{D,F\}\\
			\hline
	\end{tabular}}
	\hfill{}
	\vspace{5mm}
	\caption{Neighbourhood of $3,2,1$-cliques of graphs $G_1$ and $G_2$ shown in Figure~\ref{fig:Example}.}
	\label{tb:exampleN}
\end{table*}

	\begin{table*}[h]
	{
		\hfill{}
		\begin{tabular}{|l||l|l|}
			\hline
			\multicolumn{1}{|l||}{\textbf{Cliques}} & \multicolumn{1}{|l|}{\textbf{Graph 1 ($G_1$)}} & \multicolumn{1}{|l|}{\textbf{Graph 2 ($G_2$)}}\\
			
			\cline{1-3}
			\hline
			\hline
			3-Cliques & \{B,D,E\} $\rightarrow$ 1 & \{B,D,E\} $\rightarrow$ 1\\
			\hline
			2-Cliques & \{A,C\} $\rightarrow$ 1  & \{A,C\} $\rightarrow$ 1\\
			& \{A,D\} $\rightarrow$ 2 & \{A,D\} $\rightarrow$ 2 \\
			& \{B,D\} $\rightarrow$ 5 & \{B,D\} $\rightarrow$ 5\\
			& \{B,E\} $\rightarrow$ 6 & \{B,E\} $\rightarrow$ 6\\
			& \{D,E\} $\rightarrow$ 7 & \{D,E\} $\rightarrow$ 7 \\
			& \{D,F\} $\rightarrow$ 8 & \{D,F\} $\rightarrow$ 8 \\
			\hline
			1-Cliques & \{A\}, \{B\}, \{C\} & \{A\}, \{B\}, \{C\} \\
			& \{D\}, \{E\}, \{F\} & \{D\}, \{E\}, \{F\}\\
			\hline
	\end{tabular}}
	\hfill{}
	\vspace{5mm}
	\caption{Matchings of $3,2,1$-cliques of graphs $G_1$ and $G_2$ shown in Figure~\ref{fig:Example}.}
	\label{tb:exampleM}
\end{table*}
We explained our method with the help of an example shown in Figure~\ref{fig:Example}, Table~\ref{tb:exampleN} and Table~\ref{tb:exampleM} for a better understanding. We consider two random graphs 
$G_1$ and $G_2$ with $6$ vertices each in Figure~\ref{fig:match3} for which we perform higher-order matching from $3$-cliques~\ref{fig:match2} to $1$-cliques~\ref{fig:match0}. 
For a higher-order matching, we take the neighbourhood of a barycenter of a clique as the barycenters of the other cliques it is connected to. Thus, we place additional nodes 
of different order in the neighbourhood of each clique in addition to the same order cliques. This information would help the cliques to have more accurate matches. The 
neighbours and matchings of $3$-cliques, $2$-cliques and $1$-cliques are mentioned in the Table~\ref{tb:exampleN} and Table~\ref{tb:exampleM} for both the graphs $G_1$ and $G_2$ respectively. And, the matchings shown in Figure
~\ref{fig:match2},~\ref{fig:match1} and \ref{fig:match0} are based on having the same labels for each barycenter in graph $G_1$ and $G_2$.

	\section{Experiments}
\subsection{Setup}
	We compare the performance of our proposed method with various other matching algorithms on synthetic and real world datasets. The real world datasets are categorized in 
	Table~\ref{tab:datasets}. Here, $N$ is the total number of samples with $n$ landmark points in each image to be matched. We represent random graphs on images in Figure~\ref{fig:randomGraphs} for better understanding and visualization of random graphs for our experiments. Matchings of two images for real world datasets (Table~\ref{tab:datasets}) are shown in Figure~\ref{fig:Matching}.
	\begin{table}[h]
		\centering
		\scriptsize
		\begin{tabular}{|l||l|l|}
			\hline
			\textbf{Groups} & \textbf{Dataset} & \textbf{$N \times n$} \\
			\hline
			\hline
			\multirow{2}{*}{Video Frames}
			& \href{http://vasc.ri.cmu.edu/idb/html/motion/house/index.html}{CMU House} & $111 \times 30$\\
			& \href{http://vasc.ri.cmu.edu/idb/html/motion/hotel/index.html}{CMU Hotel} & $101 \times 30$\\ 
			\hline
			\multirow{2}{*}{Affine}
			& Horse-Rot~\cite{caetano2009learning} & $200 \times 35$\\
			& Horse-Shear~\cite{caetano2009learning} & $200 \times 35$\\ 
			\hline
			\multirow{2}{*}{Occluded}
			& Books~\cite{pachauri2013solving} & $20 \times 34$\\
			& Building~\cite{pachauri2013solving} & $16 \times 28$\\ 
			\hline
			\multirow{2}{*}{Non-Affine}
			& Magazine~\cite{jiang2011linear} & $30 \times 30$\\
			& Butterfly~\cite{jiang2011linear} & $30 \times 19$\\ 
			\hline
			\multirow{2}{*}{Object Matching}
			& Car~\cite{cho2013learning} & $40 \times 10$\\
			& Bike~\cite{cho2013learning} & $40 \times 10$\\ 
			\hline
		\end{tabular}
		\vspace{5mm}
		\caption{Datasets used, where $N$ is the number of samples and $n$ is the dimensionality of each sample.}
		\label{tab:datasets}
	\end{table}
	\begin{figure*}[h]
	\makebox[\linewidth]{
	  \subfigure[]{%
	  \label{fig:random1}%
	    \includegraphics[trim={0 0 0 1.65cm},clip,width=52mm,height=20.5mm]{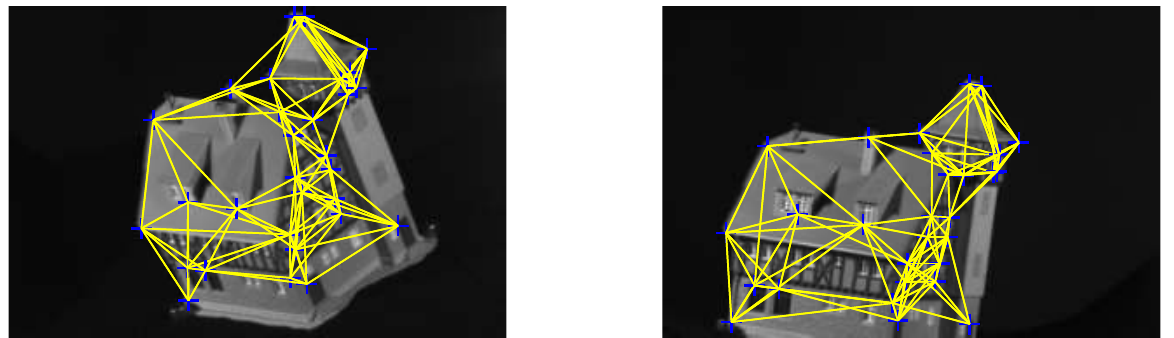}}%
	    
	  \qquad
	  \subfigure[]{%
	  \label{fig:random2}%
	    \includegraphics[trim={0 0 0 1.72cm},clip,width=52mm,height=20.5mm]{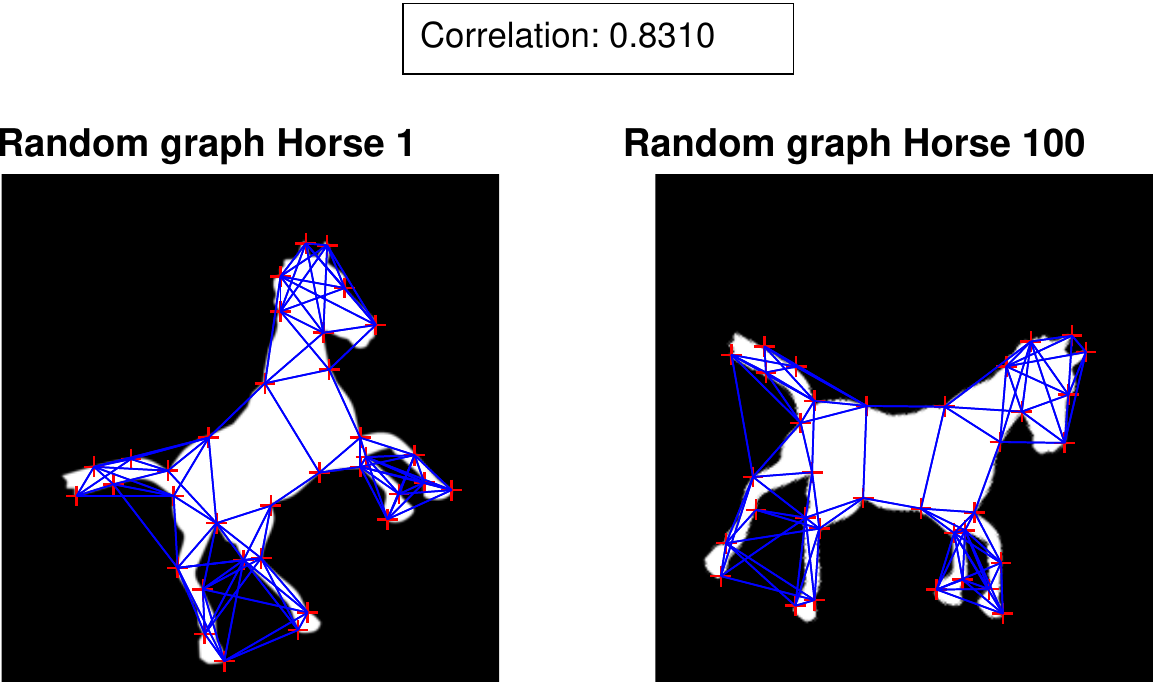}}%
	  \qquad
	  \subfigure[]{%
	  \label{fig:random3}%
	    \includegraphics[trim={0 0 0 1.65cm},clip,width=52mm,height=20.5mm]{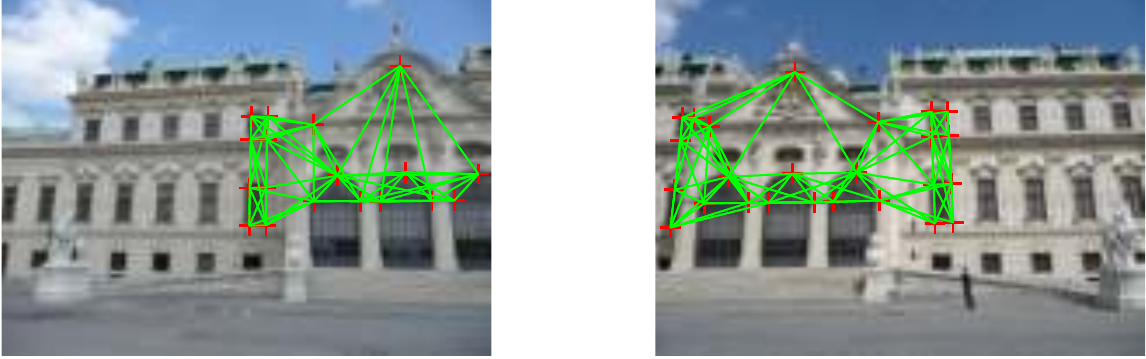}}%
	    }
	  \vspace{-5mm}
	  \caption{Pair of Erd\H{o}s-R\'{e}nyi graphs on \emph{CMU House}, \emph{Horse Rotate}, and \emph{Building} datasets.}
	  \label{fig:randomGraphs}
	\end{figure*}

    \begin{figure*}
\centering
\makebox[\linewidth]{
    \subfigure[]{%
    \label{fig:zero11}%
    \includegraphics[width=27mm,height=27mm]{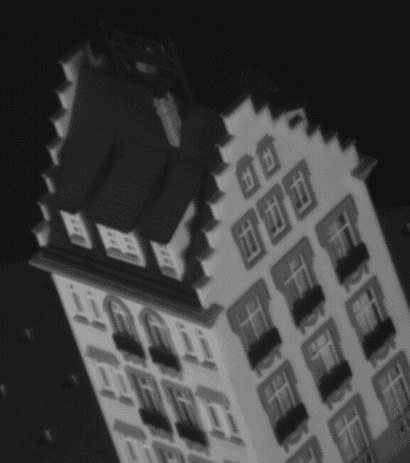}}%
    \qquad
    \subfigure[]{%
    \label{fig:first11}%
    \includegraphics[width=27mm,height=27mm]{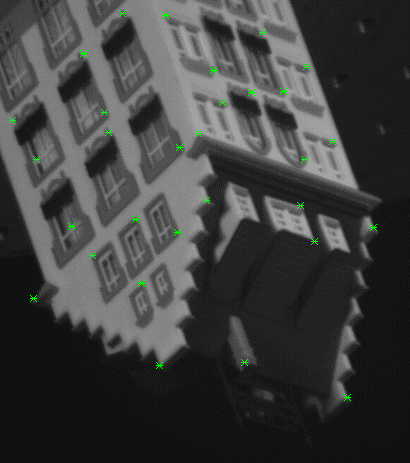}}%
    \qquad
    \subfigure[]{%
    \label{fig:second11}%
    \includegraphics[width=27mm,height=27mm]{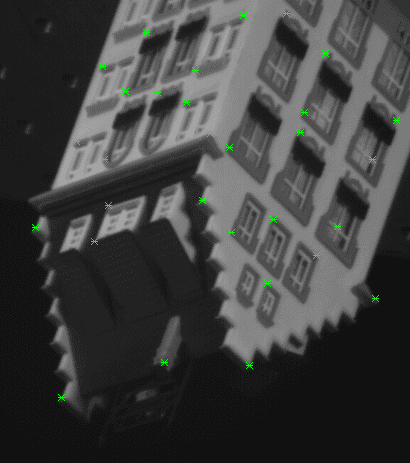}}%
    \qquad
    \subfigure[]{%
    \label{fig:third11}%
    \includegraphics[width=27mm,height=27mm]{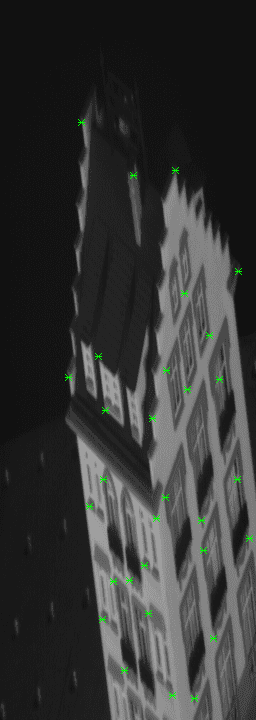}}%
    \qquad
    \subfigure[]{%
    \label{fig:fourth11}%
    \includegraphics[width=27mm,height=27mm]{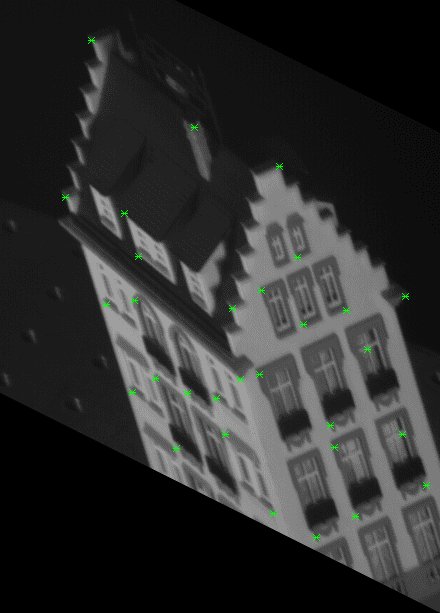}}%
    }
    \vspace{-5mm}
    \caption{(a) Original \emph{Hotel} frame, (b)--(e) four transformations on hotel frame: rotation, reflection, scaling, and shear (green markers show true matching case with the original frame (a)).}
    \label{fig:HotelTransform}

\end{figure*}

    \begin{figure*}
    \centering
    \makebox[\linewidth]{
    \subfigure[]{%
    \label{fig:trans1}%
    \includegraphics[width=52mm,height=42mm]{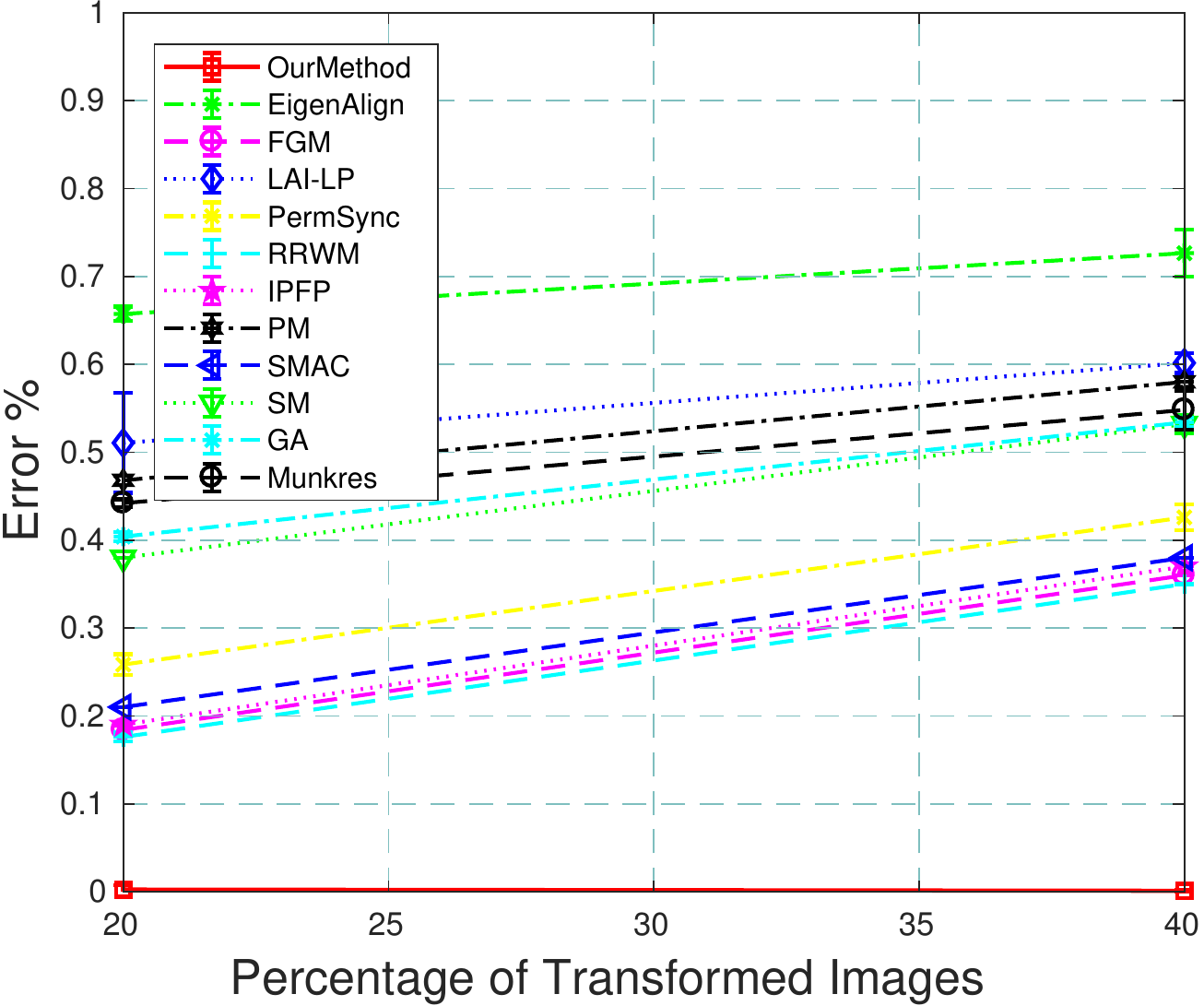}}%
    \qquad
    \subfigure[]{%
    \label{fig:trans2}%
    \includegraphics[width=52mm,height=42mm]{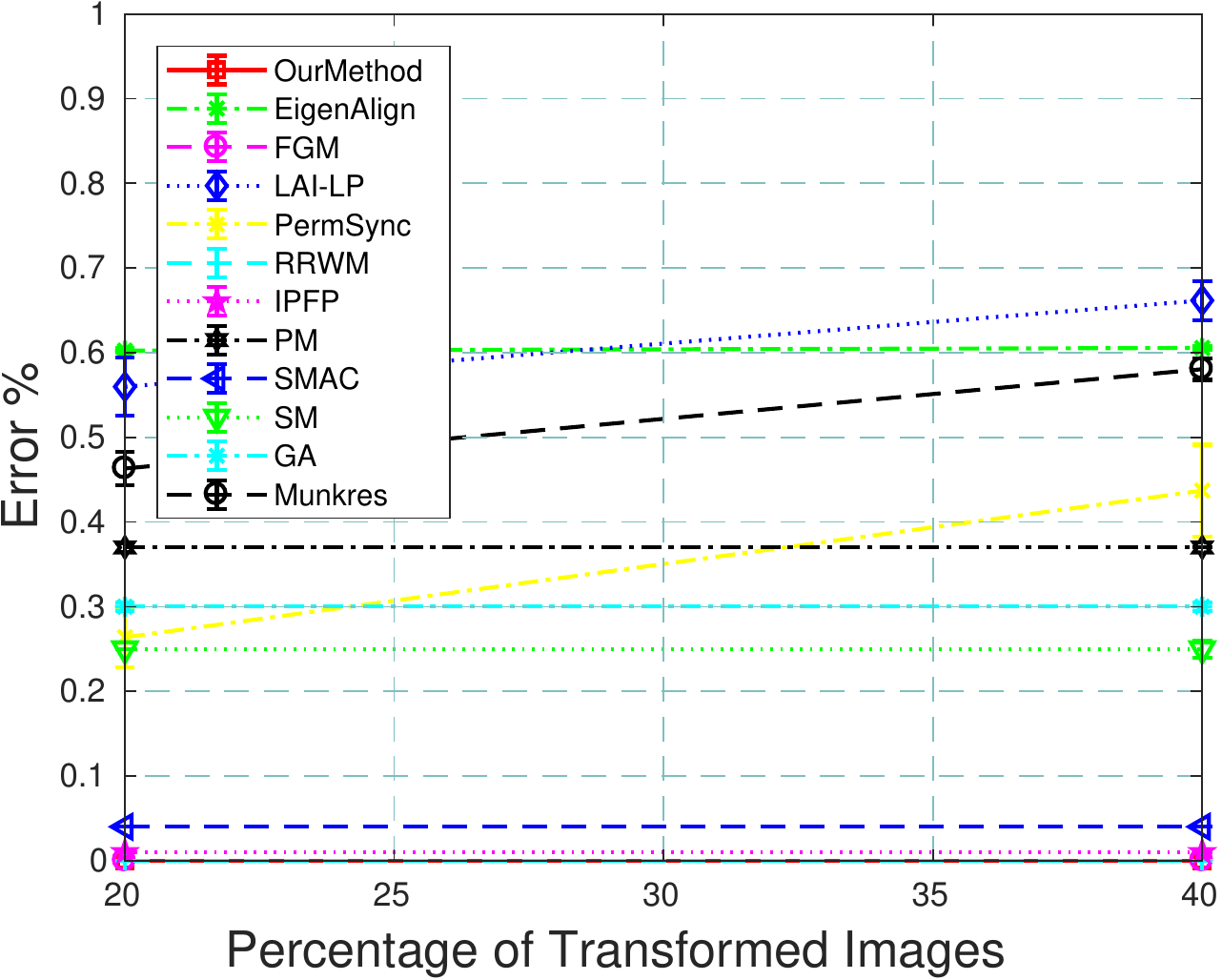}}%
    \qquad
    \subfigure[]{%
    \label{fig:trans3}%
    \includegraphics[width=52mm,height=42mm]{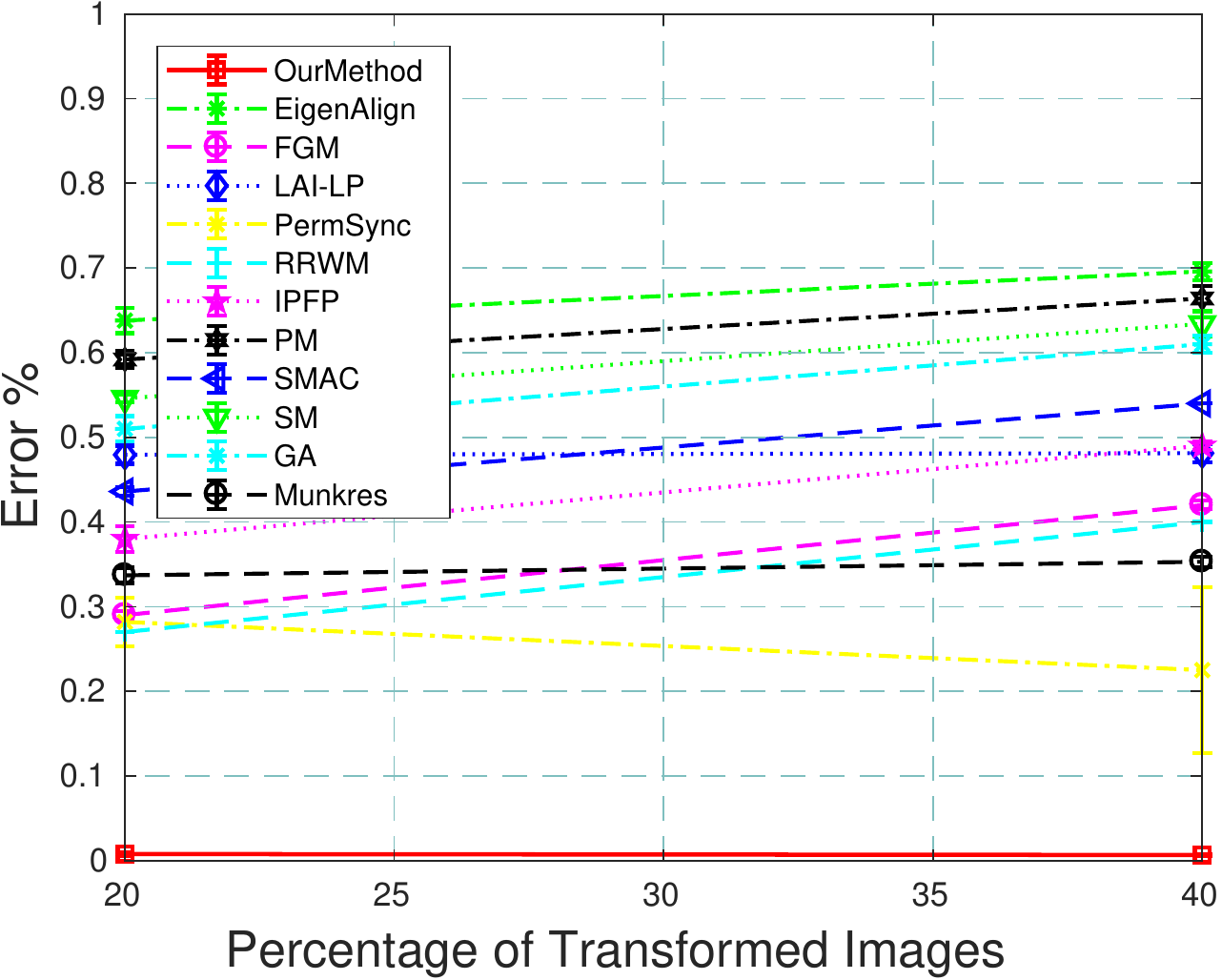}}%
    }
    \makebox[\linewidth]{
    \subfigure[]{%
    \label{fig:trans4}%
    \includegraphics[width=52mm,height=42mm]{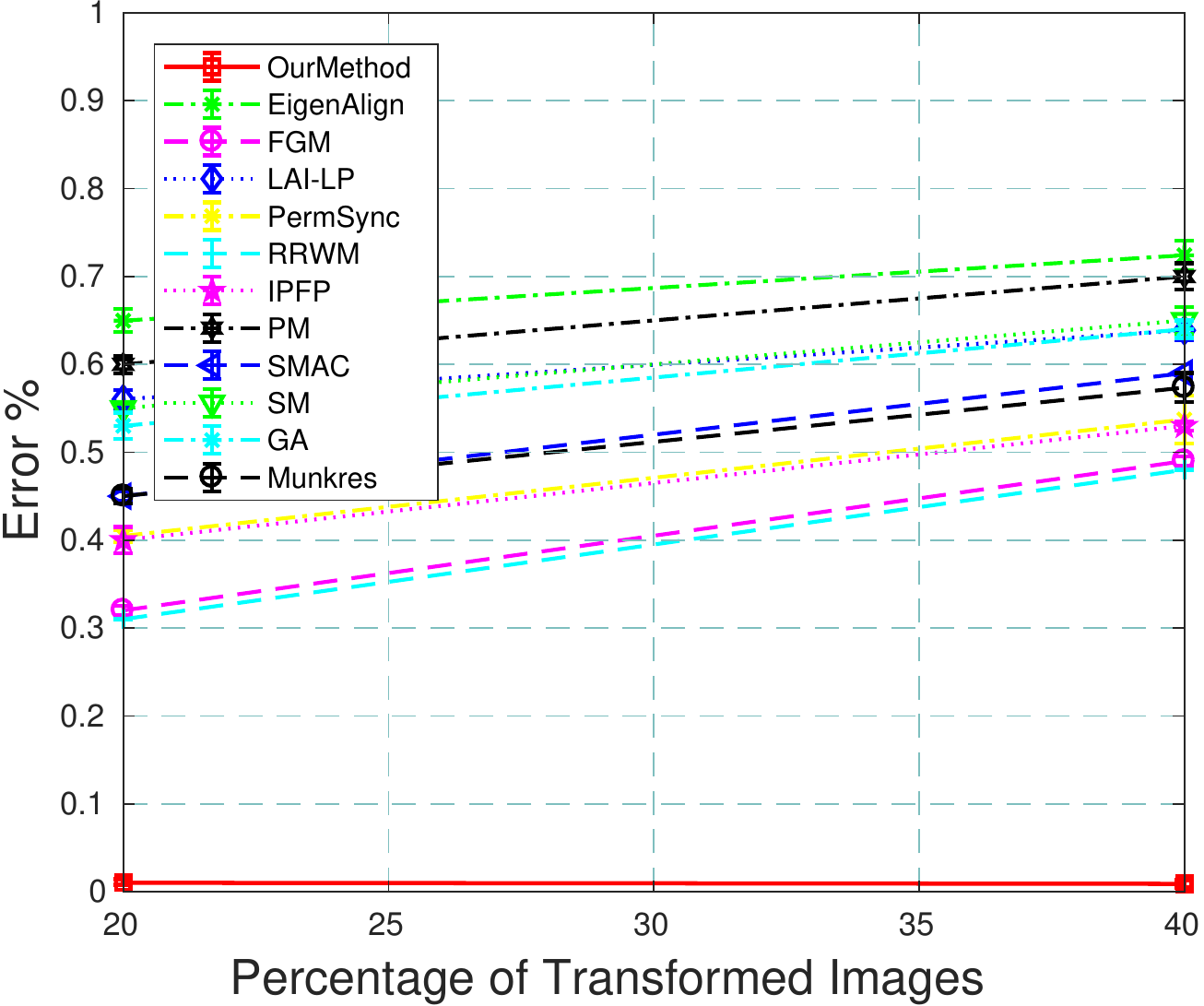}}%
        \qquad
    \subfigure[]{%
    \label{fig:trans5}%
    \includegraphics[width=52mm,height=42mm]{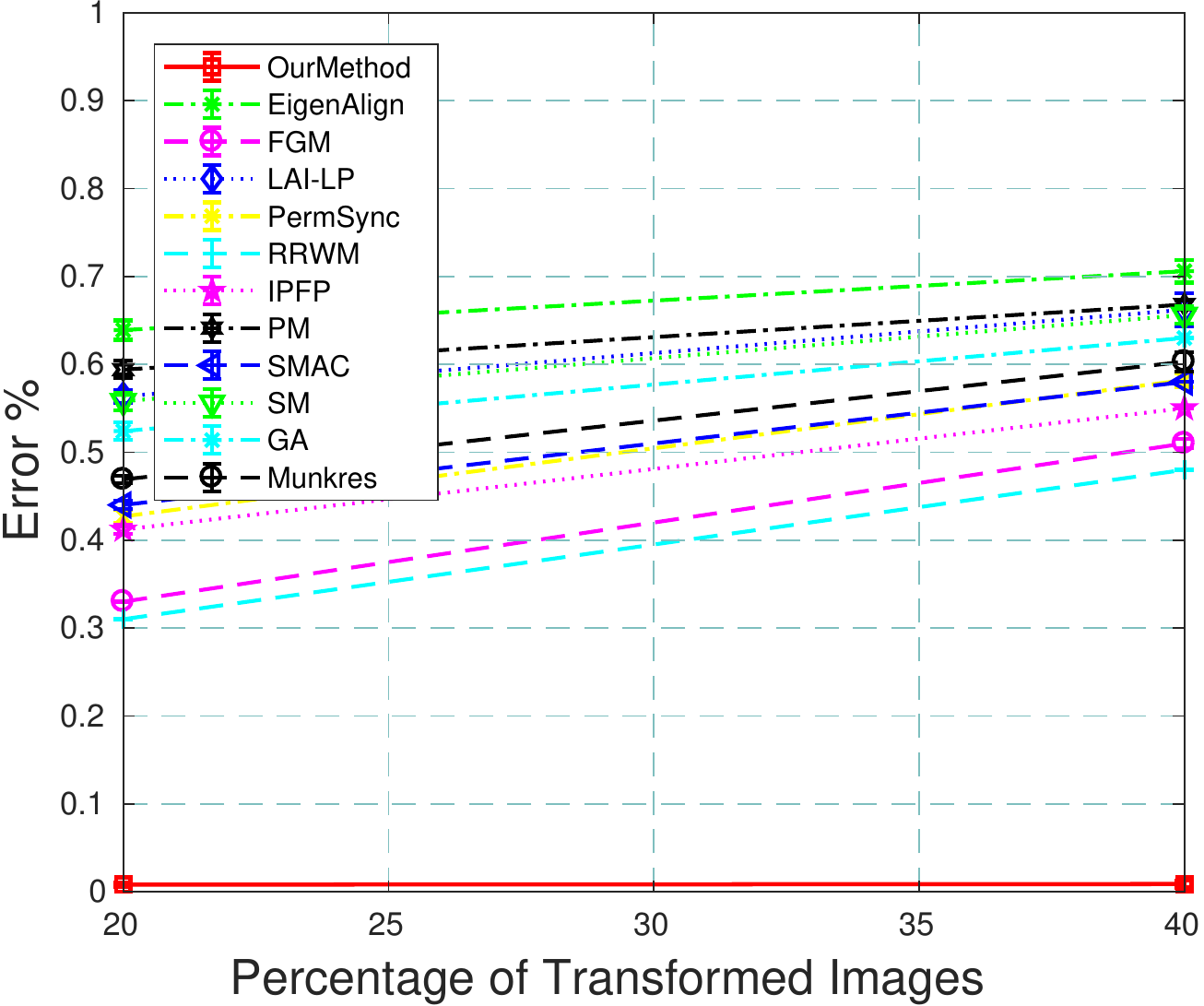}}%
        \qquad
    \subfigure[]{%
    \label{fig:trans6}%
    \includegraphics[width=52mm,height=42mm]{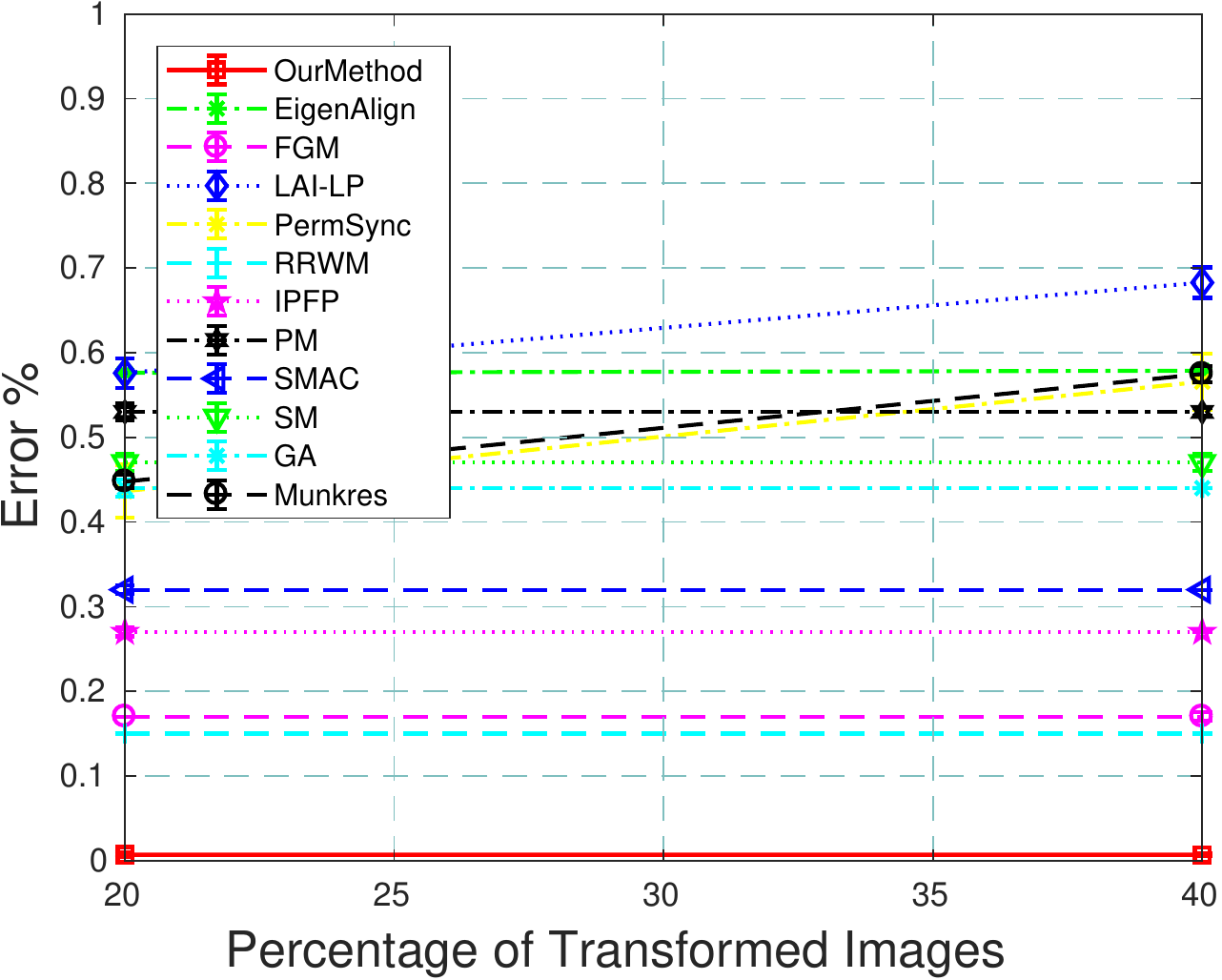}}%
    }
    \makebox[\linewidth]{
    \subfigure[]{%
    \label{fig:trans7}%
    \includegraphics[width=52mm,height=42mm]{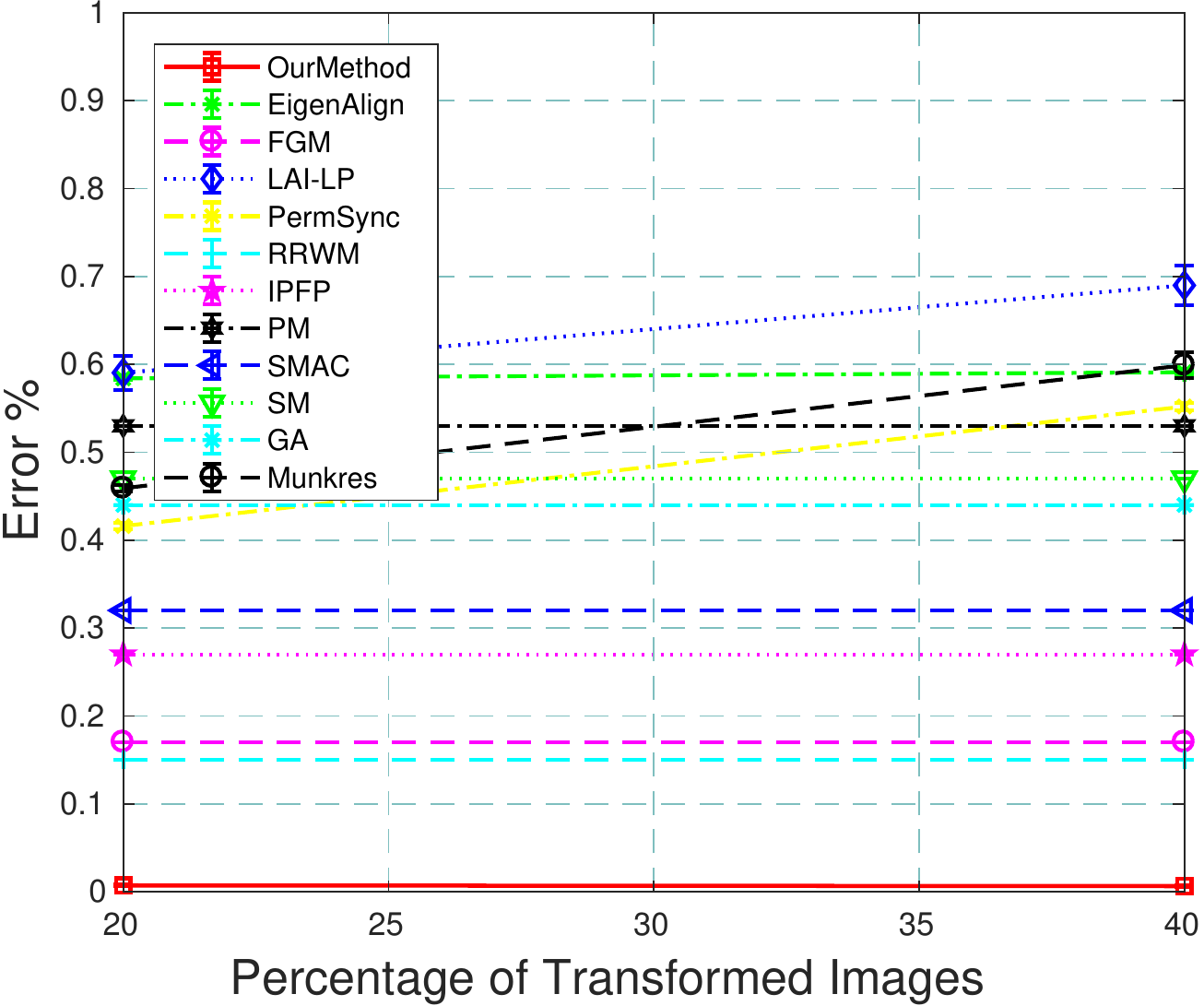}}%
        \qquad
    \subfigure[]{%
    \label{fig:trans8}%
    \includegraphics[width=52mm,height=42mm]{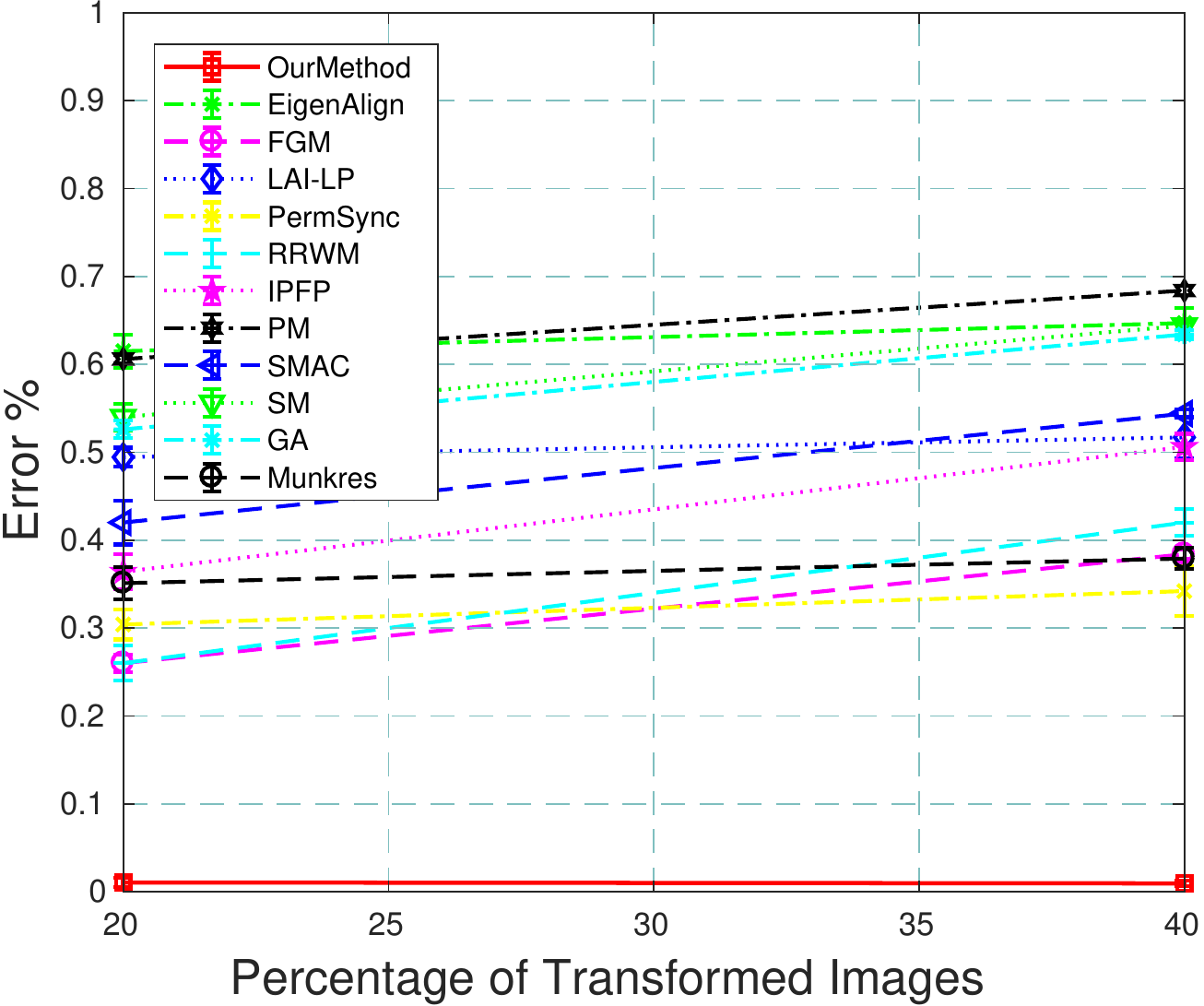}}%
        \qquad
    \subfigure[]{%
    \label{fig:trans9}%
    \includegraphics[width=52mm,height=42mm]{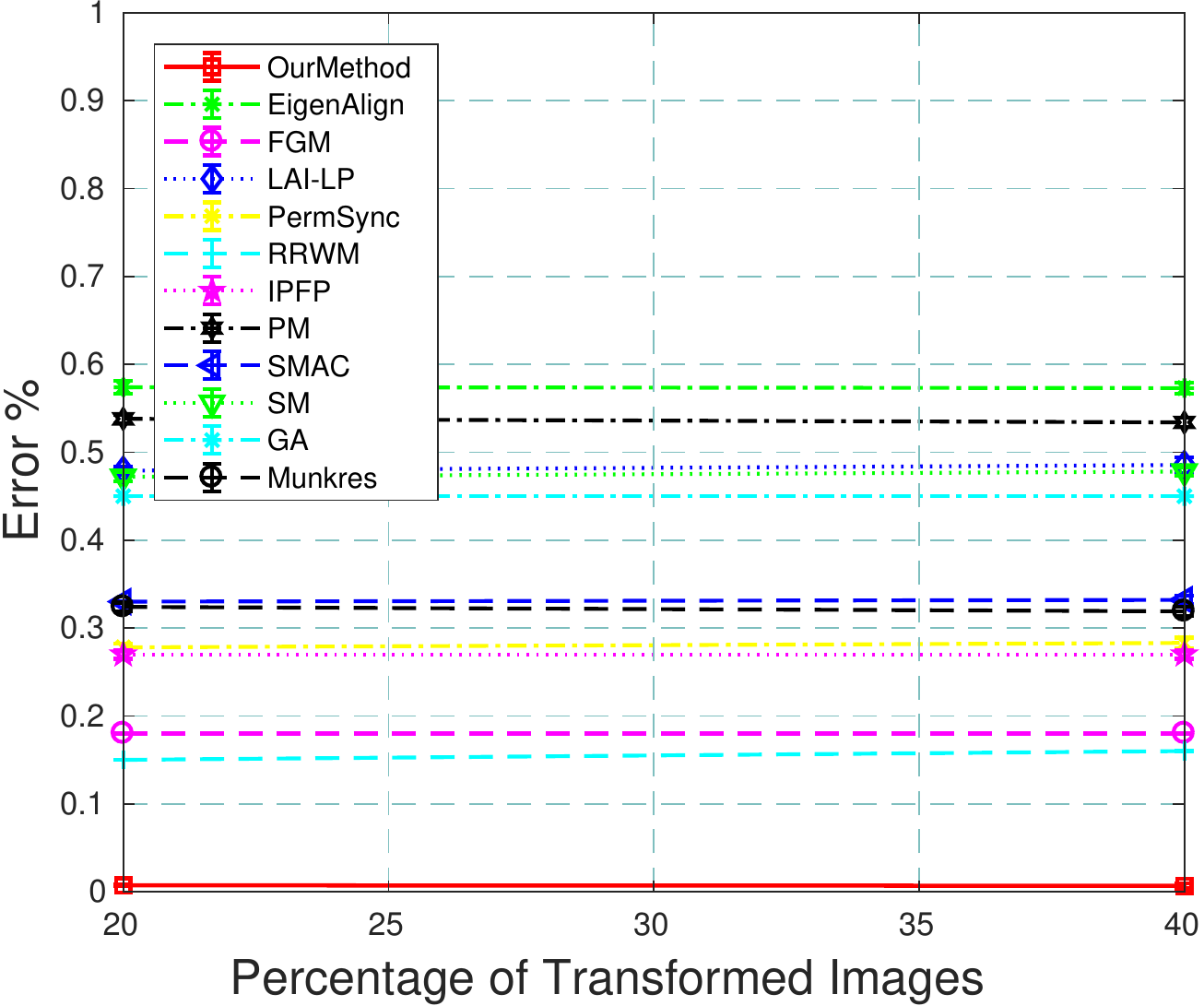}}%
    }
    \vspace{-5mm}
    \caption{Error(\%) in matching when varying the percentage ($20\%$ and $40\%$) of transformed images in the frame sequence of \emph{CMU House} (a)-(b) and \emph{CMU Hotel} (c)-(i) . (a) $40^\circ$ rotation, (b) $90^\circ$ rotation, (c) $20^\circ$ rotation, (d) $40^\circ$ rotation, (e) $60^\circ$ Degree rotation, (f) $90^\circ$  rotation, (g) reflection, (h) scaling, and (i) shearing.}
    \label{fig:Transformation}
    \end{figure*}

        \subsection{Effect of Affine Transformation}
    We created a synthetic dataset from CMU House and Hotel dataset by uniformly sampling $20\%$ and $40\%$ frames from a video sequence and performing affine transformations like rotation, 
    reflection, scaling, and shearing. We have explained the transformations we considered for this experiment which is similar to Figure (2)  and Table (1) in main paper. Table(1) in main paper shows the results on the CMU House dataset. 
    Affine transformations on Hotel frame are shown in Figure~\ref{fig:HotelTransform}. Figure \ref{fig:Transformation} shows the results of matching for the remaining House (fig. \ref{fig:trans1} and \ref{fig:trans2}) and Hotel synthetic dataset for all the algorithms. We observe that our method produces best results in all the cases, whereas the error for other 
    algorithms either remains stable or increases steeply with the increase in the percentage of transformed frames in the sequence.
    \subsection{Effect of Occlusion}
    We considered two datasets with grave occlusions, mentioned in Table \ref{tab:datasets}. Figures~\ref{fig:Bldg} and \ref{fig:Book} show the matching of two images for both the datasets, although the matching results are shown in Table (2) in the main paper. We also created a synthetic dataset by removing $2, 4, 6, 8,$ and $10$ ($6.66\%$, $13.33\%$, $20\%$, $26.66\%,$ and $33.33\%$) 
points out of total house landmark points (i.e., $30$ points) from $20\%$ and $60\%$ of frame sequences randomly. Figures~\ref{fig:second14} and \ref{fig:first31} show the increase in error as we remove more points from images. We also note the difference in both the results. Since we remove points from more percentage of frames in \ref{fig:first31}, there is more gradual increase in the error. This experimental setup is similar to Figure (4) in our main paper. It shows that affinity based methods like FGM and RRWM perform well but our method still consistently outperforms all the algorithms.
 
        \begin{figure*}
    \centering
    \makebox[\linewidth]{
    \subfigure[]{%
    \label{fig:second14}%
    \includegraphics[width=49mm,height=39mm]{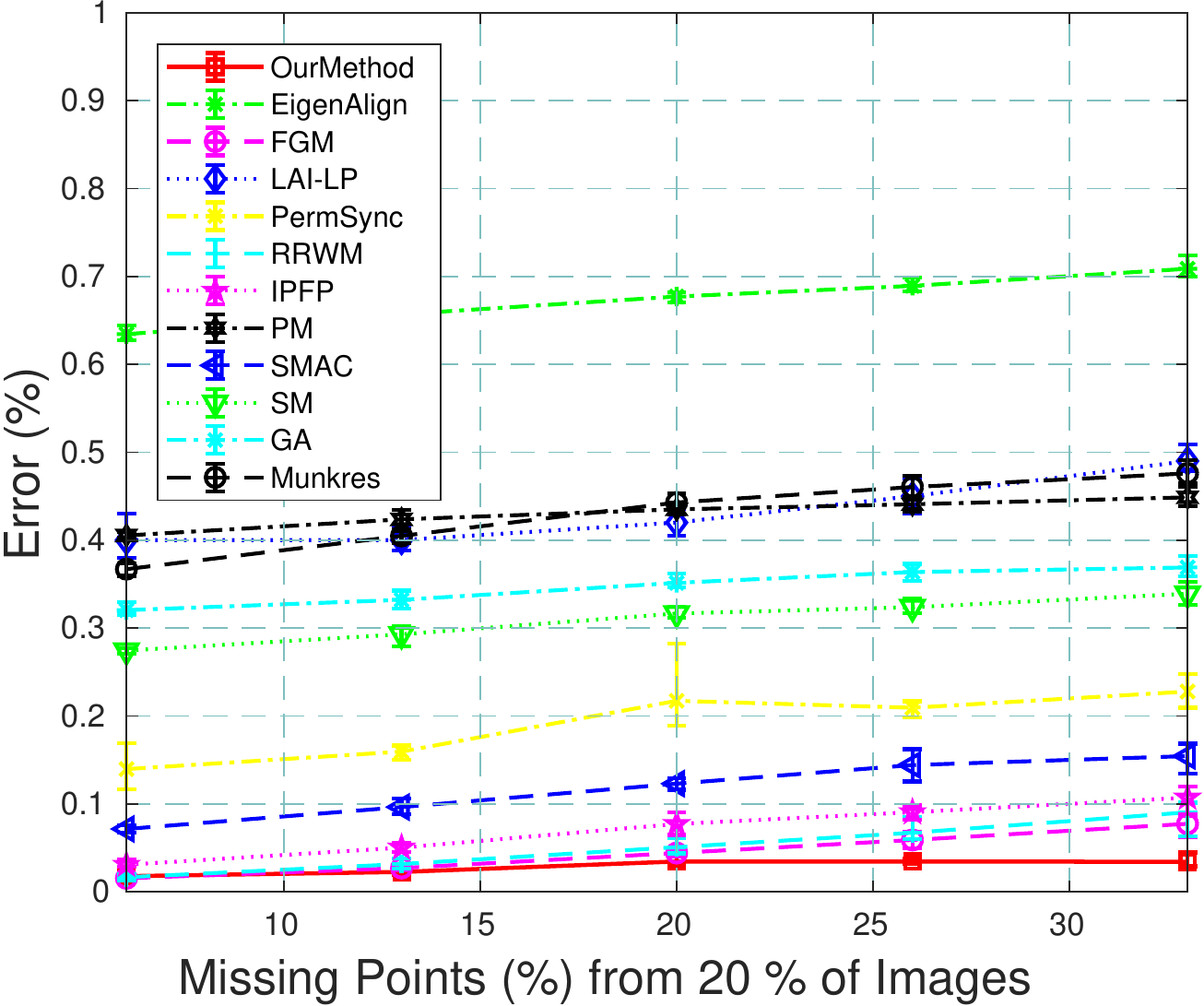}}%
    \qquad
    \subfigure[]{%
    \label{fig:first31}%
    \includegraphics[width=49mm,height=39mm]{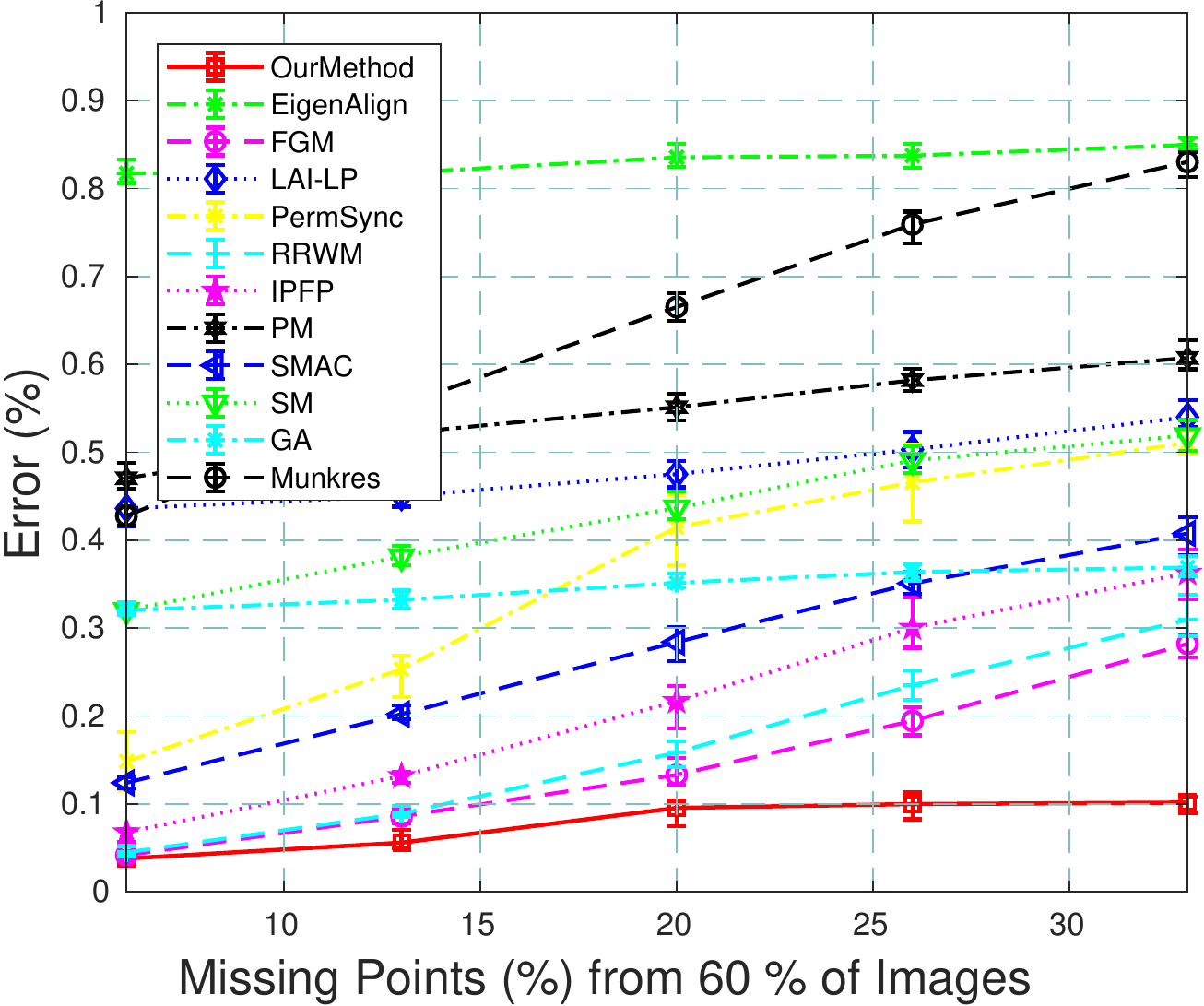}}%
    }
        \vspace{-5mm}
    \caption{Error (\%) in matching when varying the number of missing landmarks in (a) $20\%$ and (b) $60\%$ of the images in \emph{CMU House} frame sequence.}
    \label{fig:HouseMissing}
    \end{figure*}

    \begin{figure*}
    \centering
    \makebox[\linewidth]{
    \subfigure[]{%
    \label{fig:first3}%
    \includegraphics[width=49mm,height=39mm]{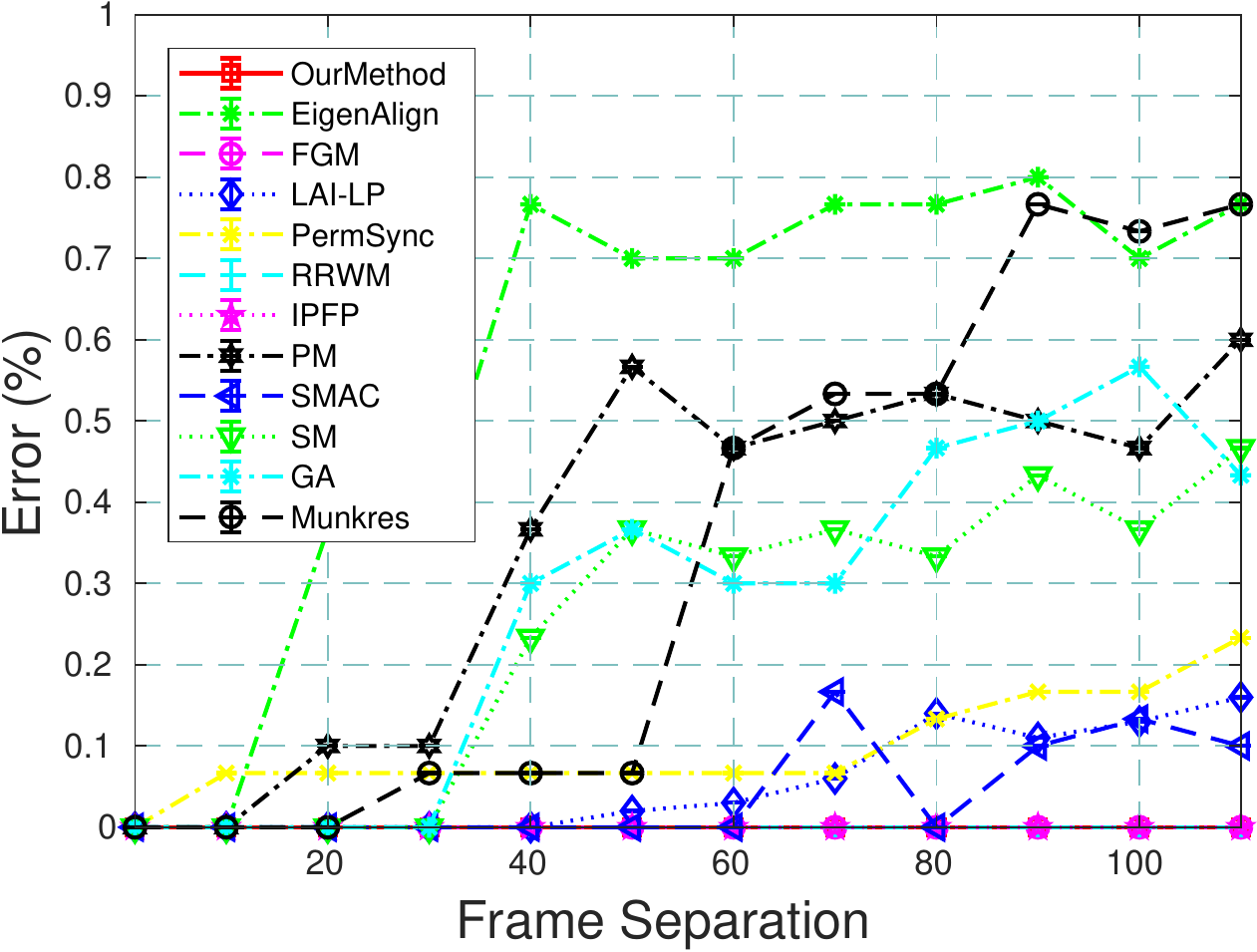}}%
    \qquad
    \subfigure[]{%
    \label{fig:second3}%
    \includegraphics[width=49mm,height=39mm]{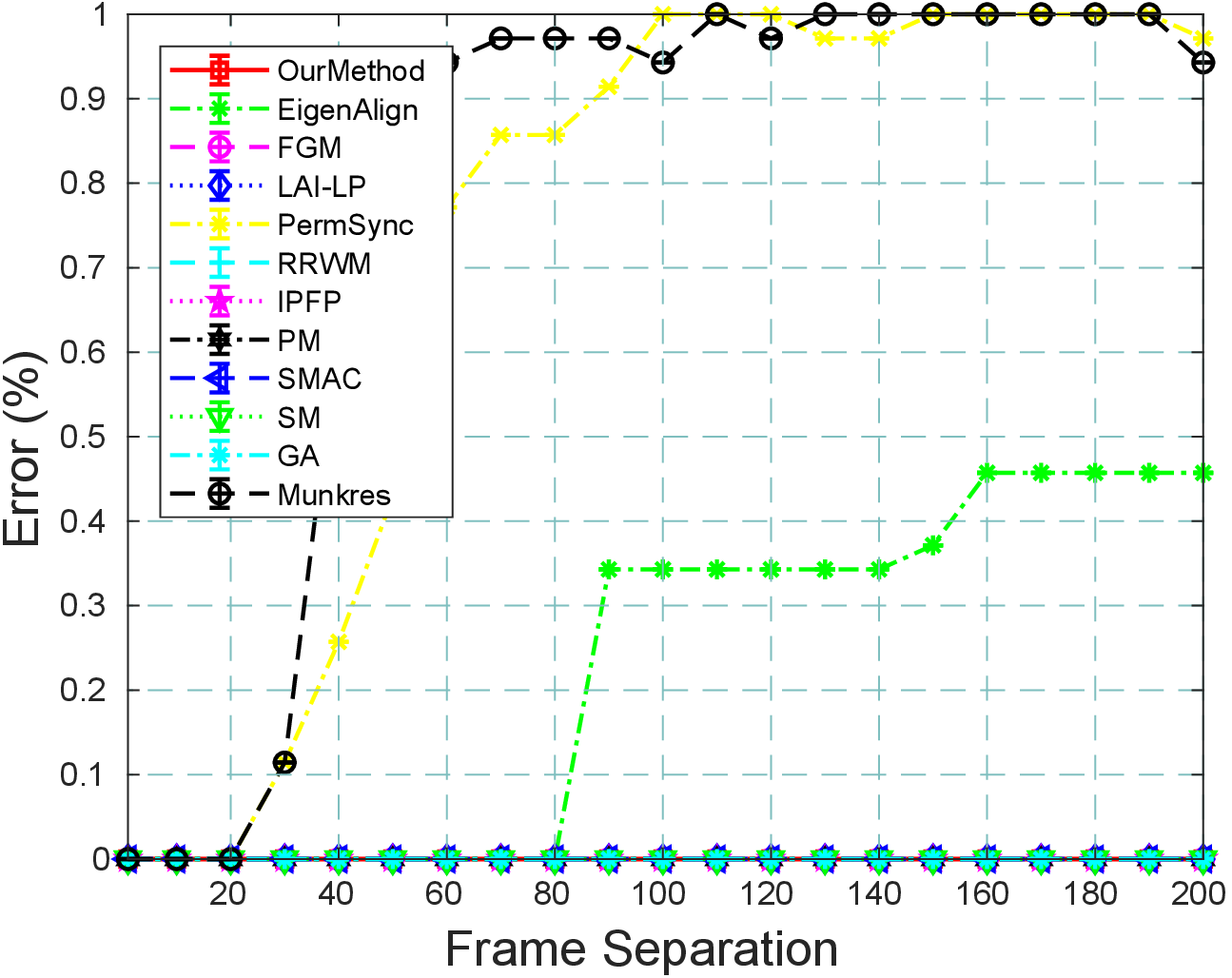}}%
    }
    \vspace{-5mm}
    \caption{Error (\%) in matching by various methods with different frame separation level for 
			(a) \emph{CMU House} and (b) \emph{Horse Rotate}.}
    \label{fig:FrameSeparation}
    \end{figure*}

   \subsection{Effect of Frame Separation}
   Figures~\ref{fig:first3} and \ref{fig:second3} show the frame separation level result of CMU House and Horse Rotate frame sequences. We select a pair of frames at a time with increase in
    their frame separation ($x$-axis). Here, the House dataset consists of 3D rotations of \emph{House} whereas \emph{Horse Rotate} dataset applies rotation with more degree of rotation as the frame separation level increases. We see that most of the algorithms performs well for both the datasets even with $0\%$ error.
   \subsection{Effect of k-Nearest Neighbour}
    In Figures~\ref{fig:first14} and \ref{fig:first13}, error and computation time of matching two frames of house are shown with different probability $p$ and nearest neighbor $k$ values. We observe that as the 
    value of $p$ and $k$ increases, the possibility of mismatching decreases which leads to correct matching. On the other hand, the computation time increases since it increases
    the number of edges in the underlying graph, which in turn leads to a larger number of $d$-cliques. This also causes a marked increase in the matching algorithm's runtime.
    The computation time of our algorithm considers the time of the \emph{Kuhn-Munkres} algorithm, which is used as a matching algorithm to match two random clique complexes, which takes $O(n^3)$ running time.

The overall time increases as we increase the value of $p$ and $k$, since it increases the probability of an edge occurrence between two landmark points. As the number of edges 
increase in a random graph, the number of $d$-cliques also increase. Due to this phenomenon, the runtime of the \emph{Kuhn-Munkres} algorithm also increases.

Figure~\ref{fig:second13} shows the computation time of matching two images with varying $k$-NN for different $n$ landmark points in the image. We can clearly see that the time increases with 
increasing $k$ and a larger number of landmark points. Here, $60$ landmark points take maximum time for the highest value of $k$. On the other hand, if we consider lower values of $k$, even $60$ 
landmark points take a reasonable amount of time to match, which is comparable to lower values of $n$. Thus, we set $k$ value as low as possible for matching, depending on the complexity of the dataset.

    \begin{figure*}
    \centering
    \makebox[\linewidth]{
    \subfigure[]{%
    \label{fig:first14}%
    \includegraphics[width=49mm,height=39mm]{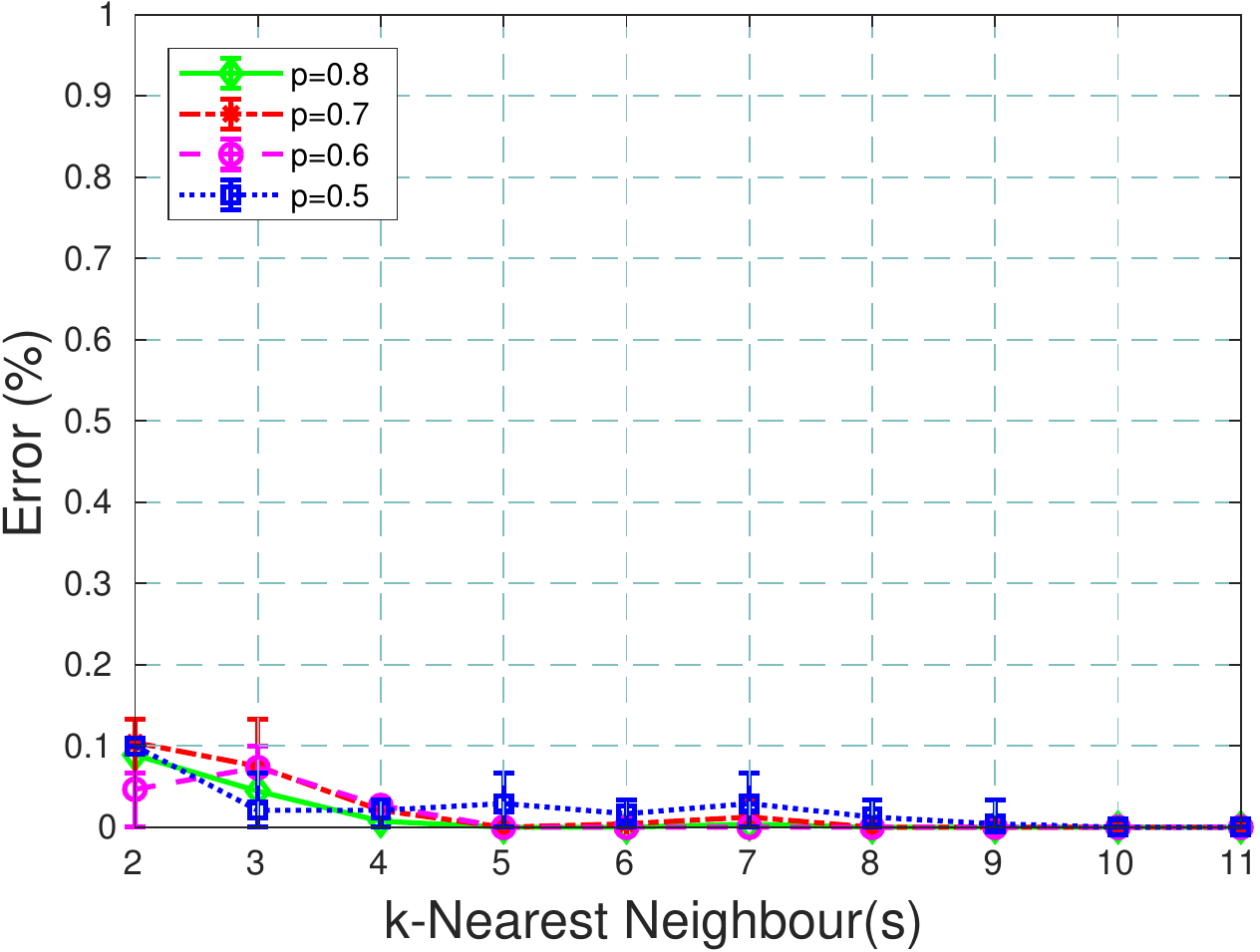}}%
    \qquad
    \subfigure[]{%
    \label{fig:first13}%
    \includegraphics[width=49mm,height=39mm]{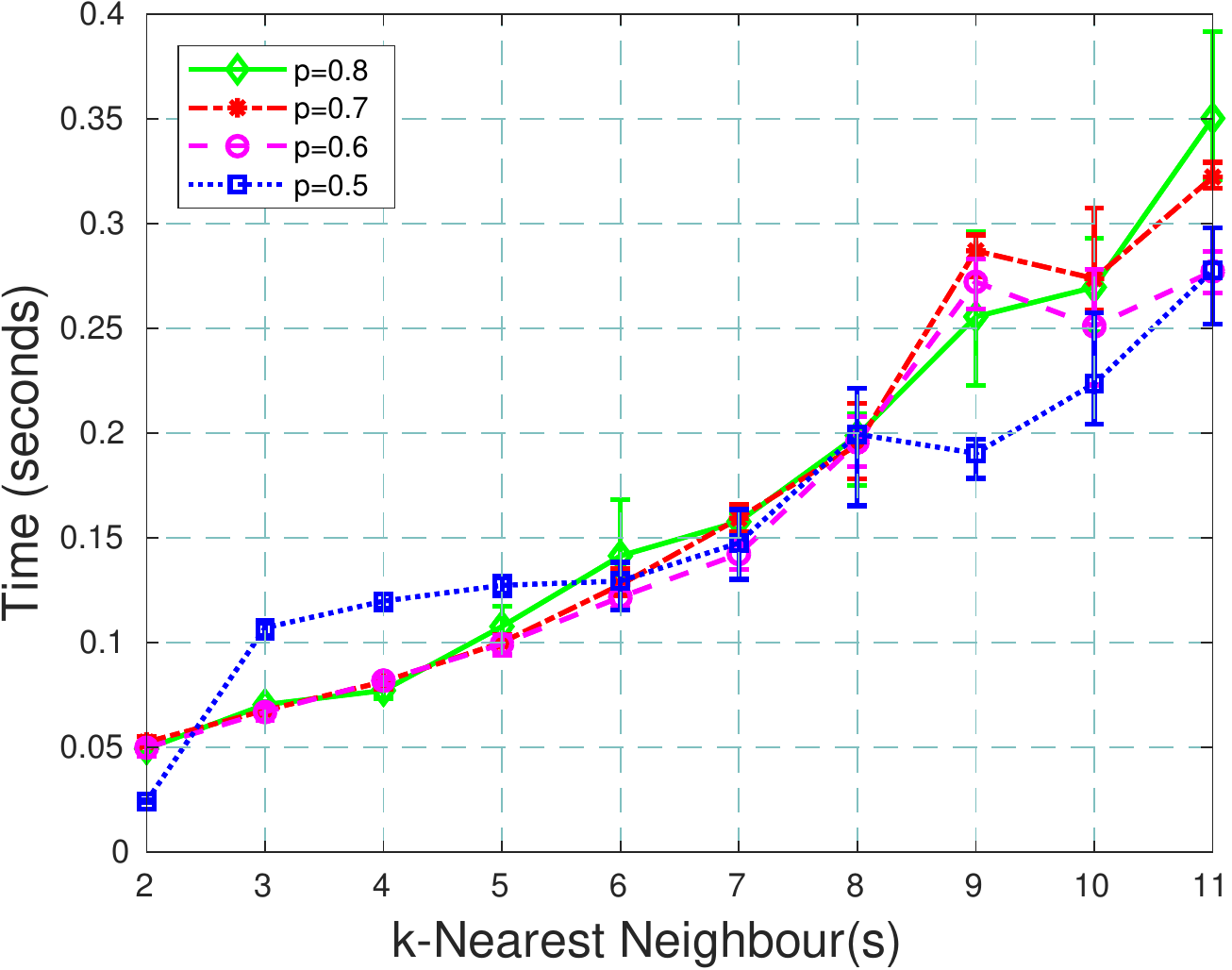}}%
    \qquad
    \subfigure[]{%
    \label{fig:second13}%
    \includegraphics[width=49mm,height=39mm]{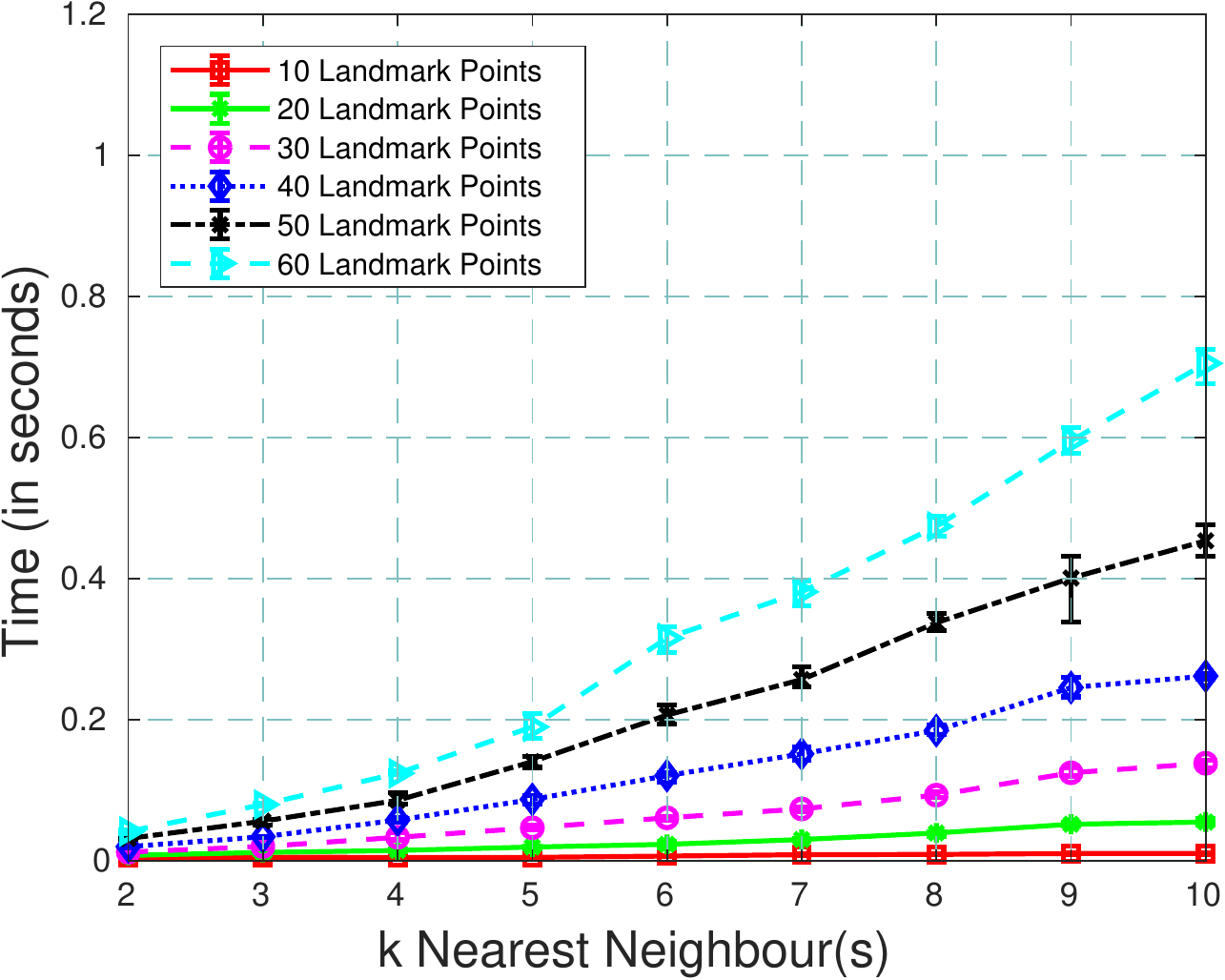}}%
    }
    \vspace{-5mm}
    \caption{(a) and (b) Error(\%) and runtime of matching two frames of \emph{House} with varying $p$ and $k$. Error decreases with $p$ and $k$, whereas computation time increases. (c) computation time of matching two images with varying $k$-NN for $n=10-60$ landmark points. $p$ is fixed as $0.6$ here for all the cases. }
    \label{fig:housekPlot}

    \end{figure*}

        \begin{figure*}
    \centering
    \makebox[\linewidth]{
    \subfigure[]{%
    \label{fig:noise1}%
    \includegraphics[width=50mm,height=40mm]{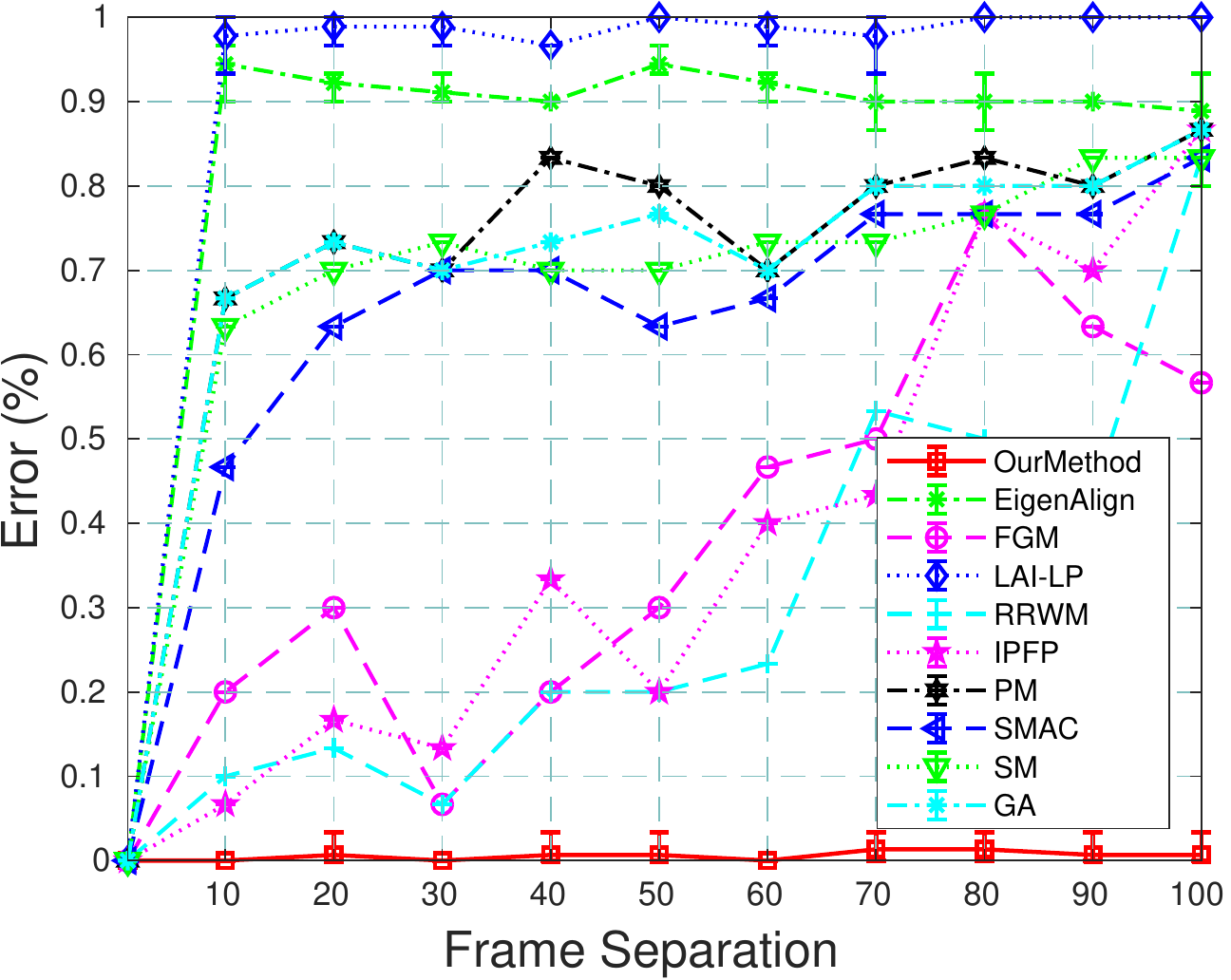}}%
    \qquad
    \subfigure[]{%
    \label{fig:noise2}%
    \includegraphics[width=50mm,height=40mm]{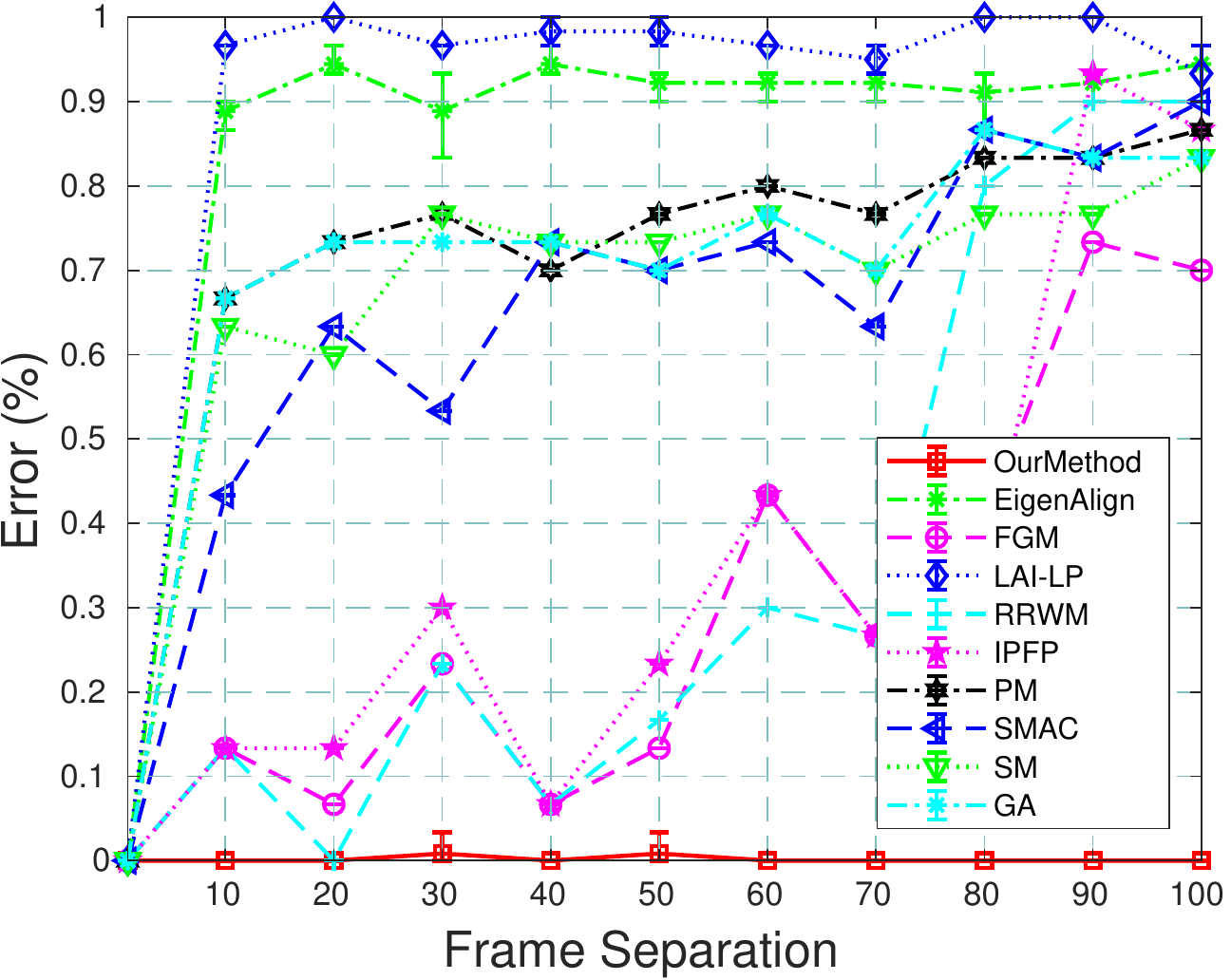}}%
    }
    \makebox[\linewidth]{
    \subfigure[]{%
    \label{fig:noise3}%
    \includegraphics[width=50mm,height=40mm]{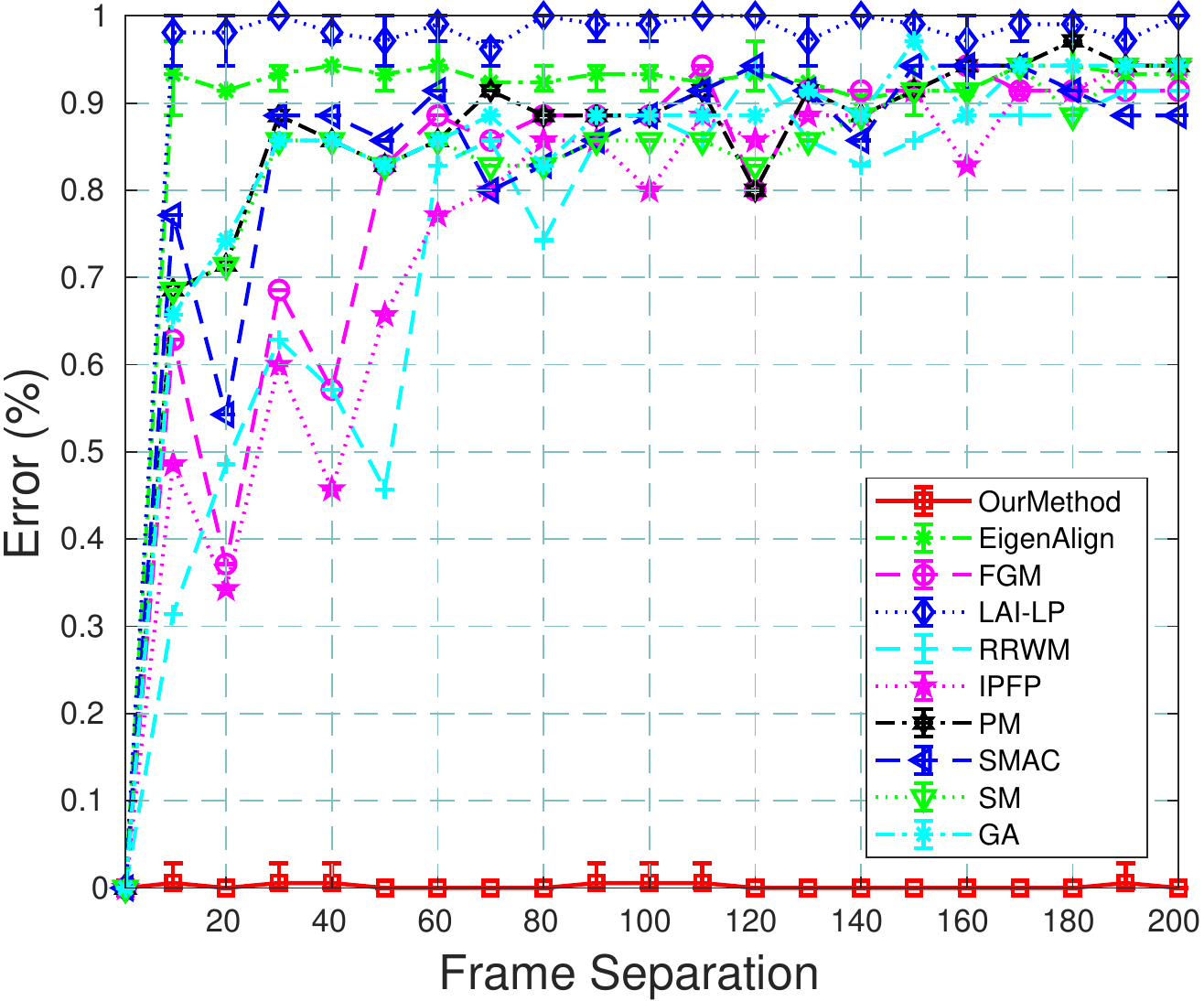}}%
        \qquad
    \subfigure[]{%
    \label{fig:noise4}%
    \includegraphics[width=50mm,height=40mm]{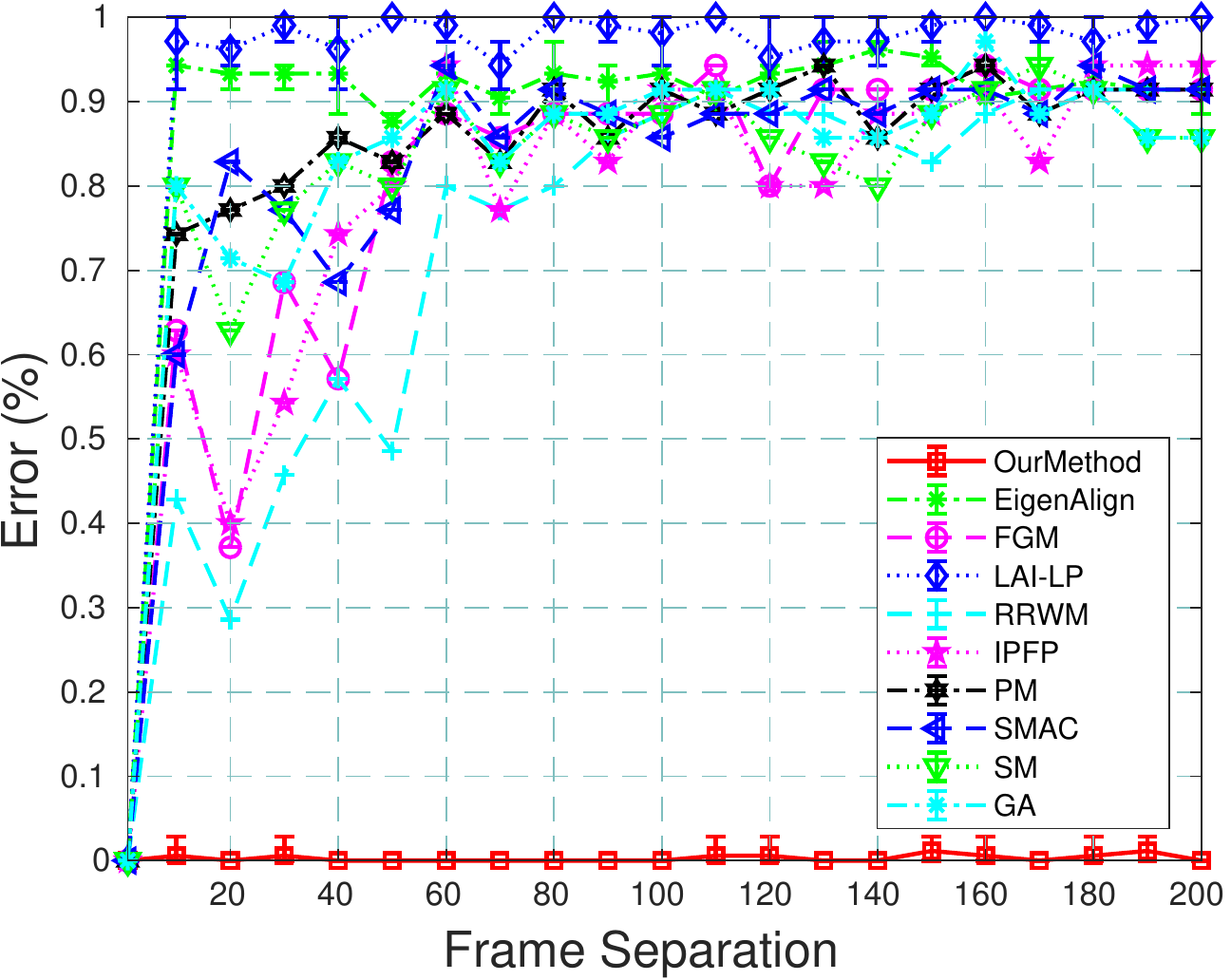}}%

    }

    \vspace{-5mm}
    \caption{Error (\%) in matching by various methods with different frame separation level for 
			Noise Model (a) I, (b) II for \emph{CMU Hotel} and (c) I, (d) II for \emph{Horse Shear}.}
    \label{fig:Noise}
    \end{figure*}
    
    \subsection{Noise Model}
  We analyze the performance of our method over other pairwise algorithms for two different noise models. We follow the noise model setup mentioned in ~\cite{feizi2016spectral}. We
  introduce noise in one random graph $G_1$ and generate a noisy version $\tilde{G}$ to be matched with $G_2$. $G_1$ is a random graph here which is created as $G_1(n,p)$ with $n$ nodes and $p$ probability. We describe two noise models as follows:

    \textbf{Noise Model I:}
    \begin{equation}
     \tilde{G} = G_1 \odot (1-A) + (1-G_1) \odot A
    \end{equation}
    $\tilde{G}$ is generated using the aforementioned equation where $A$ is a binary random symmetric matrix, whose entries are drawn from a \emph{Bernoulli distribution} as $A(n,q)$ with $n$ nodes and $q$ probability and $\odot$ represents the element-wise multiplication of matrices. This model flips the node-node adjacency of $G_1$ with probability $q$. 
    
    \textbf{Noise Model II:}
    \begin{equation}
     \tilde{G} = G_1 \odot (1-A) + (1-G_1) \odot B
    \end{equation}
    Again, $A$ and $B$ are binary random symmetric matrices, whose entries are drawn from the Bernoulli distribution as $A(n,q)$ and $B(n,r)$ with $n$ nodes and $q$ and $r$ probabilities, respectively. This model 
    flips node-node adjacency of $G_1$ with probability $q$, and in addition it also creates edges between non-connected nodes with probability $r$. 

    Results of noise model I and II on \emph{CMU Hotel} and \emph{Horse Shear} for frame separation level is shown in Figures~\ref{fig:noise1}, \ref{fig:noise3} and \ref{fig:noise2}, \ref{fig:noise4} respectively. We observe that our method is robust to noise for both the models as compared to other algorithms since there is a very small increase or no increase in error (\%) for all the cases.

\begin{figure*}
	\makebox[\linewidth]{
	\centering
	\vspace{-15mm}
	  \subfigure[]{%
	  \label{fig:houseMatch}%
	    \includegraphics[trim={2.5cm 0 3cm 0.35cm},clip,width=72mm,height=28mm]{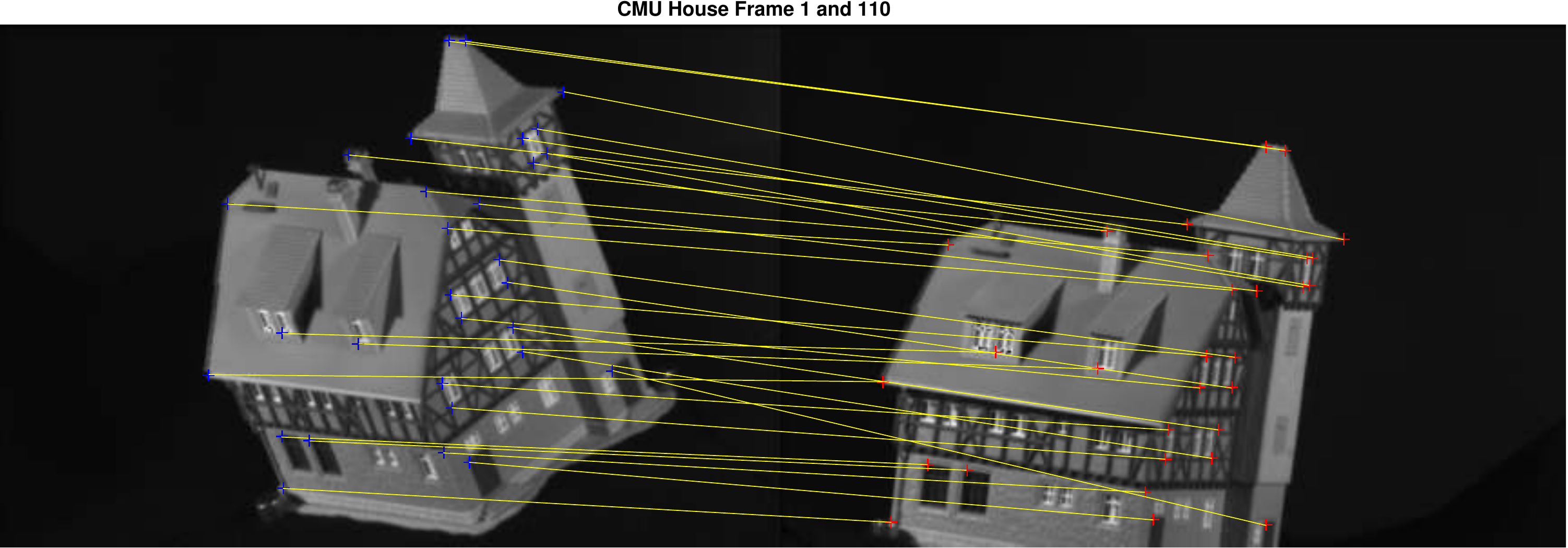}}%
	  \qquad
	  \subfigure[]{%
	  \label{fig:hotelMatch}%
	    \includegraphics[trim={2.1cm 0 0 0.35cm},clip,width=72mm,height=28mm]{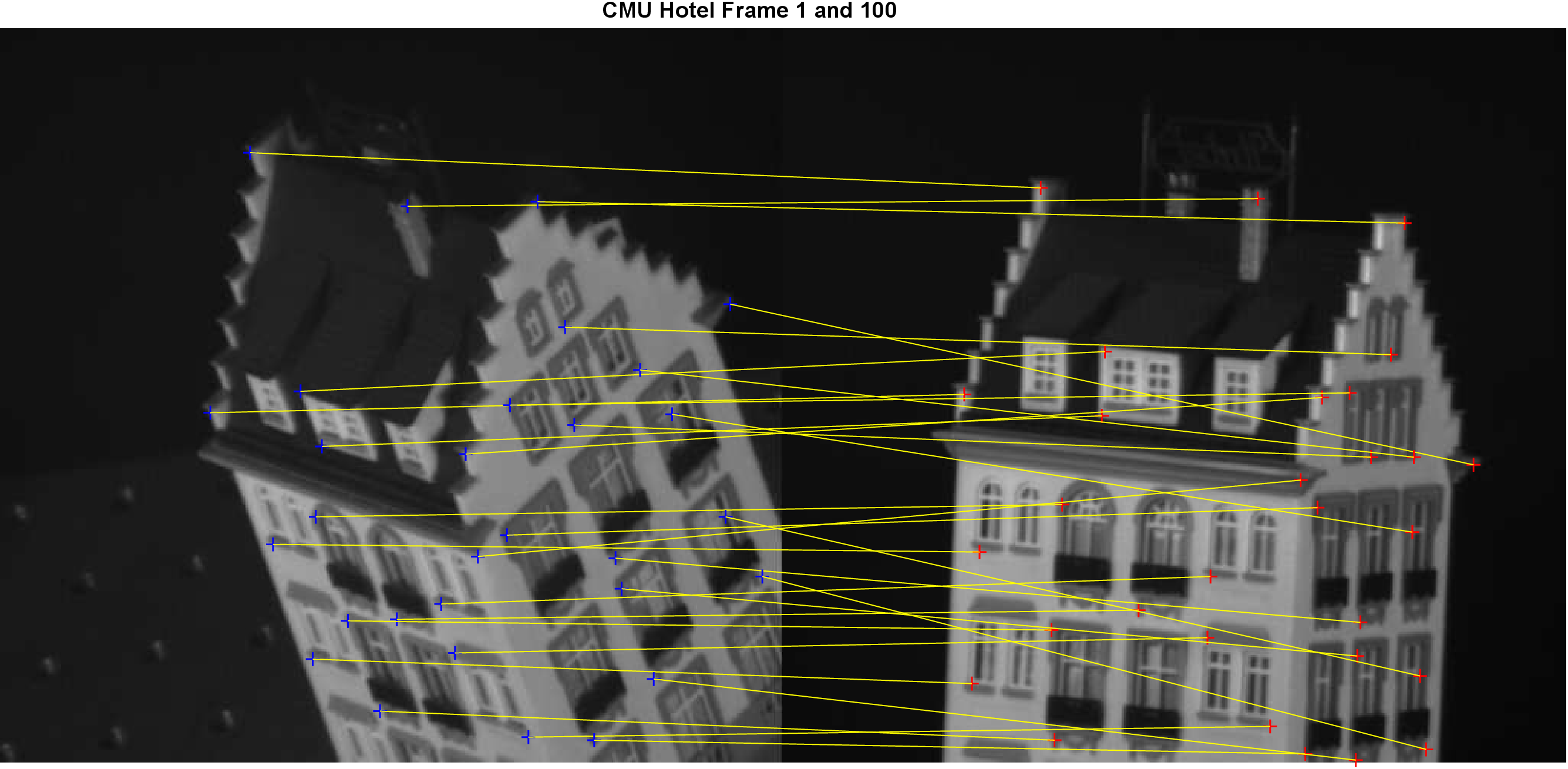}}%
	  }
	  \vspace{-5mm}
	  \qquad
	  \makebox[\linewidth]{
	  \hspace{-17mm}
	  \subfigure[]{%
	  \label{fig:HorseRotate}%
	    \includegraphics[trim={0 0 0 0.35cm},clip,width=72mm,height=28mm]{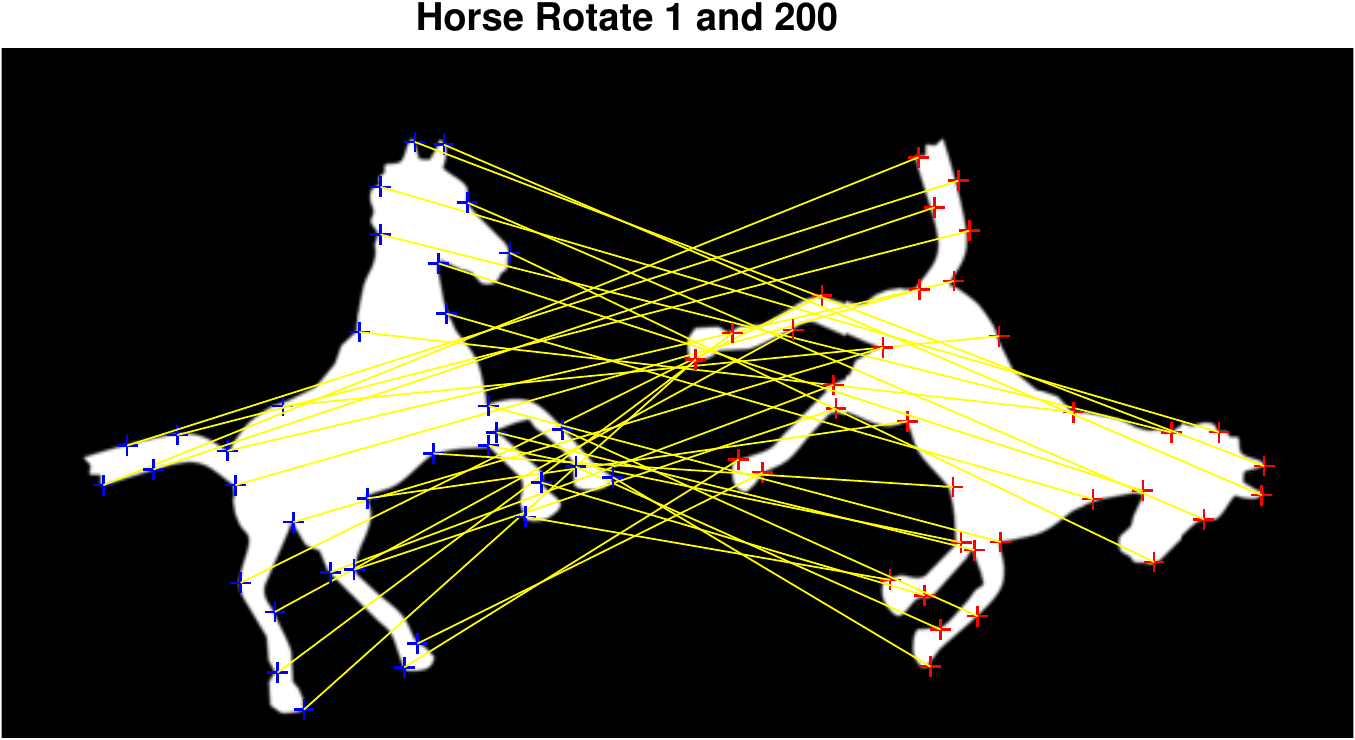}}%
	  \qquad
	  \subfigure[]{%
	  \label{fig:HorseShear}%
	    \includegraphics[trim={0 0 2.5cm 0.35cm},clip,width=72mm,height=28mm]{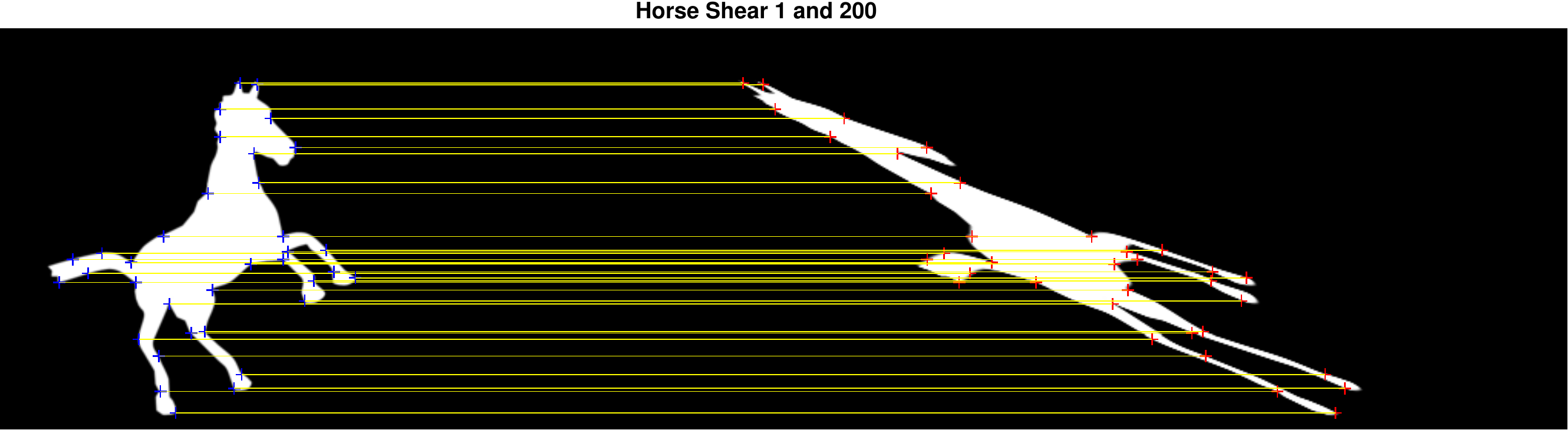}}%
	  }
	  \vspace{-5mm}
	  \qquad
	  \makebox[\linewidth]{
	  \hspace{-17mm}
	  \subfigure[]{%
	  \label{fig:Car}%
	    \includegraphics[width=72mm,height=28mm]{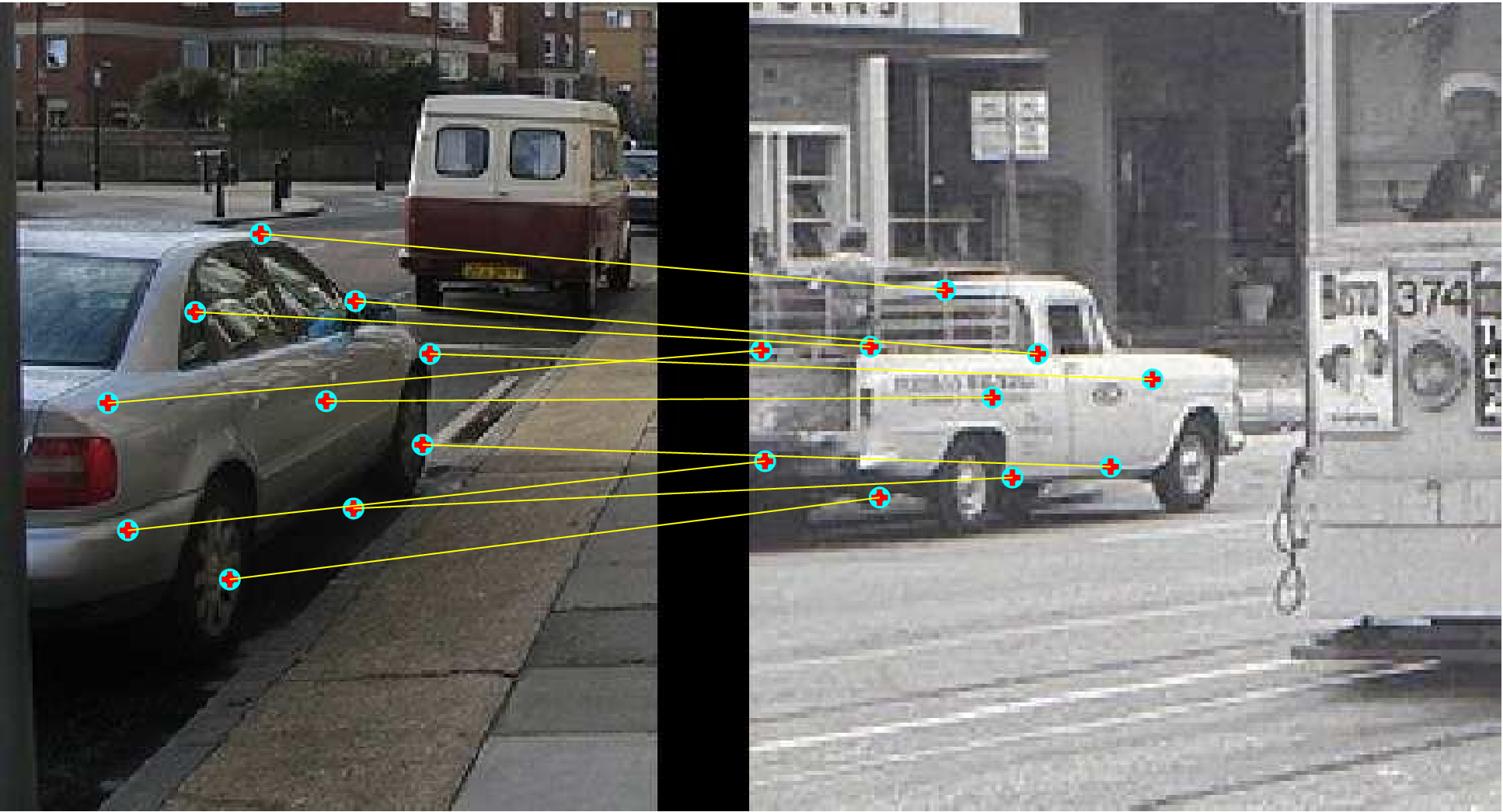}}%
	  \qquad
	  \subfigure[]{%
	  \label{fig:Bike}%
	    \includegraphics[width=72mm,height=28mm]{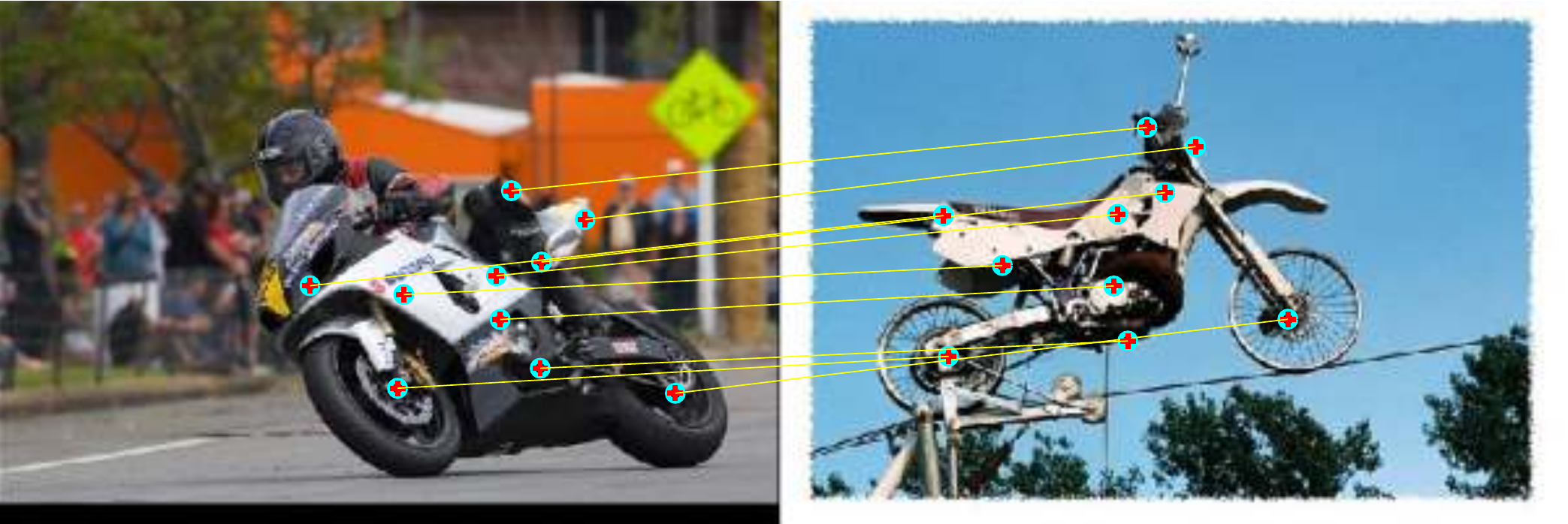}}%
	  }
	  \vspace{-4mm}
	  \qquad
	  \makebox[\linewidth]{
	  \hspace{-17mm}
	  \subfigure[]{%
	  \label{fig:Butterfly}%
	    \includegraphics[width=72mm,height=28mm]{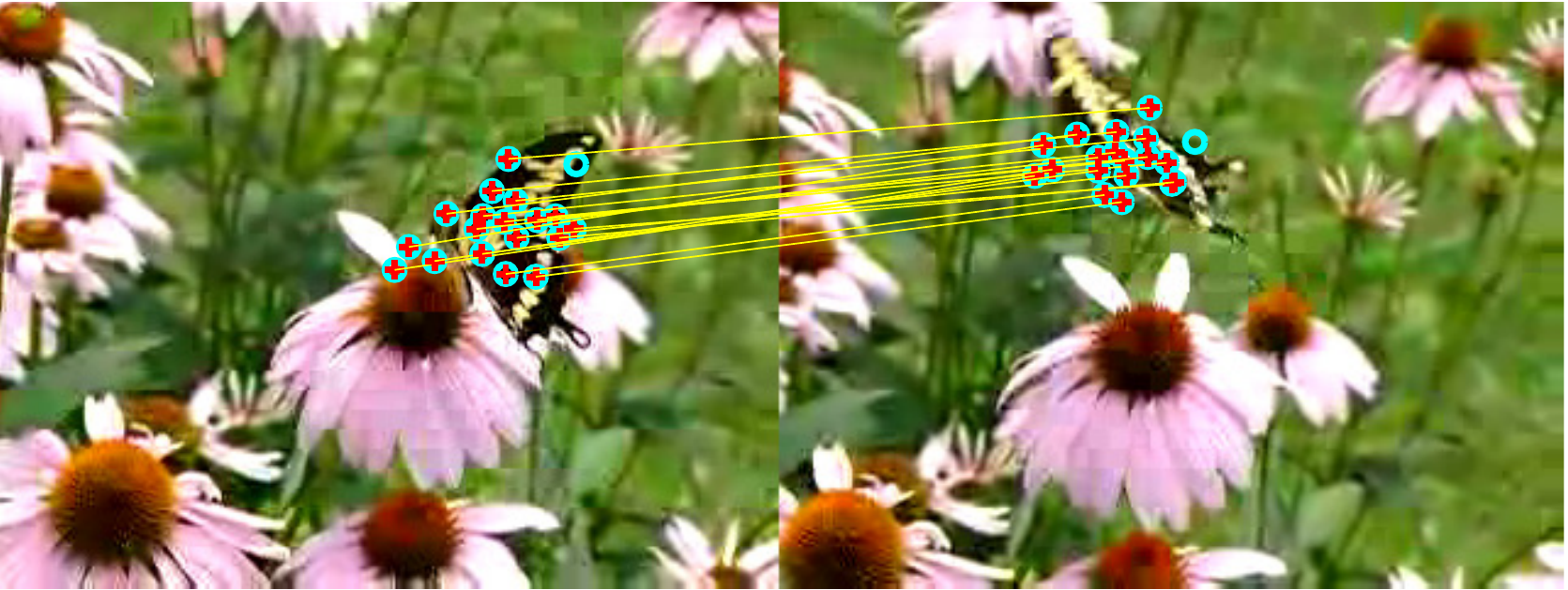}}%
	  \qquad
	  \subfigure[]{%
	  \label{fig:Spectrum}%
	    \includegraphics[width=72mm,height=28mm]{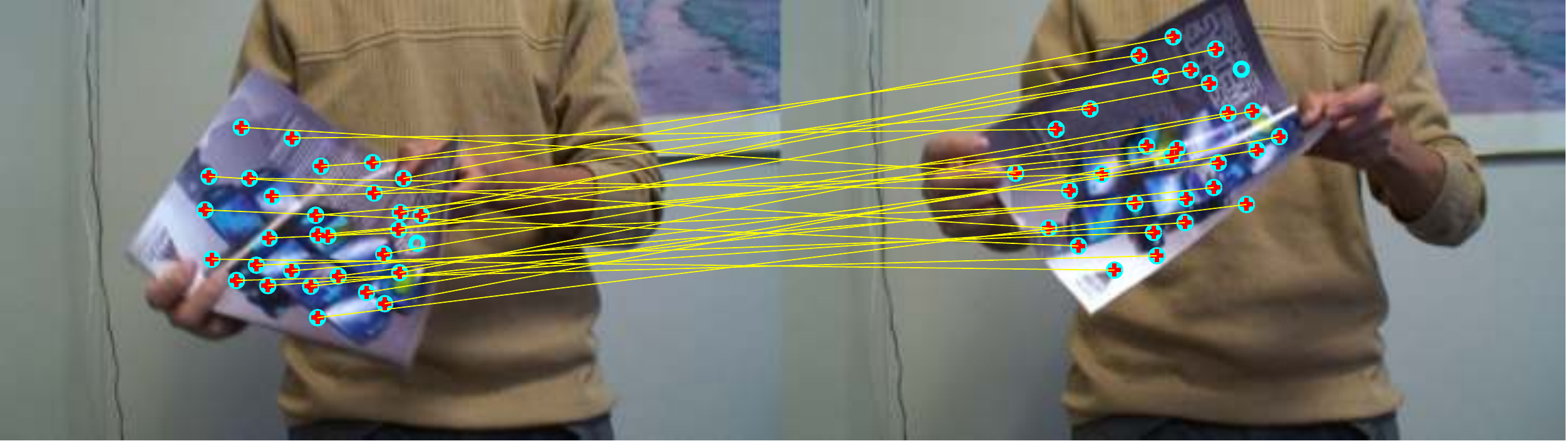}}%
	  }
	  \vspace{-5mm}
	  \qquad
	  \hspace{-9mm}
	  \makebox[\linewidth]{
	  \subfigure[]{%
	  \label{fig:Bldg}%
	    \includegraphics[trim={0 0 0 0.45cm},clip,width=72mm,height=28mm]{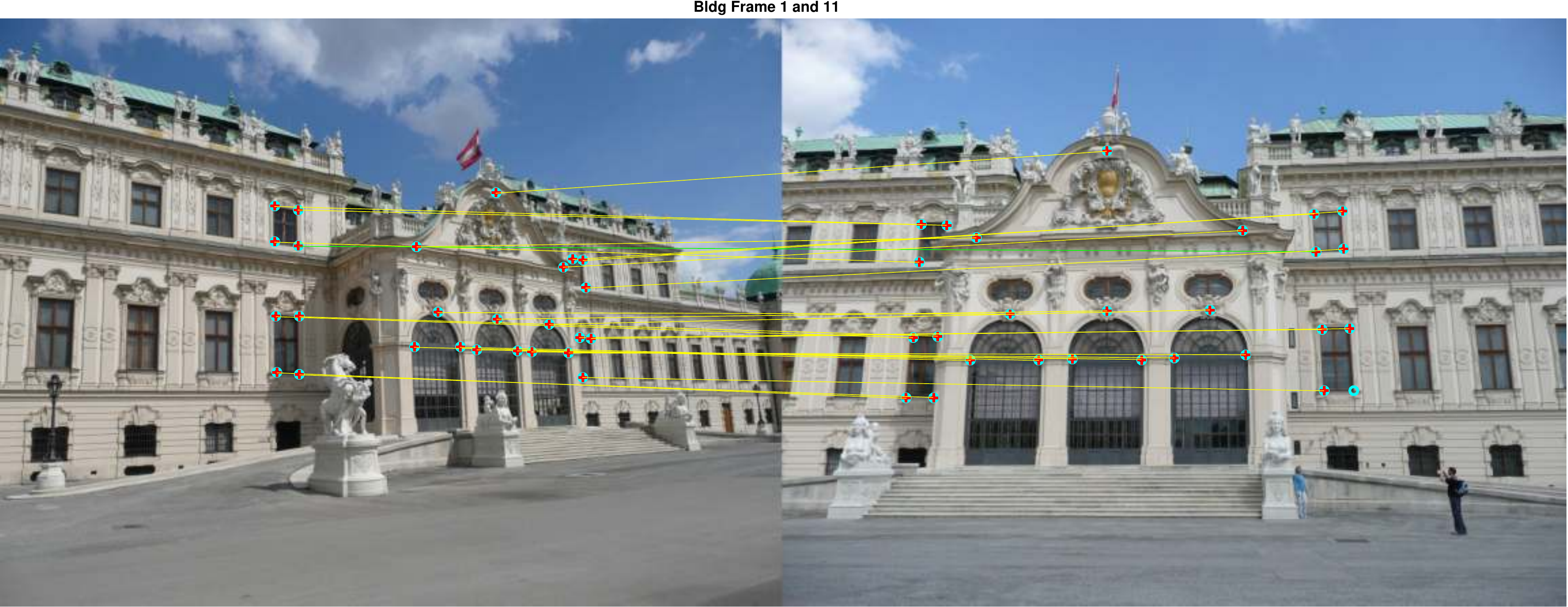}}%
	  \qquad
	  \subfigure[]{%
	  \label{fig:Book}%
	    \includegraphics[trim={0 0 0 0.4cm},clip,width=72mm,height=28mm]{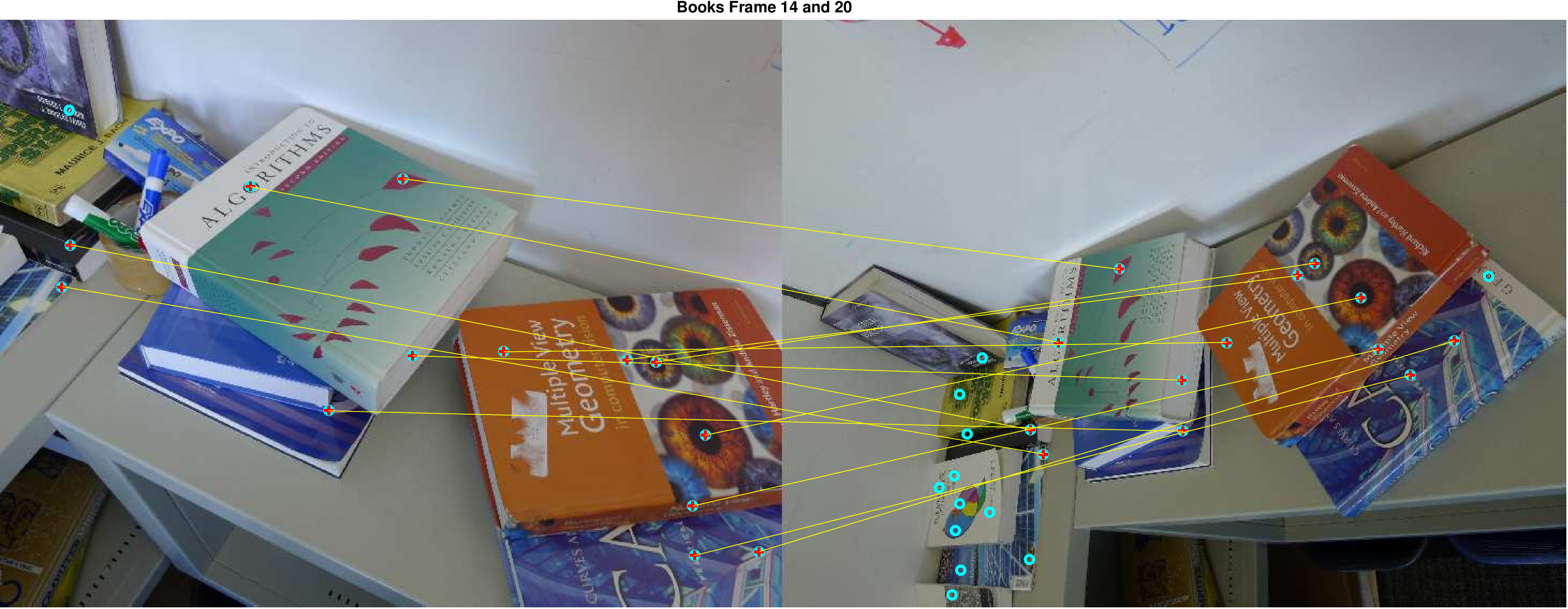}}%
	  }
	  \vspace{-2mm}
	  \caption{Instances of matchings in (a) House, (b) Hotel, (c) Horse Rotate, (d) Horse Shear frame sequences, (e) Car, (f) Bike, (g) Butterfly, (h) Magazine, (i) Building and (j) Books dataset. Yellow/green lines show correct/incorrect matches and isolated points show no matches.}
	  \label{fig:Matching}
	\end{figure*}


\end{document}